\def\x{{\mathbf x}}
\def\y{{\mathbf y}}
\def\z{{\mathbf z}}
\def\f{{\mathbf{f}}}
\def\u{{\mathbf u}}
\def\m{{\mathbf m}}
\def\vx{{\vec{\mathbf x}}}
\def\vy{{\vec{\mathbf y}}}
\def\vz{{\vec{\mathbf z}}}
\def\vf{{\vec{\mathbf f}}}
\def\vu{{\vec{\mathbf u}}}
\def\cX{{\cal X}}
\def\cD{{\cal D}}
\def\cN{{\cal N}}
\def\bzeta{{\bm{\zeta}}}
\def\btheta{{{\bm{\theta}}}}
\newtheorem{theorem}{Theorem}
\newtheorem{lemma}{Lemma}
\newtheorem{remark}{Remark}
\newtheorem{definition}{Definition}
\newtheorem{proposition}{Proposition}
\definecolor{orange}{RGB}{200,0,100}
\newcommand{\cred}[1]{{\color{red}#1}}
\newcommand{\cblue}[1]{{\color{black}#1}}
\begin{document}
\title{
Ensemble Kalman Filtering Meets Gaussian Process SSM for Non-Mean-Field and Online Inference
}

\author{Zhidi Lin$^{\orcidlink{0000-0002-6673-511X}}$,
Yiyong Sun$^{\orcidlink{0009-0006-4823-0202}}$, Feng Yin$^{\orcidlink{0000-0001-5754-9246}}$,~\IEEEmembership{Senior Member,~IEEE}, and Alexandre Hoang Thiéry$^{\orcidlink{0000-0002-9542-509X}}$ 

\thanks{
Z. Lin is with the School of Science \& Engineering, and the Future Network of Intelligence Institute, The Chinese University of Hong Kong, Shenzhen, and also with the Shenzhen Research Institute of Big Data, Shenzhen 518172, China (email: \href{mailto:zhidilin@link.cuhk.edu.cn}{zhidilin@link.cuhk.edu.cn}). Y. Sun and F. Yin are with the School of Science \& Engineering, The Chinese University of Hong Kong, Shenzhen, Shenzhen 518172, China  (email: \href{mailto:yiyongsun@link.cuhk.edu.cn}{yiyongsun@link.cuhk.edu.cn}, \href{mailto:yinfeng@cuhk.edu.cn}{yinfeng@cuhk.edu.cn}).
A. H. Thiéry is with the Department of Statistics \& Data Science, National University of Singapore, Singapore 117546 (email: \href{mailto:a.h.thiery@nus.edu.sg}{a.h.thiery@nus.edu.sg}). 
\textit{Corresponding author: Feng Yin}.}
}



\maketitle

\begin{abstract}
The Gaussian process state-space models (GPSSMs) represent a versatile class of data-driven nonlinear dynamical system models. However, the presence of numerous latent variables in GPSSM incurs unresolved issues for existing variational inference approaches, particularly under the more realistic non-mean-field (NMF) assumption, including extensive training effort, compromised inference accuracy, and infeasibility for online applications, among others. In this paper, we tackle these challenges by incorporating the ensemble Kalman filter (EnKF), a well-established model-based filtering technique, into the NMF variational inference framework to approximate the posterior distribution of the latent states. This novel marriage between EnKF and GPSSM not only eliminates the need for extensive parameterization in learning variational distributions, but also enables an interpretable, closed-form approximation of the evidence lower bound (ELBO). Moreover, owing to the streamlined parameterization via the EnKF, the new GPSSM model can be easily accommodated in online learning applications. We demonstrate that the resulting EnKF-aided online algorithm embodies a principled objective function by ensuring data-fitting accuracy while incorporating model regularizations to mitigate overfitting. We also provide detailed analysis and fresh insights for the proposed algorithms. Comprehensive evaluation across diverse real and synthetic datasets corroborates the superior learning and inference performance of our EnKF-aided variational inference algorithms compared to existing methods. 
\end{abstract} \vspace{-.05in}
\begin{IEEEkeywords}
Gaussian process, state-space model, ensemble Kalman filter, online learning, variational inference.
\end{IEEEkeywords}
\vspace{-.05in}
\section{Introduction}
\label{sec:introduction}  
\IEEEPARstart{S}{tate-space} models (SSMs) describe the underlying dynamics of latent states through a transition function and an emission function \cite{sarkka2013bayesian}. As a versatile tool for modeling dynamical systems, SSM finds successful applications in diverse fields, including control engineering, signal processing, computer science, and economics \cite{kullberg2021online, khan2008distributing, tobar2015unsupervised}. In SSM, a key task is to infer unobserved latent states from a sequence of noisy measurements. Established techniques, such as the Kalman filter (KF), extended Kalman filter (EKF), ensemble Kalman filter (EnKF), and particle filter (PF), have been widely employed over the decades for latent state inference \cite{sarkka2013bayesian}. However, these classic state inference methods heavily rely on precise knowledge of the underlying system dynamics \cite{revach2022kalmannet}, posing  tremendous challenges in complex and uncertain scenarios like model-based reinforcement learning \cite{yan2020gaussian} and disease epidemic propagation \cite{alaa2019attentive}. As a novel alternative, the underlying complex dynamics can be learned from noisy observations, leading to the emergence of data-driven SSMs. One class of prominent data-driven SSMs is the Gaussian process state-space model (GPSSM) \cite{frigola2013bayesian}, which utilizes Gaussian processes (GPs) \cite{williams2006gaussian} as the core learning modules to capture the complex underlying system dynamics. 

Gaussian processes, serving as the most prominent Bayesian non-parametric model \cite{williams2006gaussian}, provide the flexibility to model general nonlinear system dynamics without enforcing an explicit parametric structure. With the inherent regularization imposed by the GP prior, GPSSMs are able to mitigate overfitting and model generalization issues \cite{zhao2019cramer}, rendering them to be more effective in scenarios with limited data samples \cite{yin2020linear}. Moreover, owing to its Bayesian nature, GPSSMs maintain good interpretability and explicit uncertainty calibration for analyzing system dynamics. These superior properties have led to the extensive usage of GPSSMs in various live
applications, such as human pose and motion learning \cite{wang2007gaussian}, robotics and control learning \cite{deisenroth2013gaussian}, reinforcement learning \cite{arulkumaran2017deep}, target tracking and navigation \cite{xie2020learning}, and magnetic-field sensing \cite{berntorp2023constrained}.

Despite the popularity of GPSSMs, simultaneously learning the model and estimating the latent states in GPSSMs remains highly challenging, primarily due to the following two factors. First, the inference quality of the numerous latent states affects the model learning and vice versa, leading to heightened computational and statistical complexities. Second, the nonlinearity in GPSSMs prohibits tractable learning and inference processes \cite{theodoridis2020machine,tuncer2022multi,wahlstrom2015extended}. Hence, the main task in GPSSM is to accurately approximate the joint posterior over the latent states and the system dynamics represented by GPs. Multiple approaches have emerged towards achieving this objective in the last decade.

The seminal work utilizing particle Markov chain Monte Carlo (PMCMC) was proposed in \cite{frigola2013bayesian}. Subsequently, more advanced methods \cite{svensson2016computationally, svensson2017flexible, berntorp2021online,liu2023sequential} leveraged the reduced-rank GP approximation introduced in \cite{solin2020hilbert} to alleviate the substantial computational demands of GPs in \cite{frigola2013bayesian}. However, the computational burden of the involved PMCMC remains unaffordable, particularly when dealing with \cblue{long and high-dimensional latent state trajectories}. Consequently, there has been a paradigm shift towards variational inference methods \cite{frigola2014variational, frigola2015bayesian, mchutchon2015nonlinear,eleftheriadis2017identification, doerr2018probabilistic,ialongo2019overcoming,curi2020structured,lindinger2022laplace,fan2023free}, which adopted the classic sparse GP approximations with inducing points \cite{titsias2009variational}. 

Generally, the variational inference approaches can be classified into two main categories: the mean-field (MF) class \cite{frigola2014variational, frigola2015bayesian, mchutchon2015nonlinear, eleftheriadis2017identification} versus the non-mean-field (NMF) class \cite{doerr2018probabilistic, ialongo2019overcoming, curi2020structured, lindinger2022laplace,fan2023free}, depending on whether the statistical independence assumption is applied to the variational distribution of the latent states and GP dynamics. The first variational algorithm, incorporating the MF assumption and integrating PF for latent state inference, was introduced in \cite{frigola2014variational}. Subsequent MF endeavors \cite{frigola2015bayesian,mchutchon2015nonlinear,eleftheriadis2017identification} aimed at reducing computational and model complexities in \cite{frigola2014variational}. While the MF class methods may enable more manageable computations, they neglect the inherent dependencies between the latent states and the GP dynamics, potentially impairing the overall learning accuracy and yielding overconfident state estimates \cite{turner+sahani:2011a}.

Recent methods \cite{doerr2018probabilistic,ialongo2019overcoming,curi2020structured,lindinger2022laplace,fan2023free}, therefore, concentrate on the more complicated NMF approximation to address the inaccuracies arising from the MF assumption, yet they continue to struggle with computational challenges.
Specifically, in \cite{doerr2018probabilistic}, the posterior distribution of underlying states conditioned on the GP dynamics was simply approximated by the subjectively selected prior distribution, while in \cite{ialongo2019overcoming} and \cite{curi2020structured}, the posterior was approximated by some parametric Markov-structured Gaussian variational distributions. However, optimizing the extensive variational parameters in the variational distributions is computationally demanding and inefficient. In contrast to the heavy parameterization as conducted in \cite{doerr2018probabilistic,ialongo2019overcoming,curi2020structured}, the works in \cite{lindinger2022laplace} and \cite{fan2023free} opted for directly characterizing the posterior distribution through the model evidence. Nonetheless, the Laplace approximation, employed in \cite{lindinger2022laplace}, can yield an oversimplified unimodal posterior distribution over the latent states \cite{theodoridis2020machine}, while the stochastic gradient Hamiltonian Monte Carlo method \cite{chen2014stochastic}, employed in \cite{fan2023free}, may exhibit very slow convergence ($50,000$ iterations as reported).  Furthermore, all these existing variational methods are incompatible with online applications, posing a notable limitation in their applicability.

To tackle the computational challenges mentioned above, we propose integrating the model-based filtering technique, EnKF, into the NMF variational inference framework. The primary benefit of this approach lies in the streamlined parameterization of the associated variational distributions, improving inference tractability and accelerating algorithm convergence. Additionally, this streamlined parameterization enables seamless accommodation of our method in online learning applications.
The main contributions are summarized as follows.
\begin{itemize}
    \item We corroborate the significance of incorporating well-established model-based filtering techniques to enhance the efficiency of GPSSM variational algorithms. In contrast to existing methods utilizing numerous variational parameters to parameterize the variational distribution over latent states, our proposed novel algorithm harnesses the EnKF to accurately capture the dependencies between the latent states and the GP dynamics, and eliminate the need to parameterize the variational distribution, significantly reducing the number of variational parameters and accelerating the algorithm convergence.  
    \item By leveraging EnKF, we also demonstrate an approximation of the evidence lower bound (ELBO) via simply summating multiple interpretable terms with readily available closed-form solutions. Leveraging the differentiable nature of the classic EnKF alongside the off-shelf automatic differentiation tools, we can optimize the ELBO and train the GPSSM efficiently.
    \item Without explicitly parameterizing the variational distribution over latent states, the proposed EnKF-aided algorithm gets readily extended to accommodate online learning applications. The resulting online algorithm relies on a principled objective by ensuring observation fitting accuracy while incorporating model regularizations to mitigate model overfitting, rendering it superior to state-of-the-art online algorithms in learning and inference efficiency.
    \item We offer in-depth analysis and novel insights into the proposed algorithms, and conduct extensive experiments on diverse real and synthetic datasets to assess their performance from various aspects. The results demonstrate that the proposed EnKF-aided variational learning algorithms consistently outperform the existing state-of-the-art methods, especially in terms of learning and inference.
\end{itemize}

The remainder of this paper is organized as follows. Some preliminaries related to GPSSMs are provided in Section \ref{sec:preliminaries}. The  computational challenges of the existing methods are summarized in Section \ref{sec:vGPSSMs}.
Section \ref{sec:EnVI} elaborates the proposed EnKF-aided variational learning algorithm. Section \ref{sec:online_EnVI} extends the proposed algorithm to accommodate online learning applications. The experimental results are presented in Section \ref{sec:experimental_results}, and Section \ref{sec:conclusion} concludes this paper. 
More supportive results, proofs, etc., are relegated to the Appendix and supplementary materials \cite{lin2023ensemble}.

\vspace{-.05in}
\section{Preliminaries} \label{sec:preliminaries} 
In Section \ref{subsec:GP}, we briefly review the Gaussian process regression. Section \ref{subsec:GPSSM} is dedicated to introducing Gaussian process state-space models. 
\vspace{-.06in}
\subsection{Gaussian Processes (GPs)}
\label{subsec:GP} 
\vspace{-.01in}
A GP defines a collection of random variables indexed by $\bm{x} \in \mathcal{X}$, such that any finite subset of these variables follows a joint Gaussian distribution \cite{williams2006gaussian}.  Mathematically,  a real scalar-valued GP $f(\bm{x})$ can be represented as 
\begin{equation}
    f(\bm{x}) \sim \mathcal{GP}\left(\mu(\bm{x}), \ k(\bm{x}, \bm{x}^\prime);  \ \bm{\theta}_{gp}\right),
\end{equation}
where $\mu({\bm{x}})$ is a mean function typically set to zero in practice, and $k(\bm{x}, \bm{x}^\prime)$ is the kernel function that provides insights about the nature of the underlying function \cite{williams2006gaussian}; and $\bm{\theta}_{gp}$ is a set of hyperparameters that needs to be tuned for model selection. By placing a GP prior over the function $f(\cdot): \cX \mapsto \mathbb{R}$ in a general regression model, 
\begin{equation}
	y = f(\bm{x}) + {e},  \quad 	{e} \sim \cN(0, \sigma_{e}^2), \quad y \in \mathbb{R},
	\label{eq:reg_model}
\end{equation}
we get the salient Gaussian process regression (GPR) model.

Given an observed dataset, $\mathcal{D} \!\triangleq\! \{\bm{x}_i, y_i\}_{i = 1}^{n} \!\triangleq\! \{\bm{X}, \bm{y}\}$ consisting of $n$ input-output pairs, the posterior distribution of the mapping function,  $p(f(\bm{x}_*) \vert \bm{x}_*, \cD)$, at any test input $\bm{x}_* \!\in \! \cX$, is Gaussian \cite{williams2006gaussian}, fully characterized by the posterior mean $\xi$ and the posterior variance $\Xi$.  Concretely, 
\begin{subequations}
    \label{eq:GP_posterior}
    \begin{align}
        &\! \xi(\bm{x}_*)  \!=\! \bm{K}_{\bm{x}_*, \bm{X}} \left(\boldsymbol{K}_{\bm{X},\bm{X}}+ \sigma_{e}^{2} \boldsymbol{I}_{n} \right)^{\!-\!1} { \bm{y}}, 
        \label{eq:post_mean}\\
        &\! \Xi(\bm{x}_*) \!=\! k(\bm{x}_*, \bm{x}_*)  \!-\! \bm{K}_{\bm{x}_*, \bm{X}} \left( \boldsymbol{K}_{\bm{X},\bm{X}}\!+\! \sigma_{e}^{2} \boldsymbol{I}_{n} \right)^{\!-\!1} \bm{K}_{\bm{x}_*, \bm{X}}^\top,
        \label{eq:post_cov}
    \end{align}
\end{subequations}
where $\boldsymbol{K}_{\bm{X},\bm{X}}$ denotes the covariance matrix evaluated on the training input $\bm{X}$, and each entry is $[\boldsymbol{K}_{\bm{X},\bm{X}}]_{i,j} = k({\bm{x}}_i, \bm{x}_j)$; $\bm{K}_{\bm{x}_*, \bm{X}}$ denotes the cross covariance matrix evaluated on the test input $\bm{x}_*$ and the training input $\bm{X}$;  
the zero-mean GP prior is assumed here and will be used in the rest of this paper if there is no further specification.  
Note that the posterior distribution $p(f(\bm{x}_*) \vert \bm{x}_*, \cD)$ gives not only a point estimate, i.e., the posterior mean, but also an uncertainty region of such estimate quantified by the posterior variance.
It is also noteworthy that here we denote the variables in the GPR using mathematical mode italics, such as $\bm{x}_i$ and $y_i$; these variables should not be confused with the latent state $\x_t$ and observation $\y_t$ in SSM (cf. Eq.~(\ref{eq:SSM})).

\subsection{Gaussian Process State-Space Models (GPSSMs)}\label{subsec:GPSSM}
A generic SSM describes the probabilistic dependence between latent state $\x_t \in \mathbb{R}^{d_x}$ and observation $\y_t \in \mathbb{R}^{d_y}$. Mathematically, it can be expressed by the following equations:
\begin{subequations}
\label{eq:SSM}
    \begin{align}
        \text{(\textit{Transition})}  \qquad  \x_{t+1} &= f(\x_t) + \mathbf{v}_t,  & \mathbf{v}_t \sim \cN(\bm{0}, \mathbf{Q}), \\
        \text{(\textit{Emission})} \qquad  \quad \y_{t} &= \bm{C} \x_t + \mathbf{e}_t,	& \mathbf{e}_t \sim \cN(\bm{0}, \mathbf{R}),
    \end{align}
\end{subequations}
where the latent states form a Markov chain. That is, for any time instance $t \! \in \! \mathbb{N}$, the next state $\x_{t+1}$ is generated by conditioning on only $\x_t$ and the transition function $f(\cdot)\!:\! \mathbb{R}^{d_x} \! \mapsto \! \mathbb{R}^{d_x}$.  
The emission is assumed to be linear with a known coefficient matrix, $\bm{C} \in \mathbb{R}^{d_y \times d_x}$, hence mitigating the system non-identifiability\footnote{
Nonlinear emissions can be addressed by augmenting the latent state to a higher dimension. This augmentation helps mitigate/eliminate the significant non-identifiability issues commonly encountered in GPSSMs \cite{lin2023towards,frigola2015bayesian}}. Both the states and observations are corrupted by zero-mean Gaussian noise with covariance matrices $\mathbf{Q}$ and $\mathbf{R}$, respectively. 

The GPSSM incorporates a GP prior to model the \cblue{time-invariant} transition function $f(\cdot)$ in Eq.~\eqref{eq:SSM}. 
Specifically, Fig.~\ref{fig:graphical_model} presents the graphical model of GPSSM, while the following equations express its mathematical representation:
\begin{subequations}
\label{eq:gpssm}
	\begin{gather}
		 f(\cdot)  \sim \mathcal{G} \mathcal{P}\left(\mu(\cdot), k(\cdot, \cdot); \btheta_{gp} \right), \\
		 \mathbf{x}_{0} \sim p\left(\mathbf{x}_{0} \right), \\
		 {\f}_{t} =f\left(\mathbf{x}_{t-1}\right), \\
		 \mathbf{x}_{t} \mid {\f}_{t}  \sim \mathcal{N}\left( \x_t \mid {\f}_{t}, \mathbf{Q}\right), \\
		 \mathbf{y}_{t} \mid  \mathbf{x}_{t} \sim \cN \left(\mathbf{y}_{t} \mid \bm{C} \mathbf{x}_{t}, \mathbf{R}\right).
	\end{gather}
\end{subequations}
where $\mathbf{f}_t$ represents the GP transition function value evaluated at the previous state $\mathbf{x}_{t-1}$. For multidimensional state spaces ($d_x \! > \! 1$), the transition function $f(\cdot): \mathbb{R}^{d_x} \!\mapsto\! \mathbb{R}^{d_x}$ is represented using a multi-output GP. In this context, the $d_x$ functions are typically modeled with $d_x$  mutually independent GPs \cite{lin2022output}.
\cblue{The prior distribution of the initial state, $p(\x_0)$, is assumed to be Gaussian and known for simplicity; however, it can also be learned from observed data in the absence of prior information \cite{doerr2018probabilistic}.} 
Additionally, it is noteworthy that the GPSSM illustrated in Fig.~\ref{fig:graphical_model} can be extended to accommodate control systems incorporating a deterministic control input $\bm{c}_t \in \mathbb{R}^{d_c}$ by augmenting the latent state with $[\x_t, \bm{c}_t] \in \mathbb{R}^{d_x + d_c}$. For the sake of brevity, however, we omit explicit reference to $\bm{c}_t$ in our notation throughout this paper.
\cblue{
\begin{remark}
    It is important to note the distinction between GPSSMs and the state-space representation of GPs (SSGP). The SSGP is a method that converts a GP into a linear state-space form, enabling the use of efficient inference techniques such as the Kalman filter and smoother, thereby facilitating the computationally efficient handling of large-scale problems. For more details on SSGP, we direct readers to \cite{martino2021joint, hartikainen2010kalman}.
    In contrast, a GPSSM utilizes the non-parametric flexibility of GPs to model the state transitions in SSMs, allowing for the capture of complex, nonlinear system dynamics. This results in a more flexible but typically computationally intensive SSM.
\end{remark}
}
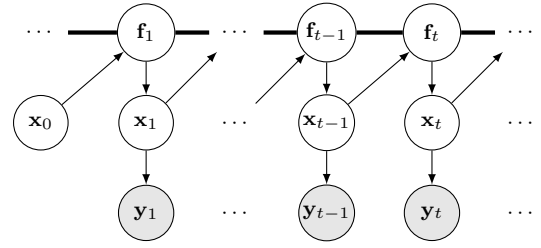
\begin{figure}[t!]
	\centering
	\footnotesize
	\begin{tikzpicture}[align = center, latent/.style={circle, draw, text width = 0.45cm}, observed/.style={circle, draw, fill=gray!20, text width = 0.45cm}, transparent/.style={circle, text width = 0.45cm}, node distance=1.2cm]
		\node[latent](x0) {${\x}_0$};
		\node[latent, right of=x0, node distance=1.4cm](x1) {${\x}_{1}$};
		\node[transparent, right of=x1](x2) {$\cdots$};
		\node[latent, right of=x2](xt-1) {$\!\!{\x}_{t-1}\!\!$};
		\node[latent, right of=xt-1, node distance=1.4cm](xt) {${\x}_{t}$};
		\node[transparent, right of=xt](xinf) {$\cdots$};
		\node[transparent, above of=x0](f0) {$\cdots$};
		\node[latent, above of=x1](f1) {${\f}_{1}$};
		\node[transparent, right of=f1](f2) {$\cdots$};
		\node[latent, above of=xt-1](ft-1) {$\!\!{\f}_{t-1}\!\!$};
		\node[latent, above of=xt](ft) {${\f}_{t}$};
		\node[transparent, right of=ft](finf) {$\cdots$};
		\node[observed, below of=x1](y1) {${\y}_{1}$};
		\node[transparent, below of=x2](y2) {$\cdots$};
		\node[observed, below of=xt-1](yt-1) {$\!\!{\y}_{t-1}\!\!$};
		\node[observed, below of=xt](yt) {${\y}_{t}$};
		\node[transparent, right of=yt](yinf) {$\cdots$};
		\draw[-latex] (x0) -- (f1);
		\draw[-latex] (f1) -- (x1);
		\draw[-latex] (x1) -- (f2);
		\draw[-latex] (ft-1) -- (xt-1);
		\draw[-latex] (xt-1) -- (ft);
		\draw[-latex] (x2) -- (ft-1);
		\draw[-latex] (ft) -- (xt);
		\draw[-latex] (xt) -- (finf);
		\draw[-latex] (x1) -- (y1);
		\draw[-latex] (xt-1) -- (yt-1);
		\draw[-latex] (xt) -- (yt);
		\draw[ultra thick]
		(f0) -- (f1)
		(f1) -- (f2)
		(f2) -- (ft-1)
		(ft-1) -- (ft)
		(ft) -- (finf)
		;
	\end{tikzpicture}
\caption{Graphical model of GPSSM. The white circles represent the latent variables, while the gray circles represent the observable variables. The thick horizontal bar represents a set of fully connected nodes, i.e., the GP.}
\label{fig:graphical_model}
\vspace{-.1in}
\end{figure}

Given the aforementioned model, see Eq.~\eqref{eq:gpssm}, the joint density function of the GPSSM can be expressed as:
\begin{equation}
	\begin{aligned}
		p(\vf, {\vx}, \vy \vert \bm{\theta}) =  p(\mathbf{x}_{0}) p(\f_{1:T}) \prod_{t=1}^{T} p(\mathbf{y}_{t} \vert \mathbf{x}_{t})  p(\mathbf{x}_{t} \vert {\f}_{t}), 
	\end{aligned}
	\label{eq:joint_density}
\end{equation}
where $p(\f_{1:T}) \!\!=\!\! p(f(\x_{0:T-1})) \!\!=\!\! \prod_{t=1}^{T} p(\f_t \vert \f_{1:t-1}, \x_{0:t-1})$ corresponds to a finite dimensional ($T$-dimensional) GP distribution \cite{frigola2015bayesian}, and we define the short-hand notations $\vy \triangleq \y_{1:T} =\{\y_t\}_{t=1}^T$,  $\vf \triangleq \f_{1:T}= \{\f_t\}_{t=1}^T$, and  $\vx \triangleq \x_{0:T}= \{\x_t\}_{t=0}^T$.
The model parameters $\bm{\theta}$ includes the noise covariance and GP hyper-parameters, i.e., $\bm{\theta} \!\!=\!\! \{\mathbf{Q}, \mathbf{R}, \bm{\theta}_{gp}\}$. 
The challenging task in GPSSM is to learn $\btheta$, and simultaneously infer the latent states of interest, which involves the marginal distribution $p(\vy|\btheta)$. However, due to the nonlinearity of GP, a closed-form solution for $p(\vy|\btheta)$ is unavailable. Hence, it becomes necessary to utilize approximation methods.

\section{Problem Statement}
\label{sec:vGPSSMs}
 
To overcome the intractability of the marginal distribution $p(\vy \vert \bm{\theta})$, variational GPSSMs involve constructing a model evidence lower bound (ELBO) to the logarithm of the marginal likelihood. Specifically, the ELBO, denoted by $\mathcal{L}$, is constructed such that the difference between $\log p(\vy \vert \btheta)$ and $\mathcal{L}$ is equal to the Kullback-Leibler (KL) divergence between the variational approximation, $q(\vx, \vf)$, and the true posterior, $p(\vx, \vf \vert \vy)$, i.e., 
\begin{equation}
    \log p(\vy \vert \btheta) - \underbrace{\mathbb{E}_{q(\vx, \vf)} \! \left[\log \frac{p(\vf, \vx, \vy)} {q(\vx, \vf) }\right]}_{ \triangleq \mathcal{L}}  = \underbrace{\operatorname{KL} \!\! \left[ q(\vx, \vf) \| p(\vx, \vf \vert \vy)\right]}_{\ge 0}.
    \label{eq:ELBO_general}
\end{equation}
Detailed derivations can be found in Supplement~\ref{appx_subsec:ELBO}. Maximizing $\mathcal{L}$ with respect to (w.r.t.) $\bm{\theta}$ fine-tunes the model parameters to fit the observed data in the model learning process; while maximizing $\mathcal{L}$ w.r.t. the variational distribution $q(\vx, \vf)$ is equivalent to minimizing the KL divergence, $\operatorname{KL} [ q(\vx, \vf) \| p(\vx, \vf \vert \vy)]$. That is, it enhances the quality of the variational distribution approximation, bringing it closer to the underlying posterior distribution, $p(\vx, \vf \vert \vy)$, which corresponds to the model inference process \cite{cheng2022rethinking}.  The capacity of $q(\vx, \vf)$ to approximate $p(\vx, \vf \vert \vy)$ is thus of crucial importance \cite{courts2021gaussian}. 

Based on the model defined in Eq.~\eqref{eq:joint_density}, a generic factorization of $q(\vx, \vf)$ is as follows:
\begin{equation}
    q(\vx, \vf) = q(\x_0) q(\vf) \prod_{t=1}^T q(\x_t \vert \f_t), 
    \label{eq:generic_vi_dist}
\end{equation}
where $q(\vf)$ represents the variational distribution of the GP and $q(\x_0) \prod_{t=1}^T q(\x_t \vert \f_t)$ corresponds to the variational distribution of the latent states \cite{krishnan2017structured,lin2023towards}. It is noteworthy that the factorization of the variational distribution presented in Eq.~(\ref{eq:generic_vi_dist}) is recognized as an NMF approximation within the GPSSM literature (see formal definition in Appendix \ref{appendix:MF_NMF_definition}) \cite{ialongo2019overcoming}, because it explicitly establishes the dependencies between the latent states and the GP transition function values, as manifested by the terms $\prod_{t=1}^T q(\x_t \vert \f_t)$. 

The works most closely related to this paper are probably \cite{doerr2018probabilistic} and \cite{ialongo2019overcoming}. In \cite{doerr2018probabilistic}, variational distribution $q(\x_t \vert \f_t)$ is subjectively set equal to the prior distribution $p(\x_t \vert \f_t)$, i.e., $q(\x_t \vert \f_t) = p(\x_t \vert \f_t)$. But using the prior distribution as an approximation results in no filtering or smoothing effect on latent states, although there is a dependence between the latent states $\vx$ and the transition function values $\vf$. Instead, the work in \cite{ialongo2019overcoming} assumes a parametric Markov-structured Gaussian variational distribution over the temporal states, i.e.,
\begin{equation}
q(\x_t \vert \f_t) = \cN(\x_t \vert \bm{A}_t \f_t +\bm{b}_t, \bm{S}_t),
\label{eq:varidist_ialongo}
\end{equation}
where $\{\bm{A}_t, \bm{b}_t, \bm{S}_t\}_{t=1}^T$ are free variational parameters.  This choice, however, introduces a significant drawback: the number of variational parameters grows linearly with the length of the time series. Although this issue can be partially addressed by incorporating an \textit{inference network}, such as a bidirectional recurrent neural network \cite{eleftheriadis2017identification,krishnan2017structured}, to learn the variational parameters and make the variational distribution learning with a constant model complexity, fine-tuning these variational parameters of the inference network still requires substantial effort. Consequently, despite the flexibility of the variational distribution described in Eq.~\eqref{eq:varidist_ialongo} allowing for approximation of the true posterior distribution, its empirical performance often falls short of its theoretical expressive capacity \cite{lindinger2022laplace}. Moreover, it is noteworthy that the inference network commonly takes the input of the observations, $\vy$, rendering the existing inference network-based methods predominantly trained in an offline manner, thus posing challenges when attempting to adapt them for online applications.  

To address these limitations, we step away from the heavy parameterization (e.g., the black-box inference networks) and propose a novel interpretable EnKF-aided algorithm for variational inference in GPSSMs. Our algorithm can exploit the dependencies between $\vx$ and $\vf$ and alleviate the model and computational complexities while being easily extended to an online learning algorithm.

\section{EnVI: EnKF-Aided Variational Inference} \label{sec:EnVI}
This section presents our novel variational inference algorithm for GPSSMs. We begin by introducing the model-based EnKF in Section \ref{subsec:EnKF}, which serves as the foundation for our algorithm. Following that,  Section \ref{subsec:EnVI} details the proposed EnKF-aided variational inference algorithm. Lastly, extensive discussions about the properties of our proposed algorithm are given in Section \ref{subsec:EnVI_discussion}.

\subsection{Ensemble Kalman Filter (EnKF)} \label{subsec:EnKF}
The EnKF is a Monte Carlo-based method that excels in handling nonlinear systems compared to the KF and EKF \cite{roth2017ensemble}. Given an SSM, see Eq.~\eqref{eq:SSM}, the EnKF sequentially approximates the filtering distributions using $N \! \in \! \mathbb{N}$ equally weighted particles \cite{roth2017ensemble}. Specifically, at the prediction step, EnKF first samples particles, $\x_{t-1}^{1:N} \triangleq \{\x_{t-1}^n\}_{n=1}^N$, from the filtering distribution, $p(\x_{t-1} \vert \y_{1:t-1})$, at time $t-1$. Then, the sampled particles are propagated using the state transition function $f(\cdot)$, i.e.,
\begin{equation}
\label{eq:EnKF_predictive}
\begin{aligned}
\bar{\x}_t^n = f(\x_{t-1}^n) + \mathbf{v}_t^n, \ \mathbf{v}_t^n \sim \cN(0, \mathbf{Q}), \ \forall n.
\end{aligned}
\end{equation} 
The prediction distribution is then approximated by a Gaussian distribution, i.e. $p(\x_t \vert \y_{1:t-1})  \approx \cN( \x_t \vert \bar{\mathbf{m}}_t, \bar{\mathbf{P}}_t)$,
where 
\begin{subequations}
\label{eq:prediction_monmenets}
\begin{align}
        \bar{\mathbf{m}}_t  & =  \frac{1}{N} \sum_{n=1}^N  \bar{\x}_t^n , \\ 
        \bar{\mathbf{P}}_t &= \frac{1}{N\!-\!1} \sum_{n=1}^N (\bar{\x}_t^n - \bar{\mathbf{m}}_t )(\bar{\x}_t^n - \bar{\mathbf{m}}_t )^\top.
\end{align}
\end{subequations}
With the linear Gaussian emission in Eq.~\eqref{eq:SSM}, the joint distribution of $\x_t$ and $\y_t$ can be readily obtained as follows:
%
\begin{equation}
\label{eq:joint_y_x_enkf}
    \begin{bmatrix}
        \x_t \vert \y_{1:t-1}\\
        \y_t \vert \y_{1:t-1}
    \end{bmatrix} \sim \cN \left( 
    \begin{bmatrix}
        \bar{\mathbf{m}}_t\\
        \bm{C}\bar{\mathbf{m}}_t
    \end{bmatrix}, 
    \begin{bmatrix}
        \bar{\mathbf{P}}_t, & \bar{\mathbf{P}}_t \bm{C}^{\top} \\
        \bm{C} \bar{\mathbf{P}}_t, & \bm{C} \bar{\mathbf{P}}_t \bm{C}^{\top} + \mathbf{R}
    \end{bmatrix} 
    \right).
\end{equation}
Thus, the filtering distribution, $p(\x_t \vert \y_{1:t}) \!=\! \cN( \x_t \vert \mathbf{m}_t, \mathbf{P}_t),$ can be obtained using the conditional Gaussian identity at the update step,  
where
\begin{subequations}
\label{eq:EnKF_update_distr}
    \begin{align}
            \mathbf{m}_t & = \bar{\mathbf{m}}_t + \bar{\mathbf{G}}_t (\y_t - \bm{C} \bar{\mathbf{m}}_t), \\
            \mathbf{P}_t & = \bar{\mathbf{P}}_t - \bar{\mathbf{P}}_t \bm{C}^{\top} \bar {\mathbf{G}}_t^\top,
    \end{align}
\end{subequations}
and $\bar{\mathbf{G}}_t = \bar{\mathbf{P}}_t \bm{C}^{\top} (\bm{C} \bar{\mathbf{P}}_t \bm{C}^{\top} + \mathbf{R})^{-1}$ is the Kalman gain \cite{roth2017ensemble}. Each filtered particle ${\x}_t^n$ is then obtained from a Kalman-type update, i.e.,
\begin{equation}
 {\x}_t^n = \bar{\x}_t^n  + \bar{\mathbf{G}}_t (\y_t + \mathbf{e}_t^n - \bm{C} \bar{\x}_t^n), \ \mathbf{e}_t^n \sim \cN(0, \mathbf{R}), \ \forall n.
 \label{eq:updated_samples}
\end{equation}

\cblue{
It is crucial to note that the EnKF inherently exhibits differentiability. Specifically, if we utilize the reparameterization trick \cite{kingma2019introduction} to sample the process and observation noises as shown in Eq.~\eqref{eq:EnKF_predictive} and \eqref{eq:updated_samples}, i.e.,
\begin{subequations}
\label{eq:reparameterization_trick}
    \begin{align}
        \mathbf{v}_t^n = \bm{0} + \mathbf{Q}^{\frac{1}{2}} \bm{\epsilon}, \ \ \  \bm{\epsilon} \sim \cN(\bm{0}, \bm{I}_{d_x}), \\
        \mathbf{e}_t^n = \bm{0} + \mathbf{R}^{\frac{1}{2}} \bm{\epsilon}, \ \ \  \bm{\epsilon} \sim \cN(\bm{0}, \bm{I}_{d_y}), 
    \end{align}
\end{subequations}
then the sampled latent states become differentiable w.r.t. the state transition function $f$ and the noise covariances $\mathbf{Q}$ and $\mathbf{R}$. This differentiable nature is crucial for enabling the use of gradient-based optimization methods in learning algorithms. Moreover, although we present the linear emission model in this paper, the EnKF can be extended to nonlinear emission models. The difference lies in the computation of the predicted observation covariance and the cross-covariance between the state and the observation predictions in Eq.~\eqref{eq:joint_y_x_enkf}, which are done using ensembles. For more details, see e.g. \cite{roth2017ensemble}.
}

\begin{remark} \label{remark:EnKF_vs_PF}
    Compared to another Monte Carlo-based method, PF, EnKF demonstrates computational efficiency, particularly in exceedingly high-dimensional spaces \cite{katzfuss2016understanding}, though the PF may offer more flexibility in addressing highly non-Gaussian and nonlinear aspects. 
    Moreover, the inherent differentiable nature of EnKF (see Eqs.~\eqref{eq:EnKF_predictive}--\eqref{eq:updated_samples}) enables seamless integration with off-the-shelf automatic differentiation tools (e.g., PyTorch \cite{paszke2019pytorch}) for model parameter optimization (see more discussions in Section \ref{subsec:EnVI_discussion}). This stands in contrast to PF, where the discrete distribution resampling significantly poses challenges for the utilization of the reparameterization trick \cite{kingma2019introduction}, resulting in significantly higher computational complexity for the parameters gradient computations \cite{naesseth2019high,rosato2022efficient,chen2022autodifferentiable}.
\end{remark}

In the following subsection, we will describe our novel EnKF-aided NMF variational inference algorithm for GPSSMs, which significantly reduces the number of variational parameters and ultimately enhances the learning and inference performance.

\subsection{EnKF-Aided Variational Inference (EnVI)} 
\label{subsec:EnVI}
\subsubsection{\textbf{Sparse GPSSMs}} \label{subsubsec:sparse_GPSSM}
Before introducing the utilization of EnKF in the variational inference framework, let us first introduce the sparse GP \cite{titsias2009variational}, a widely utilized technique in various GP variational approximations. This method serves as a scalable approach to model the corresponding GP component within the approximate posterior distribution (see Eq.~\eqref{eq:generic_vi_dist}), thereby guaranteeing inherent scalability in GPSSM. The main idea of the sparse GP is to introduce a small set of inducing points $\vec{\z} \triangleq \{\z_{i}\}_{i=1}^M$ and $\vec{\u} \triangleq \{\u_{i}\}_{i=1}^{M}$, $M \! \ll \! T$, to serve as the surrogate of the associated GP, where the inducing inputs, $\z_i \!\in\! \mathbb{R}^{d_x}, \forall i$, are placed in the same space as the latent states $\x_t$, while the corresponding inducing outputs $\u_{i} \!=\! f(\z_{i})$ follow the same GP prior as $\vf$. With the augmentation of the inducing points, the joint distribution of the GPSSM becomes 
\begin{equation}
 p(\vf, \vu, \vx, \vy) = p(\x_{0}) p(\vf, \vu) \prod_{t =1}^{T}  p(\y_{t} \vert \x_{t}) p(\x_{t} \vert \f_{ t}),
\label{eq:joint_dist}
\end{equation}
where $p(\vf, \vu) = p(\vu) p(\vf \vert \vu) =  p(\vu) \prod_{t=1}^T p(\f_{t} \vert \x_{t-1}, \vu)$ is the augmented GP prior and 
$p(\f_{t} \vert \x_{t-1}, \vu)$ is the noiseless GP prediction whose mean and covariance can be computed similarly to Eq.~(\ref{eq:GP_posterior}). 
\cblue{The introduced inducing inputs $\vz$ will be treated as variational parameters and jointly optimized with model parameters \cite{hensman2013gaussian}. We will further describe this later. }

Suppose that the inducing outputs $\vu$ serve as sufficient statistics for the GP function values $\vf$, such that given $\vu$, the GP function values $\vf$ and any novel $\f_*$ are independent \cite{titsias2009variational}, i.e., $p(\f_* \vert \vf, \vu) = p(\f_* \vert \vu)$ for any $\f_*$. We can integrate out $\vf$ in Eq.~\eqref{eq:joint_dist}, and the transition function is fully characterized using only the inducing points. Consequently, we have:
\begin{equation}
\setlength{\abovedisplayskip}{4.5pt}
\setlength{\belowdisplayskip}{4.5pt}
    p(\vu, \vx, \vy) = p(\x_0) p(\vu) \prod_{t=1}^T p(\y_t \vert \x_t) p(\x_t \vert \vu, \x_{t-1}),
    \label{eq:joint_dist_ips}
\end{equation}
where 
\begin{equation}
\label{eq:GP_transition_enkf} 
\begin{aligned}
    p(\x_t \vert \vu, \x_{t-1}) & = \int p(\x_t \vert \f_t) p(\f_t \vert \x_{t-1}, \vu, \vz) \mathrm{d}\f_t  \\
    & = \cN(\x_t ~\vert~ \bm{\xi}_t, \ \bm{\Xi}_t), 
\end{aligned}
\end{equation}
and with a bit abuse of notation,
\begin{subequations}
\begin{align}
    & \bm{\xi}_t = \bm{K}_{\x_{t-1}, \vz}(\bm{K}_{\vz,\vz} + \mathbf{Q})^{\!-\!1} \vu, \\
    & \bm{\Xi}_t = \bm{K}_{\x_{t\!-\!1}, \x_{t\!-\!1}}  \!+\! \mathbf{Q} \!-\! \bm{K}_{\x_{t\!-\!1}, \vz}(\bm{K}_{\vz, \vz} + \mathbf{Q})^{\!-\!1}\bm{K}_{\x_{t\!-\!1}, \vz}^\top.
\end{align}
\end{subequations}
That is to say, with the aid of sparse GPs, the computational complexity of GPSSMs can be reduced to $\mathcal{O}(d_x T M^2)$, comparing to the original $\mathcal{O}(d_x T^3)$ \cite{lin2023towards}.

\subsubsection{\textbf{ELBO for sparse GPSSM}} \label{subsubsec:ELBO_Sparse_GPSSM}
In the context of the GPSSM described in Eq.~\eqref{eq:joint_dist_ips}, we first assume a generic variational distribution for the latent variables, $\{\vu, \vx\}$,  factorized as follows:
\begin{equation}
	\begin{aligned}
            q(\vu, \vx) & =  q(\vu) q(\x_0) \prod_{t= 1}^T  \int q(\x_t \vert \f_t) p(\f_t \vert \vu, \x_{t-1}) \mathrm{d} \f_t\\
            & = q(\vu) q(\x_0) \prod_{t= 1}^T  q(\x_{t} \vert \vu, \x_{t-1}). 
        \end{aligned}
	\label{eq:generic_vi_dist_enKF}
\end{equation}
Here, the variational distribution over the inducing outputs, $q(\vu)$, is explicitly assumed to be a free-form Gaussian, i.e.,
\begin{equation}
        q(\vu) = \prod_{d=1}^{d_x} \cN(\{\u_{i, d}\}_{i=1}^M \vert \m_d, \mathbf{L}_{d} \mathbf{L}_{d}^\top ) = \cN(\vu ~\vert~ \mathbf{m}, \mathbf{S}),
	\label{eq:qu_variational}
\end{equation}
where $\mathbf{m} \!\!=\!\! [\m_1^\top, \ldots, \m_{d_x}^\top]^\top \in \mathbb{R}^{M d_x}$ and $\mathbf{S} \!\!=\!\! \operatorname{diag}(\mathbf{L}_1 \mathbf{L}_{1}^\top, \ldots, \mathbf{L}_{d_x} \mathbf{L}_{d_x}^\top) \in \mathbb{R}^{Md_x \times Md_x}$ are free variational parameters.  This explicit representation of the variational distribution enables scalability through the utilization of stochastic gradient-based optimization \cite{hoffman2013stochastic}, as it allows for the independence of individual GP predictions given the explicit inducing points \cite{hensman2013gaussian}. 
\cblue{The corresponding inducing inputs, $ \vz $, are treated as variational parameters as well, as described in \cite{titsias2009variational,hensman2013gaussian}. These parameters, $\{\vz, \m, \mathbf{S}\}$, collectively define the variational distribution that approximates the true GP posterior.}
We also assume that the variational distribution over the initial state is $q(\x_{0}) = \mathcal{N}(\x_0 \vert \m_{0}, \mathbf{L}_{0}\mathbf{L}_{0}^\top)$, where ${\m}_{0} \in \mathbb{R}^{d_x}$ and lower-triangular matrix $\mathbf{L}_{0} \in \mathbb{R}^{d_x \times d_x}$ are free variational parameters of $q(\x_0)$.   
The variational distribution of the latent states, $q(\x_t \vert \vu, \x_{t-1})$, obtained by integrating out $\f_t$ as shown in Eq.~\eqref{eq:generic_vi_dist_enKF}, will be implicitly modeled by resorting to the EnKF technique. This modeling approach will help eliminate the need to parameterize $q(\x_t \vert \vu, \x_{t-1})$, a common requirement in the existing works \cite{eleftheriadis2017identification,doerr2018probabilistic,ialongo2019overcoming, curi2020structured}, thus overcoming the challenges associated with optimizing a large number of variational parameters, as discussed in Section \ref{sec:vGPSSMs}.
%
%

Before elucidating the methodology of employing EnKF to eliminate the parameterization for the variational distribution $q(\x_t \vert \vu, \x_{t-1})$, we first undertake the derivation of the ELBO, 
which is succinctly summarized as follows.
\begin{theorem} \label{them:general_elbo}
Upon the augmentation of the inducing points into the GPSSM (see Eq.~\eqref{eq:joint_dist_ips}) and under the NMF assumption of the variational distribution (see Eq.~\eqref{eq:generic_vi_dist_enKF}), the model evidence lower bound for joint learning and inference is:
    \begin{equation}
    \label{eq:ELBO_NMF_enkf}
	\begin{aligned}
	\!\! \mathcal{L} 
		& = \underbrace{ \mathbb{E}_{q(\vu, \vx)}  \left[ \sum_{t=1}^T \log p(\y_{t} \vert \x_t) \right]}_{\mathrm{term~1} }   \
		-  \ \underbrace{\operatorname{KL}[q(\x_0) \| p(\x_0)]}_{\mathrm{term~ 2}} \\ 
        & \quad \!-\!\underbrace{\operatorname{KL}[q(\vu) \| p(\vu)]}_{\mathrm{term~3}}  - \underbrace{\mathbb{E}_{q(\vu, \vx)}  \left[ \sum_{t=1}^T \log \frac{q(\x_{t} \vert \vu, \x_{t-1})}{p(\x_t \vert \vu, \x_{t-1})}\right]}_{\mathrm{term~4}}.
	\end{aligned}
    \end{equation}
\end{theorem}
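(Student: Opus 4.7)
The plan is to prove Theorem 1 by direct substitution of the two factorizations (the augmented joint in Eq.~\eqref{eq:joint_dist_ips} and the NMF variational family in Eq.~\eqref{eq:generic_vi_dist_enKF}) into the standard definition of the ELBO, then regrouping the resulting expectations into the four claimed terms. The key observation is that the augmentation $\vu$ does not change the marginal $p(\vy)$, so the ELBO inequality in Eq.~\eqref{eq:ELBO_general} continues to hold with $(\vx,\vf)$ replaced by $(\vx,\vu)$ once $\vf$ has been integrated out as in Eq.~\eqref{eq:GP_transition_enkf}. This reduces the whole problem to a tidy bookkeeping exercise on the log-ratio $\log\{p(\vu,\vx,\vy)/q(\vu,\vx)\}$.

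Concretely, the first step is to write
\begin{equation*}
\mathcal{L} \;=\; \mathbb{E}_{q(\vu,\vx)}\!\left[\log p(\vu,\vx,\vy)\right] - \mathbb{E}_{q(\vu,\vx)}\!\left[\log q(\vu,\vx)\right],
\end{equation*}
and then expand $\log p(\vu,\vx,\vy)$ using Eq.~\eqref{eq:joint_dist_ips} into the sum $\log p(\x_0)+\log p(\vu)+\sum_t \log p(\y_t\vert \x_t)+\sum_t \log p(\x_t\vert \vu,\x_{t-1})$, and similarly expand $\log q(\vu,\vx)$ using Eq.~\eqref{eq:generic_vi_dist_enKF} into $\log q(\x_0)+\log q(\vu)+\sum_t \log q(\x_t\vert \vu,\x_{t-1})$. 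Linearity of expectation and the fact that each $\log q(\cdot)$/$\log p(\cdot)$ pair shares the same conditioning variables then lets me pair up terms cleanly.

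The second step is to collapse the pairs into recognizable objects. The observation-likelihood terms produce term~1 directly. The pair $\mathbb{E}_{q(\x_0)}[\log p(\x_0)-\log q(\x_0)]$ is by definition $-\operatorname{KL}[q(\x_0)\Vert p(\x_0)]$, giving term~2; the pair $\mathbb{E}_{q(\vu)}[\log p(\vu)-\log q(\vu)]$ is $-\operatorname{KL}[q(\vu)\Vert p(\vu)]$, giving term~3. The remaining transition terms, which cannot be written as a standalone KL because the ratio still depends on $(\vu,\x_{t-1})$ drawn from $q$, stay as the expected log-ratio shown in term~4. A small point worth flagging is that the expectations for terms~2 and~3 only involve marginals of $q(\vu,\vx)$ over $\x_0$ and $\vu$ respectively; this is legitimate because $\log p(\x_0)-\log q(\x_0)$ depends only on $\x_0$ and $\log p(\vu)-\log q(\vu)$ depends only on $\vu$, so the other marginals integrate out to one.

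No step is a genuine obstacle; the proof is essentially a careful accounting. The only place one might trip is being tempted to write term~4 as a KL divergence, but since $\vu$ and $\x_{t-1}$ under $q$ are not the same distribution one would use on the $p$-side, it must remain an expected log-ratio under the joint $q(\vu,\vx)$. The cleanest way to make that explicit is to keep the outer expectation under the full joint $q(\vu,\vx)$ throughout, rather than trying to simplify the inner structure, and defer any use of the NMF structure for later sections where EnKF is brought in to evaluate term~4.
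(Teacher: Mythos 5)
Your proposal is correct and matches the paper's own proof in Appendix~\ref{appendix_elbo_proof}: both substitute the factorizations of $p(\vu,\vx,\vy)$ from Eq.~\eqref{eq:joint_dist_ips} and $q(\vu,\vx)$ from Eq.~\eqref{eq:generic_vi_dist_enKF} into the ELBO definition and regroup the log-ratio into the four terms, with terms~2 and~3 collapsing to KL divergences over the $\x_0$ and $\vu$ marginals. Your added remarks—that the marginalization to $q(\x_0)$ and $q(\vu)$ is legitimate and that term~4 cannot be written as a standalone KL since $(\vu,\x_{t-1})$ are drawn from $q$—are accurate refinements of the same argument.
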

\begin{proof}
    The proof can be found in Appendix \ref{appendix_elbo_proof}
\end{proof}
\noindent An important observation lies in the interpretability of each component within the ELBO. Specifically, when maximizing the ELBO for learning and inference:
 \begin{itemize}
  	\item  Term 1 corresponds to the data reconstruction error, which encourages any state trajectory $\vx$ sampled from the variational distribution, $q(\vu, \vx)$, to accurately reconstruct the observed data.
 	\item Terms 2 and 3 serve as regularization terms for $q(\x_0)$ and $q(\vu)$, respectively. They discourage significant deviations of the variational distributions from the corresponding prior distributions.
 	\item  Term 4 represents a regularization for $q(\x_t \vert \vu, \x_{t-1})$, which discourages significant deviations of $q(\x_t \vert \vu, \x_{t-1})$ from the prior $p(\x_t \vert \vu, \x_{t-1})$.
 \end{itemize}

\subsubsection{\textbf{EnVI algorithm}} \label{subsubsec:EnVI_algorithm}
The NMF assumption applied to the variational distribution (Eq.~\eqref{eq:generic_vi_dist_enKF}) typically results in an intractable evaluation of the first and fourth terms within the ELBO \cite{ialongo2019overcoming}. Therefore, approximations are needed to help efficiently evaluate the ELBO. One existing method is to simply set $q(\x_t \vert \vu, \x_{t-1}) \!=\! p(\x_t \vert \vu, \x_{t-1})$, resulting in the same ELBO as in \cite{doerr2018probabilistic}, where no filtering or smoothing effect is feasible. To circumvent this limitation, we examine the difference between term 1 and term 4 in Eq.~\eqref{eq:ELBO_NMF_enkf} and propose our variational lower bound approximation. The main result is summarized in the following proposition.
\cblue{
\begin{proposition}
\label{prop:approx_ELBO_}
Under the approximations that:
\begin{itemize}
    \item[1)] $p(\x_{t-1} \vert \vu, \y_{1:t-1}) \approx p(\x_{t-1} \vert \vu, \y_{1:t})$,
    \item[2)] $q(\x_{t} \vert \vu, \x_{t-1}) \approx p(\x_{t} \vert \vu, \x_{t-1}, \y_{1:t})$, 
\end{itemize}
the ELBO presented in Theorem \ref{them:general_elbo} can be reformulated as a summation over several simple terms:
\begin{equation}
\label{eq:EnVI_lower_bound}
\begin{aligned}
    \! \! \mathcal{L} \!\approx\! \mathbb{E}_{q(\vu)} \! \left[ \sum_{t=1}^T \log  p(\y_t \vert \vu, \y_{1:t-1}) \right] & \! - \operatorname{KL}[q(\x_0) \| p(\x_0)] \\ & \! - \operatorname{KL}[q(\vu) \| p(\vu)],
\end{aligned}
\end{equation}
where the log-likelihood, $\log p(\y_t \vert \vu, \y_{1:t-1})$ in the first term can be analytically evaluated using the EnKF (discussed below). The two KL divergence terms can also be computed in closed form, due to the Gaussian nature of the prior and variational distributions \cite{theodoridis2020machine}.
\end{proposition}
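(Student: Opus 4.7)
The plan is to show that, under approximations (1) and (2), the two expectation-based summands (terms 1 and 4) of the Theorem~\ref{them:general_elbo} ELBO collapse into a single predictive log-likelihood $\mathbb{E}_{q(\vu)}[\sum_{t=1}^T \log p(\y_t \vert \vu, \y_{1:t-1})]$, while terms 2 and 3 carry over unchanged as Gaussian--Gaussian KL divergences whose closed-form value is standard. Closed-form evaluation of the log-predictive will then follow from the EnKF construction of Section~\ref{subsec:EnKF}.

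The first step is a per-time-step algebraic collapse inside $\mathbb{E}_{q(\vu,\vx)}[\cdot]$. Substituting approximation~(2), $q(\x_t \vert \vu, \x_{t-1}) \approx p(\x_t \vert \vu, \x_{t-1}, \y_{1:t})$, and invoking Bayes' rule together with the Markov properties of the emission and of the $\vu$-conditioned transition, I would establish
\begin{equation*}
\frac{p(\x_t \vert \vu, \x_{t-1}, \y_{1:t})}{p(\x_t \vert \vu, \x_{t-1})} = \frac{p(\y_t \vert \x_t)}{p(\y_t \vert \vu, \x_{t-1}, \y_{1:t-1})},
\end{equation*}
so that the integrand of term 1 minus term 4 at time $t$ reduces to $\log p(\y_t \vert \vu, \x_{t-1}, \y_{1:t-1})$, with $\log p(\y_t \vert \x_t)$ cancelling exactly.

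The main obstacle, and the step where approximation (1) is genuinely used, is then to marginalize the remaining expectation $\mathbb{E}_{q(\vu, \vx)}[\log p(\y_t \vert \vu, \x_{t-1}, \y_{1:t-1})]$ down to $\mathbb{E}_{q(\vu)}[\log p(\y_t \vert \vu, \y_{1:t-1})]$. Since the surviving integrand depends on $\vx$ only through $\x_{t-1}$, this amounts to showing that the marginal $q(\x_{t-1}\vert\vu) = \int q(\x_0) \prod_{s=1}^{t-1} q(\x_s \vert \vu, \x_{s-1}) \, d\x_{0:t-2}$ coincides (approximately) with the filter $p(\x_{t-1} \vert \vu, \y_{1:t-1})$. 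I would proceed by induction on $t$: by approximation~(2) the Chapman--Kolmogorov step becomes $\int p(\x_{t-1} \vert \vu, \x_{t-2}, \y_{1:t-1}) \, p(\x_{t-2} \vert \vu, \y_{1:t-2}) \, d\x_{t-2}$, and approximation~(1) (filter $\approx$ one-step smoother) replaces $p(\x_{t-2} \vert \vu, \y_{1:t-2})$ by $p(\x_{t-2} \vert \vu, \y_{1:t-1})$, closing the recursion to the filtering density $p(\x_{t-1} \vert \vu, \y_{1:t-1})$ as required.

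Finally, to evaluate $\log p(\y_t \vert \vu, \y_{1:t-1})$ in closed form I would appeal to the EnKF machinery: starting from the Gaussian filter with moments $(\mathbf{m}_{t-1}, \mathbf{P}_{t-1})$, propagating the ensemble through the sparse-GP transition in Eq.~\eqref{eq:EnKF_predictive} and applying the linear emission give the Gaussian predictive $p(\y_t \vert \vu, \y_{1:t-1}) \approx \cN(\y_t \vert \bm{C} \bar{\mathbf{m}}_t, \bm{C} \bar{\mathbf{P}}_t \bm{C}^\top + \mathbf{R})$ read off from the joint form in Eq.~\eqref{eq:joint_y_x_enkf}, so the log-predictive reduces to a standard Gaussian log-density. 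Combined with the closed-form Gaussian--Gaussian KL expressions for $\operatorname{KL}[q(\x_0) \| p(\x_0)]$ and $\operatorname{KL}[q(\vu) \| p(\vu)]$, this completes the derivation of Eq.~\eqref{eq:EnVI_lower_bound}.
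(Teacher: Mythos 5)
Your first step is correct and coincides with the opening of the paper's own proof (Lemma~\ref{lemma:equivalence} in Appendix~\ref{appendix_elbo_proof}): under approximation 2), Bayes' rule and the Markov structure give $p(\x_t \vert \vu, \x_{t-1}, \y_{1:t})/p(\x_t \vert \vu, \x_{t-1}) = p(\y_t \vert \x_t)/p(\y_t \vert \vu, \x_{t-1}, \y_{1:t-1})$, so the time-$t$ integrand of term~1 minus term~4 collapses to $\log p(\y_t \vert \vu, \x_{t-1}, \y_{1:t-1})$. The genuine gap is in your marginalization step. Even granting your inductive claim that $q(\x_{t-1} \vert \vu)$ coincides with the filter $p(\x_{t-1} \vert \vu, \y_{1:t-1})$, the reduction you assert is false as an equality: the expectation of a log-conditional is not the log of the marginal. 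The exact identity, obtained from Bayes' rule in the form $p(\y_t \vert \vu, \x_{t-1}, \y_{1:t-1}) = p(\y_t \vert \vu, \y_{1:t-1})\, p(\x_{t-1} \vert \vu, \y_{1:t}) / p(\x_{t-1} \vert \vu, \y_{1:t-1})$, is
\begin{equation*}
\E_{p(\x_{t-1} \vert \vu, \y_{1:t-1})} \left[ \log p(\y_t \vert \vu, \x_{t-1}, \y_{1:t-1}) \right] = \log p(\y_t \vert \vu, \y_{1:t-1}) - \operatorname{KL}\left[ p(\x_{t-1} \vert \vu, \y_{1:t-1}) \,\|\, p(\x_{t-1} \vert \vu, \y_{1:t}) \right],
\end{equation*}
so a Jensen-type defect survives, and it is precisely approximation 1) (filter $\approx$ one-step smoother) that is needed to discard this KL term. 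Your proposal instead spends approximation 1) on the induction identifying $q(\x_{t-1} \vert \vu)$ with the filter, which does not address this defect at all — the KL correction is present even when that identification is exact.

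The fix is one line, and it makes your induction unnecessary: substitute the Bayes identity above pointwise, i.e.\ $\log p(\y_t \vert \vu, \x_{t-1}, \y_{1:t-1}) = \log p(\y_t \vert \vu, \y_{1:t-1}) + \log \left[ p(\x_{t-1} \vert \vu, \y_{1:t}) / p(\x_{t-1} \vert \vu, \y_{1:t-1}) \right]$, and invoke approximation 1) to drop the second summand. The integrand is then constant in $\vx$, so the expectation over $q(\vx \vert \vu)$ disappears for \emph{any} normalized variational density — no characterization of $q(\x_{t-1} \vert \vu)$ is required. This is exactly the paper's route: it inserts $p(\x_{t-1} \vert \vu, \y_{1:t-1})$ into numerator and denominator, applies the two approximations to the denominator, and recognizes the pointwise ratio $p(\y_t, \x_t, \x_{t-1} \vert \vu, \y_{1:t-1})/p(\x_t, \x_{t-1} \vert \vu, \y_{1:t}) = p(\y_t \vert \vu, \y_{1:t-1})$. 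Your concluding EnKF paragraph is sound, with two small slips: the ensemble is propagated through the GP transition via Eq.~\eqref{eq:EnKF_predictive_GP}, not Eq.~\eqref{eq:EnKF_predictive}; and your predictive covariance $\bm{C} \bar{\mathbf{P}}_t \bm{C}^\top + \mathbf{R}$ is the one consistent with Eq.~\eqref{eq:joint_y_x_enkf} (the paper's Eq.~\eqref{eq:evaluation_log_likelihood} omits the $\mathbf{R}$, apparently a typographical slip).
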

}
%
\begin{proof}
   The proof can be found in Appendix~\ref{appendix_elbo_proof}.
\end{proof}

\cblue{
Proposition~\ref{prop:approx_ELBO_} demonstrates that the newly derived ELBO is significantly more tractable under two mild approximations, which are justified and explained as follows.
\begin{itemize}
    \item Approximation 1) posits that the state estimation at time $t\!\!-\!\!1$ using observations from time $1$ to $t\!\!-\!\!1$ (i.e., $p(\x_{t-1} \vert \vu, \y_{1:t-1})$) is approximately equal to the estimation using observations from time $1$ to $t$ (i.e., $p(\x_{t-1} \vert \vu, \y_{1:t})$). This approximation is generally reasonable, particularly when $t$ is large (i.e., with a long observation sequence), as the increased information aids in more accurately inferring the latent state, thereby reducing the discrepancy between the two posterior distributions.
    \item Approximation 2) states that the variational distribution $q(\x_t \vert \vu, \x_{t-1})$ is approximately equal to the posterior distribution $p(\x_t \vert \vu, \x_{t-1}, \y_{1:t})$. In the variational inference framework \cite{hoffman2013stochastic}, the variational distribution $q(\x_t \vert \vu, \x_{t-1})$ is intended to approximate the true but unknown distribution $p(\x_t \vert \vu, \x_{t-1}, \y_{1:T})$. That is to say, in Approximation 2), we are essentially using the filtering distribution $p(\x_t \vert \vu, \x_{t-1}, \y_{1:t})$ to approximate the smoothing distribution $p(\x_t \vert \vu, \x_{t-1}, \y_{1:T})$. While this approximation may introduce some estimation loss, it remains reasonable when the observation series is sufficiently long, such that additional future observations do not significantly alter the state estimates. On the other hand, given our priority on computational efficiency, a minor loss in accuracy is acceptable and often necessary as part of this trade-off.
\end{itemize}
}

We next proceed to outline the evaluation of the log-likelihood, $\log p(\y_t \vert \vu, \y_{1:t-1})$ in Eq.~\eqref{eq:EnVI_lower_bound} using the EnKF.  which allows us to evaluate the log-likelihood recursively and analytically. 

Building upon the EnKF outlined in Section~\ref{subsec:EnKF} and assuming that we have acquired the posterior distribution, $p(\x_{t-1} \vert \vu, \y_{1:t-1}) \!=\!\cN(\x_{t-1} \vert \mathbf{m}_{t-1}, \mathbf{P}_{t-1})$, at time $t\!-\!1$, we can employ the GP transition,  presented in Eq.~\eqref{eq:GP_transition_enkf}, to perform the prediction step and generate $N$ predicted samples $\bar{\x}_t^{1:N}$, i.e.,
\begin{equation}
\label{eq:EnKF_predictive_GP}
      \bar{\x}_t^n \sim p(\x_t \vert \vu, \mathbf{x}_{t-1}^n),  \ \  n = 1, 2, \ldots, N,
\end{equation}
where $\x_{t-1}^{1:N}$ are $N$ particles obtained from $p(\x_{t-1} \vert \vu, \y_{1:t-1})$ with equal weights. With the predicted samples $\bar{\x}_t^{1:N}$, we thus can approximate the prediction distribution as a Gaussian:
\begin{align}
    p(\x_t \vert \vu, \y_{1:t-1}) & = \int p(\x_{t} \vert \vu, \x_{t-1}) p(\x_{t-1} \vert \vu,  \y_{1:t-1}) \mathrm{d}\x_{t-1} \nonumber \\
    & \approx \cN(\x_t \vert \bar{\mathbf{m}}_t, \bar{\mathbf{P}}_t) \label{eq:predictive_distri_enkf_ip}
\end{align}
where $\bar{\mathbf{m}}_t$ and $\bar{\mathbf{P}}_t$ can be computed using Eq.~\eqref{eq:prediction_monmenets}. 

Similarly, during the update step, utilizing Eqs.~\eqref{eq:EnKF_update_distr} and \eqref{eq:updated_samples}, we can derive the filtering distribution at time step $t$, 
\begin{equation}
\label{eq:var_q_x_u}
p(\x_t \vert \vu, \y_{1:t}) = \cN(\x_t \vert \mathbf{m}_t, \mathbf{P}_t)  
\end{equation} 
and obtain the set of $N$ updated samples $\x_{t}^{1:N}$. \cblue{We can then recursively obtain the samples and posterior distributions in the subsequent time steps.} Note that in the context of GPSSMs, both the prediction and update steps are conditioned on the inducing points, $\{\vu, \vz\}$, which act as a surrogate for the transition function.

Leveraging the EnKF outlined above,  the log-likelihood, $\log p(\y_t \vert \vu, \y_{1:t-1})$ in Eq.~\eqref{eq:EnVI_lower_bound}, can be evaluated recursively and analytically. Specifically, at each time step $t$, we have
\begin{equation}
    \begin{aligned}
        \log p(\y_t \vert \vu, \y_{1:t-1}) &\!=\! \log \int p(\y_t \vert \x_t) p(\x_t \vert \vu, \y_{1:t-1}) \mathrm{d} \x_{t}\\
        & \!=\! \log \cN(\y_t \vert \bm{C} \bar{\mathbf{m}}_t, \ \bm{C} \bar{\mathbf{P}}_t \bm{C}^\top ),
    \end{aligned}
    \label{eq:evaluation_log_likelihood}
\end{equation}
due to the Gaussian prediction distribution, see Eq.~\eqref{eq:predictive_distri_enkf_ip}, and the linear emission model.

Now we can evaluate our approximate variational lower bound, $\mathcal{L}$ in Eq.~\eqref{eq:EnVI_lower_bound}. We first utilize the reparameterization trick (see also e.g. Eq.~\eqref{eq:reparameterization_trick}) \cite{kingma2019introduction} to sample $\vu$, i.e.,
\begin{equation}
    \cblue{\vu = \mathbf{m} + \mathbf{S}^{\frac{1}{2}} \bm{\epsilon}, \  \bm{\epsilon} \sim \cN(\bm{0}, \bm{I}),}
\end{equation}
and numerically get an unbiased evaluation of the expected log-likelihood, $\mathbb{E}_{q(\vu)} \! \left[ \sum_{t=1}^T \log  p(\y_t \vert \vu, \y_{1:t-1}) \right]$. 
Due to the reparameterization trick \cite{kingma2019introduction}, $\mathcal{L}$ is differentiable w.r.t. the model parameters $\btheta \!=\! \{\bm{\theta}_{gp}, \mathbf{Q}, \mathbf{R} \}$ and the variational parameters $\bzeta = \{ \mathbf{m}_0, \mathbf{L}_0, \mathbf{m}, \mathbf{S}, \vz\}$. \cblue{Therefore, we can use modern differentiation tools, such as PyTorch, to automatically compute the gradient through backpropagation through time (BPTT) and apply gradient-based methods (e.g., Adam) to maximize $\mathcal{L}$ \cite{paszke2019pytorch, kingma2015adam}.}
Detailed routine for implementing the EnKF-aided variational learning and inference algorithm, termed as EnVI, is summarized in Algorithm \ref{alg:EnKF_version2}.

\begin{algorithm}[t!]
\caption{EnKF-aided variational learning and inference}
\label{alg:EnKF_version2}
\begin{algorithmic}[1]
\Statex {\bf Input}:  $\btheta = \{\btheta_{gp}, \mathbf{Q}, \mathbf{R}\}, ~\bzeta, ~\y_{1:T}$, $\x_0^{1:N} \sim  q(x_0)$
\While{\textit{iterations not terminated}}
\State $\vu \sim q(\vu)$, $L_\ell = 0$
\For {$t= 1, 2, \ldots, T$}
\State Get prediction samples using Eq.~\eqref{eq:EnKF_predictive_GP}
\State Get empirical moments $\bar{\mathbf{m}}_t, \bar{\mathbf{P}}_t$ using Eq.~\eqref{eq:predictive_distri_enkf_ip}
\State Get Kalman gain: $\bar{\mathbf{G}}_t \!=\! \bar{\mathbf{P}}_t \bm{C}^{\top} (\bm{C} \bar{\mathbf{P}}_t \bm{C}^{\top} \!\!+\! \mathbf{R})^{-1}$  
\State  Get updated samples using Eq.~\eqref{eq:updated_samples}
\State Evaluate the log-likelihood using Eq.~\eqref{eq:evaluation_log_likelihood}, and 
$$
L_\ell = L_\ell + \log p(\y_t \vert\vu, \y_{1:t-1})
$$  
\EndFor
\State  $\mathcal{L} = L_\ell - \operatorname{KL}(q(\x_0) \| p(\x_0)) - \operatorname{KL}(q(\vu) \| p(\vu)) $
\State Maximize $\mathcal{L}$ and update $\btheta$, $\bzeta$ using Adam \cite{kingma2015adam}
\EndWhile
\Statex {\bf Output}: EnKF particles $\x_{0:T}^{1:N}$,  model parameters $\btheta$, and variational parameters $\bzeta$.
\end{algorithmic}
\end{algorithm}

It is noteworthy that the newly derived ELBO, $\mathcal{L}$ circumvents the explicit evaluation of the first and fourth terms in Eq.~\eqref{eq:ELBO_NMF_enkf}, and sidesteps the computational challenges posed by the heavy parameterization of $q(\x_t \vert \vu, \x_{t-1})$. Consequently, EnVI can substantially improve the efficiency of model learning and inference. 
In addition, maximizing the ELBO in Eq.~\eqref{eq:EnVI_lower_bound} can be interpreted as follows:  The objective is to optimize the model parameters and variational parameters such that the GPSSM can fit the observed data well at each step (indicated by the first term); simultaneously, the KL regularization terms impose constraints to prevent model overfitting (indicated by the second and third terms).

\begin{remark} 
\label{remark:comput_complexity}
The computational complexity of the EnVI algorithm predominantly lies in the evaluation of $N$ independent particles on the GP transition during the prediction step (see  Eq.~\eqref{eq:EnKF_predictive_GP}). Recall that the number of inducing points is significantly smaller than the length of the data sequence, i.e., $M \! \ll \! T$, and assume that $M  \! \ge \! d_x$. In this context, the computational complexity of Algorithm \ref{alg:EnKF_version2} scales as $\mathcal{O}(N T d_x M^2)$. In practice, $N$ is often a small number, and the computation of the $N$ particles on the GP transition can be run in parallel \cite{paszke2019pytorch}, resulting in the computational complexity in real-world deployments scaling as $\mathcal{O}(T d_x M^2)$. This cost matches that of existing works. Yet,  the streamlined model complexity in EnVI enhances its computational robustness and accelerates convergence compared to the existing works.
\end{remark}
\vspace{-.05in}

\subsection{More Discussions and Insights} 
\label{subsec:EnVI_discussion}
This subsection presents more detailed insights into the proposed EnVI and discusses its connections to existing works.


First of all, the importance of the differentiable nature within EnKF, as mentioned in Remark \ref{remark:EnKF_vs_PF}, to the EnVI becomes more apparent. The inherent differentiability in EnVI, spanning from parameters ($\{\btheta, \bzeta\}$) to latent states and extending to the objective function (the ELBO), enables principled joint learning and inference using modern off-the-shelf automatic differentiation tools (e.g., PyTorch \cite{paszke2019pytorch}). This contrasts with most existing EnKF-based dynamical systems learning methods (see, e.g., \cite{chen2023reduced, ishizone2020ensemble} and the references therein), which employ the expectation-maximization (EM) algorithm \cite{theodoridis2020machine} to iteratively update the model parameters $\btheta$ and latent state trajectory $\vx$. 
It has been reported that such EM-based methods disregard the gradient information from the parameters $\btheta$ to the state trajectory $\vx$, which can potentially degrade the learning performance \cite{chen2022autodifferentiable, courts2023variational}. In contrast, by regarding the latent state trajectory as a function of both the model parameters $\btheta$ and variational parameters $\bzeta$, our method can jointly optimize these parameters in a principled way, resulting in enhanced performance.

Second, it is worth noting that the proposed EnVI algorithm falls under the NMF category (see definition in Appendix \ref{appendix:MF_NMF_definition}), given the exploitation of the dependencies between latent states and the transition function, as evident in the filtering distribution, $p(\x_t \vert \vu, \y_{1:t})$ in Eq.~\eqref{eq:var_q_x_u}. This indicates that EnVI inherits the favorable characteristics of NMF algorithms \cite{doerr2018probabilistic}, which have the potential to enhance learning accuracy and address the issue of underestimating state inference uncertainty, commonly encountered in MF algorithms \cite{turner+sahani:2011a}. 

Last but not least, our method offers an effective means of mitigating the risk of overfitting. This is mainly due to the fact that our method leverages the Bayesian non-parametric model and the variational inference framework to derive the ELBO, as presented in Eq.~\eqref{eq:EnVI_lower_bound}, from which the additional regularization terms for the initial state and state transition function can effectively mitigate the overfitting issue. As a comparison, in the state-of-the-art EnKF-based dynamical system learning method, autodifferentiable EnKF (AD-EnKF) \cite{chen2022autodifferentiable,chen2023reduced}, the transition function in the SSM is modeled using a deterministic parametric model, specifically a neural network, and the optimization objective function is the logarithm of marginal likelihood of the model, i.e.
\begin{equation}
\label{eq:loss_AD-EnKF}
    \mathcal{L}_{\text{AD-EnKF}} = \log p(\y_{1:T}) = \sum_{t=1}^T \log p(\y_t \vert \y_{1:t-1}).
\end{equation}
Maximizing this objective function solely w.r.t. the model parameters can easily lead to overfitting. We defer further discussions of this issue to Section \ref{sec:experimental_results}.


%
%
\section{OEnVI: Online Implementation of EnVI} \label{sec:online_EnVI}
In this section, we further explore online setting where data is processed sequentially, one sample at a time. It is within this context that the simultaneous inference of states and nonlinear dynamics in GPSSMs presents significant challenges \cite{polyzos2021ensemble,zhao2022streaming,zhao2020variational}.  The good news is that our EnVI algorithm readily lends itself to online learning scenarios. Specifically, at each time step $t$, we can naturally maximize the corresponding objective function, denoted as $L_{\ell_t}$, given by
\begin{equation}
L_{\ell_t} = \mathbb{E}_{q(\vu)} \left[ \log p(\y_t \vert \vu, \y_{1:t-1}) \right] - \operatorname{KL}(q(\vu) \| p(\vu)),
\label{eq:OEnVI_obj}
\end{equation}
in terms of both the model parameters and variational parameters. Detailed steps for implementing the online EnVI, termed OEnVI, are summarized in Algorithm \ref{alg:OEnKF}. \cblue{It is worth noting that this algorithm is designed for learning a time-invariant dynamical system, as defined in Section~\ref{subsec:GPSSM}.}

\begin{algorithm}[t!]
\caption{OEnVI: Online EnVI (Step $t$)}
\label{alg:OEnKF}
\begin{algorithmic}[1]
\Statex {\bf Input}:  $\btheta = \{\btheta_{gp}, \mathbf{Q}, \mathbf{R}\}, ~\bzeta, ~\y_{t}$, $\x_{t-1}^{1:N}$
\While{\textit{iterations not terminated}}
\State $\vu \sim q(\vu)$, $L_{\ell_t} = 0$
\State Get prediction samples using Eq.~\eqref{eq:EnKF_predictive_GP} \label{eq:EnKF_forecast_alg} 
\State Get empirical moments $\bar{\mathbf{m}}_t, \bar{\mathbf{P}}_t$ using Eq.~\eqref{eq:predictive_distri_enkf_ip}
\State Get Kalman gain: $\bar{\mathbf{G}}_t \!=\! \bar{\mathbf{P}}_t \bm{C}^{\top} (\bm{C} \bar{\mathbf{P}}_t \bm{C}^{\top} \!\!+\! \mathbf{R})^{-1}$
\State  Get updated samples using Eq.~\eqref{eq:updated_samples}
\State Evaluate the objective function
$$
L_{\ell_t} = \log p(\y_t \vert\vu, \y_{1:t-1}) - \operatorname{KL}(q(\vu) \| p(\vu)) 
$$  
\State Maximize $L_{\ell_t}$ and update $\btheta$, $\bzeta$ using Adam \cite{kingma2015adam}
\EndWhile
\Statex {\bf Output}: 
EnKF particles $\x_{t}^{1:N}$,  model parameters $\btheta$, and variational parameters $\bzeta$.
\end{algorithmic}
\end{algorithm}
%

\begin{remark}
Analogous to EnVI, the computational complexity of OEnVI scales as $\mathcal{O}(d_x M^2)$ under practical parallel computing environments.
\end{remark}

An interesting insight for maximizing the objective function of OEnVI in Eq.~\eqref{eq:OEnVI_obj} (as well as the objective function of EnVI in Eq.~\eqref{eq:EnVI_lower_bound}) is that it essentially encourages successful data reconstruction while simultaneously ensuring that the filtering distribution $p(\x_t \vert \vu, \y_{1:t})$ and the prediction distribution $p(\x_t \vert \vu, \y_{1:t-1})$ do not deviate too far from each other, apart from the regularization of $q(\vu)$. This result is supported by the following proposition.
\begin{proposition}\label{prop:2}
The log-likelihood, $\log p(\y_t \vert \vu, \y_{1:t-1})$, essentially is the difference between the data reconstruction error, represented by $\mathbb{E}_{p(\x_t \vert \vu, \y_{1:t})} \left[ \log p(\y_t \vert \x_t)\right]$, and the KL divergence between the filtering distribution $p(\x_t \vert \vu, \y_{1:t})$ and the prediction distribution $p(\x_t \vert \vu, \y_{1:t-1})$. That is, 
\begin{equation}
\begin{aligned}
  \log p(\y_t \vert \vu, \y_{1:t-1}) = & - \operatorname{KL}\left[ p(\x_t \vert \vu, \y_{1:t}) \| p(\x_t \vert \vu, \y_{1:t-1}) \right] \\
  & + \mathbb{E}_{p(\x_t \vert \vu, \y_{1:t})} \left[ \log  p(\y_t \vert \x_{t})\right],  
\end{aligned}
\end{equation}
Thus, an alternative objective function for OEnVI can be expressed as 
\begin{equation}
\label{eq:OEnVI_alternative}
\begin{aligned}
         L_{\ell_t} &= \mathbb{E}_{q(\vu)}\left[ \mathbb{E}_{p(\x_t \vert \vu, \y_{1:t})} [ \log  p(\y_t \vert \x_{t})] \right]\\
        &  \quad - \mathbb{E}_{q(\vu)}\left[ \operatorname{KL} \left[ p(\x_t \vert \vu, \y_{1:t}) \| p(\x_t \vert \vu, \y_{1:t-1}) \right] \right] \\
        & \quad -\operatorname{KL} \left[ q(\vu) \| p(\vu) \right]
\end{aligned}
\end{equation}
\end{proposition}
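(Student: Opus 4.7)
The plan is to prove the identity by applying Bayes' rule to the filtering update at time $t$, and then taking expectation under the filtering distribution. This is the same manoeuvre that turns the standard evidence identity into an equality when the variational distribution coincides with the true posterior; here it is applied one time step at a time. Since everything is conditioned throughout on $\vu$ and $\y_{1:t-1}$, I will treat $p(\x_t \vert \vu, \y_{1:t-1})$ as the effective prior and $p(\y_t \vert \x_t)$ as the likelihood, using the conditional independence from the graphical model in Fig.~\ref{fig:graphical_model}, namely $p(\y_t \vert \x_t, \vu, \y_{1:t-1}) = p(\y_t \vert \x_t)$, which follows immediately from the emission equation in Eq.~\eqref{eq:gpssm}. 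The alternative OEnVI objective in Eq.~\eqref{eq:OEnVI_alternative} will then follow by substituting the identity back into Eq.~\eqref{eq:OEnVI_obj}.

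First I would write Bayes' rule for the filtering step,
\[
p(\x_t \vert \vu, \y_{1:t}) \;=\; \frac{p(\y_t \vert \x_t)\, p(\x_t \vert \vu, \y_{1:t-1})}{p(\y_t \vert \vu, \y_{1:t-1})}.
\]
Taking logarithms and rearranging to isolate the predictive log-marginal on the left gives
\[
\log p(\y_t \vert \vu, \y_{1:t-1}) \;=\; \log p(\y_t \vert \x_t) \,+\, \log \frac{p(\x_t \vert \vu, \y_{1:t-1})}{p(\x_t \vert \vu, \y_{1:t})}.
\]
Next, because the left-hand side does not depend on $\x_t$, I would take expectation of both sides with respect to the filtering distribution $p(\x_t \vert \vu, \y_{1:t})$. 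The first term becomes the data-reconstruction expectation $\mathbb{E}_{p(\x_t \vert \vu, \y_{1:t})}[\log p(\y_t \vert \x_t)]$, and the log-ratio collapses into $-\operatorname{KL}[\, p(\x_t \vert \vu, \y_{1:t}) \,\|\, p(\x_t \vert \vu, \y_{1:t-1})\,]$ by definition of Kullback--Leibler divergence. This yields the first equation of the proposition.

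For the alternative OEnVI objective in Eq.~\eqref{eq:OEnVI_alternative}, I would substitute the identity just proved into Eq.~\eqref{eq:OEnVI_obj} and apply linearity of expectation under $q(\vu)$; the regularizer $\operatorname{KL}[q(\vu)\|p(\vu)]$ is unaffected. The main obstacle is essentially nonexistent: this is an exact evidence identity whose only inputs are Bayes' rule and the SSM conditional independence structure. The only mild care point is ensuring that the KL term is well defined, i.e.\ that the filtering density is absolutely continuous with respect to the predictive density; this holds trivially in the Gaussian/GP setting used here, since the EnKF-induced filtering and prediction distributions are both Gaussians with positive-definite covariances (cf.\ Eqs.~\eqref{eq:predictive_distri_enkf_ip}--\eqref{eq:var_q_x_u}).
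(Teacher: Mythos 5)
Your proof is correct and follows essentially the same route as the paper: the paper's Appendix~\ref{appendix_prop2_proof} likewise writes $\log p(\y_t \vert \vu, \y_{1:t-1})$ as an expectation under the filtering distribution $p(\x_t \vert \vu, \y_{1:t})$, applies Bayes' theorem to split the integrand into $\log p(\y_t \vert \x_t)$ and the log-ratio of prediction to filtering densities, and identifies the latter as $-\operatorname{KL}\left[ p(\x_t \vert \vu, \y_{1:t}) \,\|\, p(\x_t \vert \vu, \y_{1:t-1}) \right]$. Your version merely performs the two interchangeable steps (Bayes' rule, then expectation) in the opposite order, and your added remarks on the emission conditional independence and absolute continuity are correct refinements the paper leaves implicit.
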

\begin{proof}
    The proof can be found in Appendix \ref{appendix_prop2_proof}
\end{proof}
\noindent This insight sheds light on the interplay between data reconstruction and the alignment of filtering and prediction distributions in the EnVI and OEnVI algorithms.

Up to this point,  it is worth noting that in contrast to the existing inference network-based variational algorithms \cite{eleftheriadis2017identification,doerr2018probabilistic,ialongo2019overcoming,curi2020structured,lin2023towards,krishnan2017structured,zhao2020variational,zhao2022streaming}, OEnVI is a simple and straightforward extension of EnVI, which benefits from eliminating the dependence on the additional parametric variational distributions. Previous works have typically employed inference networks that take the entire sequence of observations $\y_{1:T}$ as input, necessitating a significant amount of data for offline training. While it is conceivable to constrain the input length of the inference network to a shorter sequence $l$, such as $\y_{t-l:t}$, this approach still leads to prolonged training times and higher computational requirements for optimizing the inference network parameters \cite{zhao2020variational}. Moreover, storing historical inputs $\y_{t-l:t}$ adds to the storage overhead, making it problematic in situations where historical data duplication and storage are not permissible. In sharp contrast, OEnVI successfully overcomes the aforementioned challenges related to the optimization of inference networks. As a result, it facilitates more efficient learning and inference processes, contributing to potential improved overall performance.
Furthermore, OEnVI offers a principled objective, see Eq.~\eqref{eq:OEnVI_alternative}, by simultaneously minimizing the KL divergence, accounting for data reconstruction error balance, and applying regularization to the transition function to mitigate model overfitting.

\section{Experiments and Results}
\label{sec:experimental_results}
\vspace{-.05in}    
This section presents a comprehensive numerical study of the proposed EnVI and OEnVI. Section \ref{subsec:LGSSM} showcases the filtering performance. In Section \ref{subsec:Learning_kink_func}, we present the system dynamics learning performance. The series forecasting performance of EnVI on various real datasets is illustrated in Section \ref{subsec:systemID_data_forecasting}. Finally, Section \ref{subsec:OEnVI_performance} provides a comprehensive demonstration of the performance of the OEnVI online algorithm.  More details regarding the experimental setup can be found in supplementary material \cite{lin2023ensemble}, and the accompanying source code is publicly available online\footnote{\url{https://github.com/zhidilin/gpssmProj}}.

\begin{figure*}[t!]
    \centering
    \subfloat{\includegraphics[width=.96\linewidth]{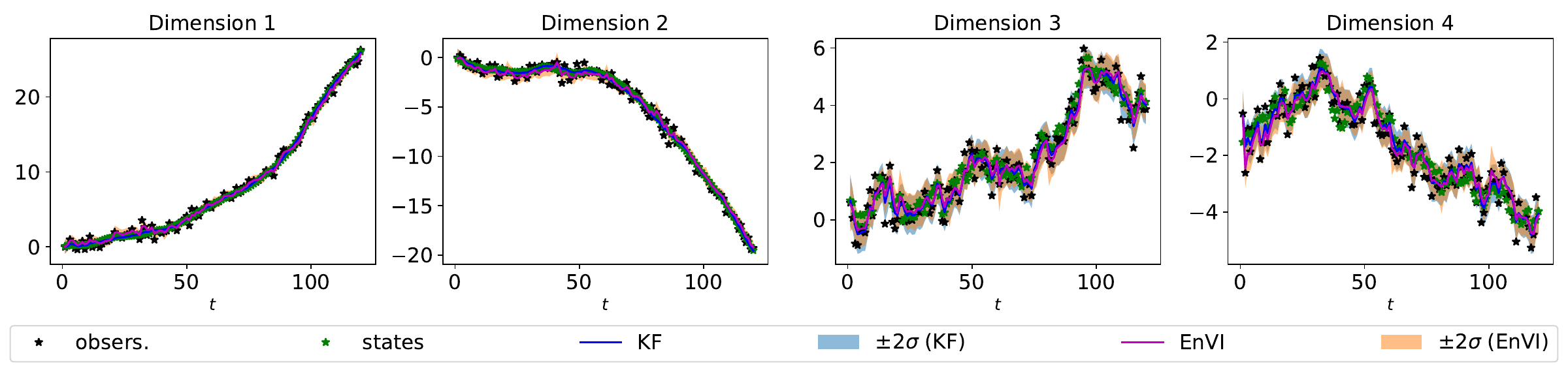}} \\ \vspace{-.07in}    
    \subfloat{\includegraphics[width=.96\linewidth]{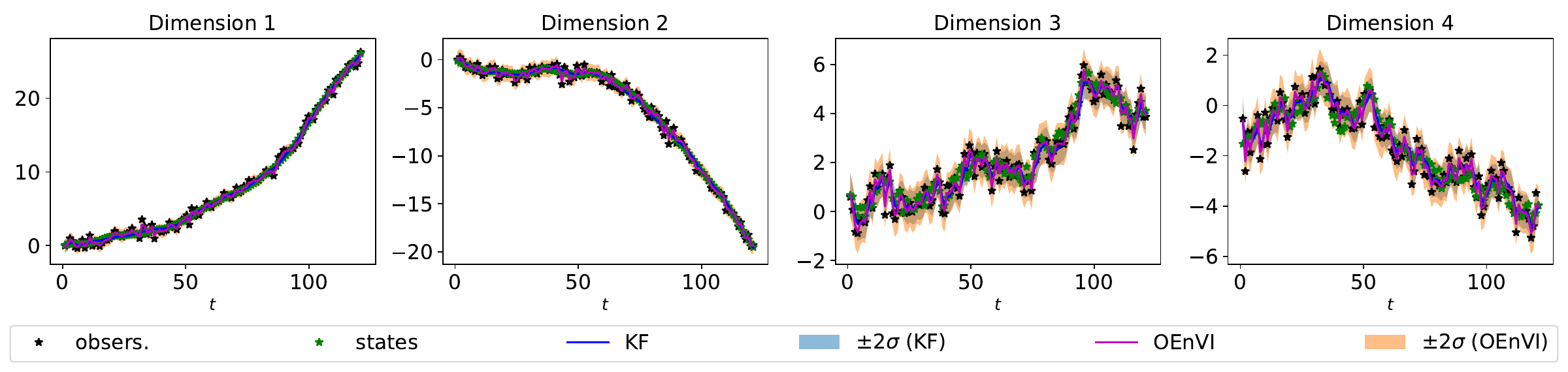}}  \vspace{-.07in}    
    \caption{EnVI (\textbf{top}) \& OEnVI (\textbf{bottom}) on state inference in linear Gaussian SSM. The RMSE of the latent state estimates for KF, EnVI, and OEnVI are 0.5252, 0.6841, and 0.7784, respectively; the RMSE between the observations and the latent states is 0.9872. \vspace{-.12in}}
    \label{fig_maintext:LGSSMs_EnVIs}
\end{figure*}

\subsection{Learning and Inference in Linear Gaussian SSMs} \label{subsec:LGSSM}
We investigate the learning and inference capacity of EnVI and OEnVI, by using a linear Gaussian state-space model (LGSSM) where exact inference of latent state is applicable (i.e. KF). Specifically, we use the following LGSSM,  a car tracking example given in the textbook \cite{sarkka2013bayesian}, to generate observation data, $\y_{1:T}$, for training EnVI and OEnVI,
\begin{subequations}
\label{eq:LGSSMs}
\begin{align}
\mathbf{x}_t & =\mathbf{H} \mathbf{x}_{t-1}+\mathbf{v}_{t-1}, & \mathbf{v}_{t-1} & \sim \mathcal{N}(\mathbf{0}, \mathbf{Q}), \\
\mathbf{y}_t & =\bm{C} \mathbf{x}_t+\mathbf{e}_t, & \mathbf{e}_t & \sim \mathcal{N}(\mathbf{0}, \mathbf{R}),
\end{align}
\end{subequations}
where the state and the observation are both four dimensional, and $\boldsymbol{C} = \bm{I}_{4\times 4}, \mathbf{R} = \sigma_{\mathrm{R}}^2 \bm{I}_{4\times 4}$ with $\sigma_{\mathrm{R}} = 0.5$;
\begin{equation}
\mathbf{H}=\left(\begin{array}{cccc}
1 & 0 & \Delta t & 0 \\
0 & 1 & 0 & \Delta t \\
0 & 0 & 1 & 0 \\
0 & 0 & 0 & 1
\end{array}\right)
\end{equation}
and
\begin{equation}
\mathbf{Q}=\left(\begin{array}{cccc}
\frac{q_1^{\mathrm{c}} \Delta t^3}{3} & 0 & \frac{q_1^{\mathrm{c}} \Delta t^2}{2} & 0 \\
0 & \frac{q_2^{\mathrm{c}} \Delta t^3}{3} & 0 & \frac{q_2^{\mathrm{c}} \Delta t^2}{2} \\
\frac{q_1^{\mathrm{c}} \Delta t^2}{2} & 0 & q_1^{\mathrm{c}} \Delta t & 0 \\
0 & \frac{q_2^{\mathrm{c}} \Delta t^2}{2} & 0 & q_2^{\mathrm{c}} \Delta t
\end{array}\right)
\end{equation}
with $\Delta t = 0.1, q_1^{\mathrm{c}} = q_2^{\mathrm{c}} = 1$. 

We begin by generating $T=120$ training observations.  For EnVI, we employ $1000$ epochs/iterations for training, but convergence is typically achieved approximately $300$ iterations. In OEnVI, the parameters $\btheta$ and $\bzeta$ are updated once per time step $t$. Both EnVI and OEnVI employ $15$ inducing points, a setting that will be used for subsequent experiments unless otherwise specified. 
We report the state inference results, which are depicted in Fig.~\ref{fig_maintext:LGSSMs_EnVIs}.  It can be observed that the state inference performance of EnVI and OEnVI is comparable to that of the KF in terms of state-fitting root mean square error (RMSE), despite being trained solely on noisy observations without any physical model knowledge.  

Another finding is that though OEnVI incurs a lower training cost compared to EnVI, this advantage comes at the expense of inadequate learning of the latent dynamics, leading to a less accurate estimation of the latent states when compared to EnVI. The discrepancy is evident in Fig.~\ref{fig_maintext:LGSSMs_EnVIs}, where OEnVI exhibits larger estimation RMSE and greater estimation uncertainty for the latent states in comparison to EnVI and KF. Notably, EnVI relies on offline training, resulting in an uncertainty quantification that closely approaches the optimal estimate, the KF estimate. Nevertheless, it is essential to mention that, with continuous online data arrival, OEnVI can eventually achieve a comparable state estimation performance as EnVI. We observed that after observing $360$ data points, OEnVI achieves a latent state RMSE estimation of $0.6512$. Further details on this aspect of the results are provided in Supplement~\ref{supp_subsec:OEnVI_LGSSMs}, \cblue{where we also show and discuss the inference performance under different emission coefficient matrices $\bm{C}$. }

\begin{table*}[t!]
	\centering
	\caption{Comparison of our method with other competitors on the kink function dataset. Shown are mean and standard errors over five repetitions of the \textbf{fitting MSE (lower is better)} and the \textbf{log-density (higher is better)} of the kink function varying the emission noise variance $\sigma_{\mathrm{R}}^2$. } \vspace{-.05in}
	\setlength{\tabcolsep}{2mm}{
		\centering
		\begin{tabular}{ r  lll}
			\toprule
			Method & $\sigma_{\mathrm{R}}^2=0.008$ (MSE $\mid$ Log-Likelihood) & $\sigma_{\mathrm{R}}^2=0.08$ (MSE $\mid$ Log-Likelihood) & $\sigma_{\mathrm{R}}^2=0.8$ (MSE $\mid$ Log-Likelihood)  \\
			\midrule
			{vGPSSM \cite{eleftheriadis2017identification}}    
                        & $1.0410 \!\pm\! 0.7426 \mid \!-\!27.5981 \! \pm \! 19.7817$  
                        & $1.6390 \!\pm\! 0.6783  \mid  \!-\!30.9557 \!\pm \!16.9218 $ 
                        & $1.9584 \!\pm\! 0.9655 \mid \!-\!56.5997\!\pm\! 37.8221$      \\
                {VCDT \cite{ialongo2019overcoming}}  
                        & $0.2057  \!\pm\! 0.2219 \mid \!-\!1.058 \!\pm\! 1.5005$  
                        & $0.1934 \!\pm\! 0.0140  \mid \!-\!0.5867 \!\pm\! 0.2610$ 
                        & $ 1.4035 \!\pm\! 0.6470  \mid \!-\!3.8092\!\pm\! 0.6588$   \\
                {AD-EnKF \cite{chen2022autodifferentiable}} 
                        & $0.0285\!\pm\! 0.0318 \mid \!-\!3.6282 \! \pm \! 6.3514$   
                        & $1.5246 \! \pm \! 0.9734 \mid \!-\!242.2795 \!\pm\!194.6741$ 
                        & $1.3489 \! \pm \! 0.3102 \mid \!-\!267.7068\!\pm\!62.0488$  \\
			{EnVI (ours)} 
                        & $\bm{0.0046} \!\pm\! \bm{0.0025} \mid \bm{1.1060} \!\pm\! \bm{0.0381}$  
                        & $\bm{0.0536} \!\pm\! \bm{0.0232} \mid \bm{0.1025} \!\pm\! \bm{0.1075}$ 
                        & $\bm{0.5315} \!\pm\! \bm{0.1542} \mid \bm{\!-\!1.0439} \!\pm\! \bm{0.1714}$ \\
			\bottomrule
		\end{tabular}
	}
 \vspace{-.12in}
 \label{tab:synthetic_dataset_MSE_LL}
\end{table*}
\begin{figure*}[t!]
    \centering
    \subfloat[vGPSSM]{\includegraphics[width=.48\textwidth]{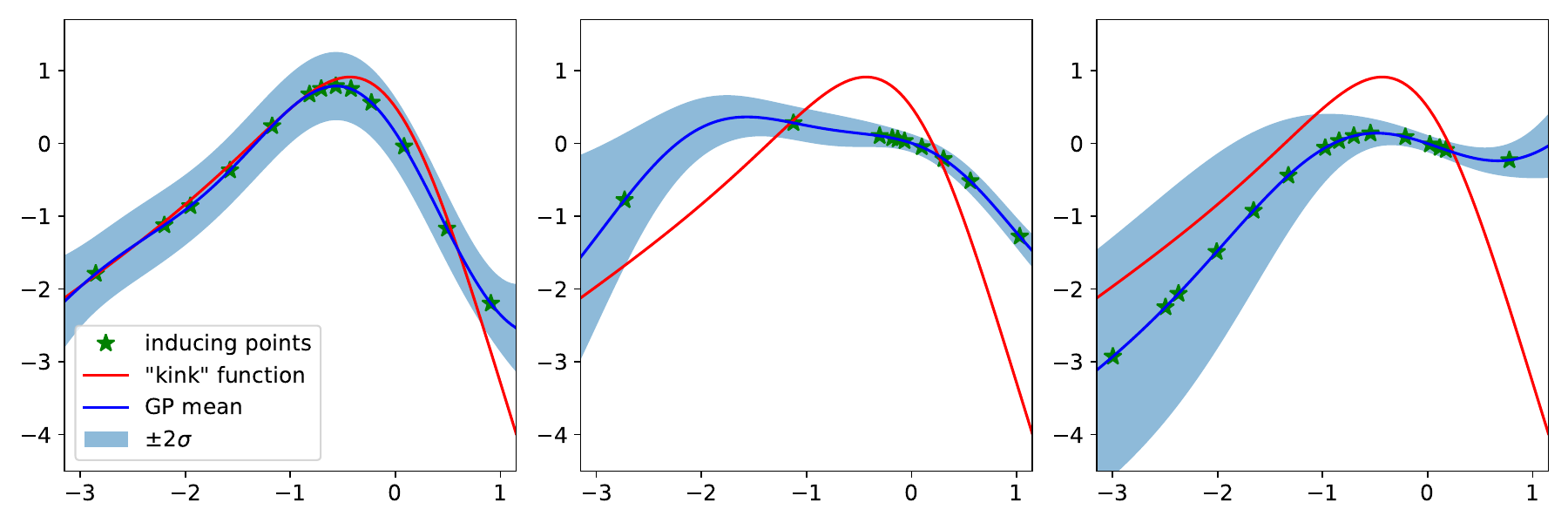} \label{subfig:kink_envi_vgpssm}} \hfill 
    \subfloat[VCDT]{\includegraphics[width=.48\textwidth]{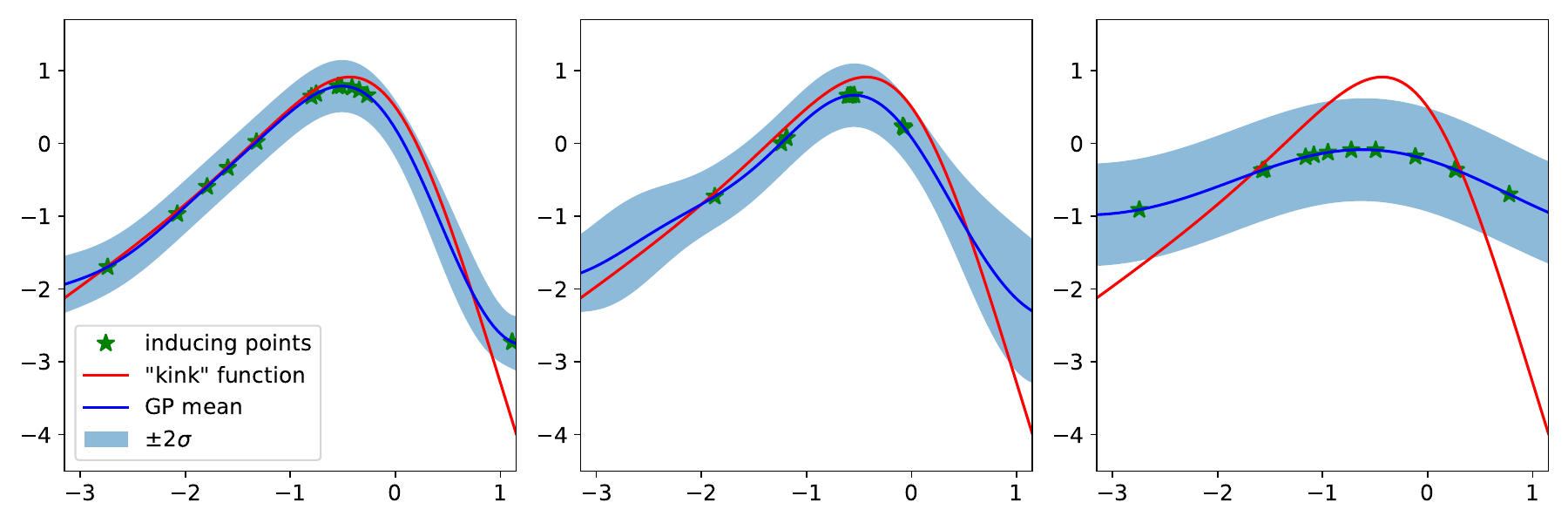} \label{subfig:kink_envi_vcdt}}
    \vspace{-.05in}
    
    \subfloat[AD-EnKF]{\includegraphics[width=.48\textwidth]{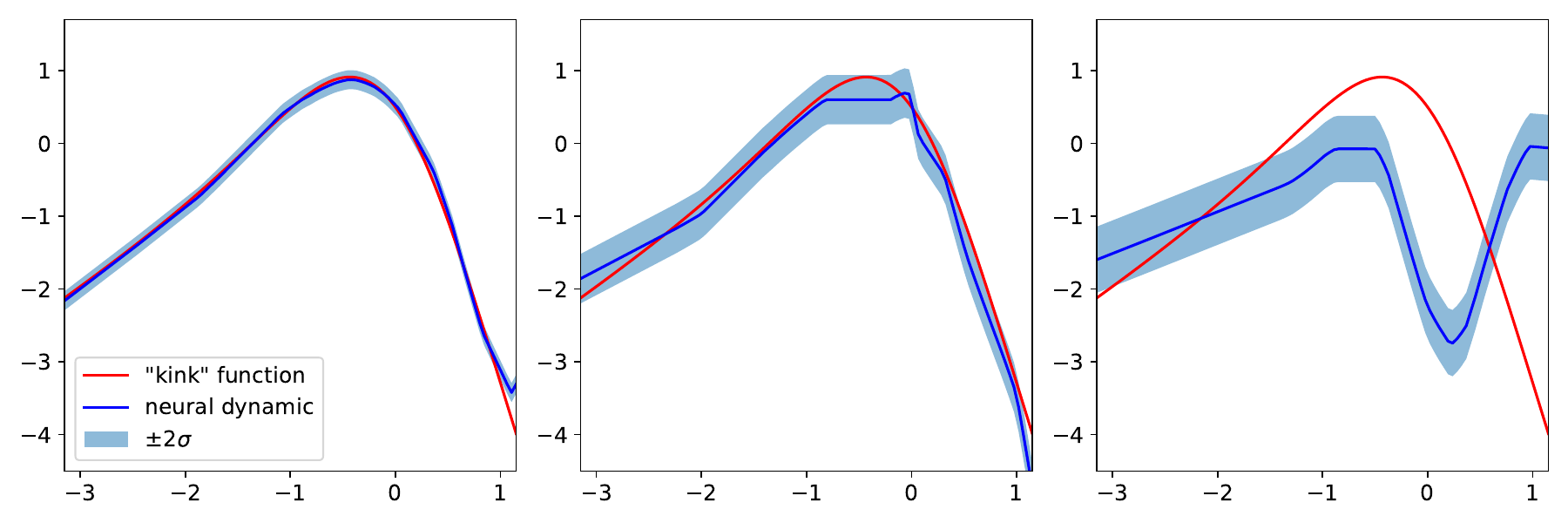} \label{subfig:kink_envi_AD-EnKF}} \hfill 
    \subfloat[EnVI]{\includegraphics[width=.48\textwidth]{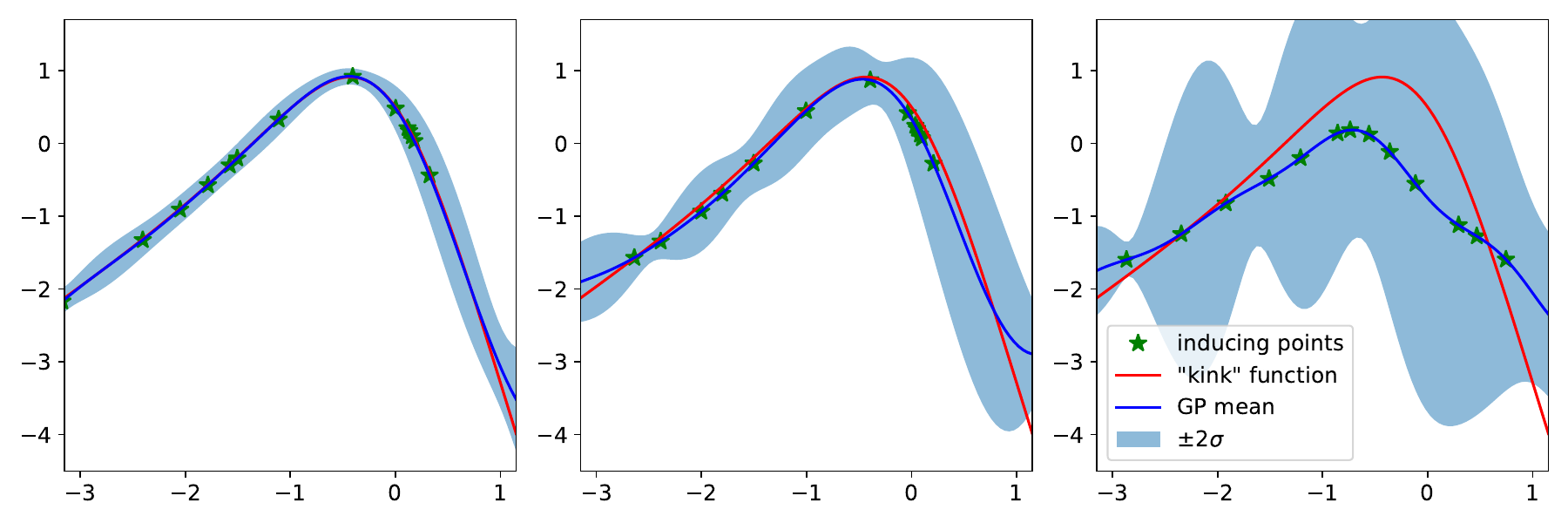} \label{subfig:kink_envi_EnVI}} 

    \caption{Kink transition function learning performance (mean $\pm$ $2 \sigma$) using various methods across different levels of emission noise ($\sigma_{\mathrm{R}}^2 \in \{0.008, 0.08, 0.8\}$, from left to right).}
    \vspace{-.12in}
    \label{fig:kink_EnVI}
\end{figure*}

\vspace{-.1in}
\subsection{System Dynamics Learning} 
\label{subsec:Learning_kink_func}     
This subsection demonstrates the superior capability of EnVI in learning latent dynamics for GPSSMs. To evaluate its performance, we utilize a 1-D synthetic dataset called the \textit{kink} function dataset, which is generated from a dynamical system described by Eq.~\eqref{eq:kink_function}, where the nonlinear, smooth and time-invariant function $f(\x_t)$ is called the ``kink'' function,
\begin{subequations}	
	\label{eq:kink_function}
	\begin{align}
		& \x_{t+1}  = \underbrace{0.8 + \left(\x_t+0.2\right)\left[1 - \frac{5}{1+ \exp(-2\x_t)}\right]}_{\triangleq \text{ ``kink function'' }f(\x_t)} + \mathbf{v}_{t},\\
		& \y_t = \x_t + \mathbf{e}_t, \ \ \ \mathbf{v}_{t} \sim \cN(0, \sigma_{\mathrm{Q}}^2), \ \ \ \mathbf{e}_t\sim \cN(0, \sigma_{\mathrm{R}}^2).
	\end{align}
\end{subequations}
It is worth mentioning that this specific dynamical system has been extensively employed in GPSSM literature to evaluate the accuracy of the learned GP transition posterior \cite{lindinger2022laplace}. 

In showcasing the superior performance of EnVI, we compare it against several prominent competing methods, namely vGPSSM \cite{eleftheriadis2017identification}, VCDT \cite{ialongo2019overcoming}, and AD-EnKF \cite{chen2022autodifferentiable}. Our implementation of VCDT incorporates an inference network to address the linear growth in the number of variational parameters. The vGPSSM method adheres to the original paper's implementation \cite{eleftheriadis2017identification}, while the AD-EnKF is utilized as per the default software package available online\footnote{\url{https://github.com/ymchen0/torchEnKF}}. 
The training data sequences are generated by fixing $\sigma_\mathrm{Q}^2$ at 0.05 and systematically vary $\sigma_\mathrm{R}^2$ within the range of $\{0.008, 0.08, 0.8\}$.  As a result, we generate three sets of $T=600$ observations each for training. To ensure a fair comparison in the latent space, we adhere to Ialongo \textit{et al}. \cite{ialongo2019overcoming} and keep the emission model fixed to the true generative ones for all methods, while allowing the transition to be learned. Further details, including the description of the setup for the aforementioned algorithms, are provided in Supplement~\ref{supp_subsec:EnVI_kink} and the accompanying source code. The result is depicted in Table~\ref{tab:synthetic_dataset_MSE_LL} and visualized in Fig.~\ref{fig:kink_EnVI}. We observe that EnVI consistently excels in system dynamic learning and exhibits superior learning robustness compared to existing methods.
We next conduct two ablation studies.

\textbf{\textit{EnVI} vs. \textit{Inference Network-Based Methods}}. 
Based on the numerical results presented in Table \ref{tab:synthetic_dataset_MSE_LL}, we can find that the EnVI exhibits superior dynamic learning performance compared to vGPSSM and VCDT, both of which rely on an inference network. Specifically, as illustrated in Fig.~\ref{fig:kink_EnVI}, vGPSSM faces more challenges in dynamic learning, while VCDT, categorized under the NMF paradigm, only performs well under more minor noise conditions ($\sigma_{\mathrm{R}}^2 = 0.008$ and $\sigma_{\mathrm{R}}^2 = 0.08$). In contrast, EnVI effectively learns the GP transition well, even in the high noise setting with $\sigma_{\mathrm{R}}^2 = 0.8$.  

The primary reason for this discrepancy is the increased model and computational complexity arising from additional inference network parameters, which hinder effective training. Furthermore, the inference network-based methods are prone to convergence into various unfavorable local optima, demonstrating reduced robustness. As a consequence, the learning performance of such methods often fluctuates significantly, see Table \ref{tab:synthetic_dataset_MSE_LL}. In contrast, EnVI inherits the benefits of EnKF and avoids the need to optimize additional variational parameters from the inference network, making it more amenable to optimization and demonstrating enhanced robustness. 

Our experimental findings consistently demonstrate that EnVI exhibits rapid convergence compared to vGPSSM and VCDT, owing to its streamlined parameterization. For instance, as shown in Fig.~\ref{fig:MSE_kink}, when considering $\sigma_{\mathrm{R}}^2 = 0.008$, EnVI achieves convergence after 300 iterations, whereas both vGPSSM and VCDT require many more iterations. \cblue{Moreover, EnVI also exhibits a noticeable reduction in piratical runtime per iteration compared to the two competitors, which need to optimize additional inference network parameters.
This underscores the efficiency improvement brought by streamlining the inference process using EnKF.
}

\begin{figure*}[t!]
    \centering
    \subfloat{\includegraphics[width=.48\linewidth]{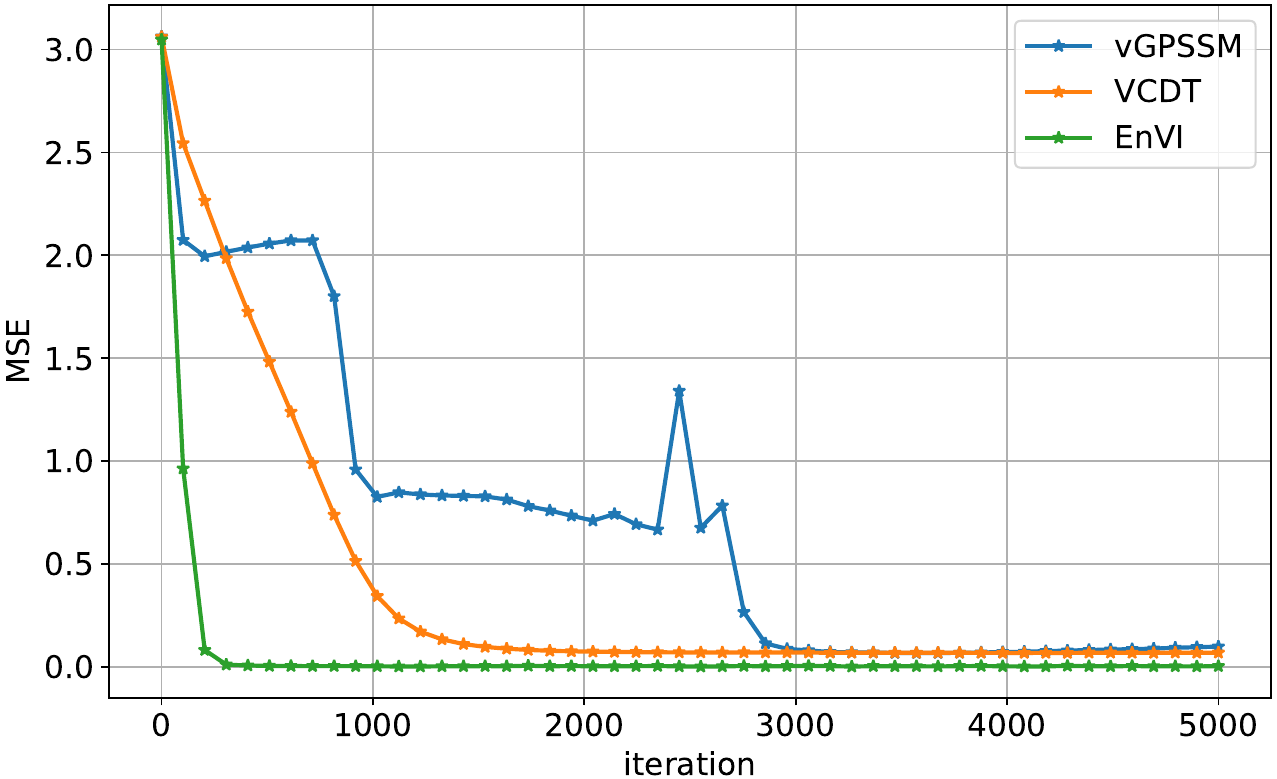}} \hfill
    \subfloat{\includegraphics[width=.485\linewidth]{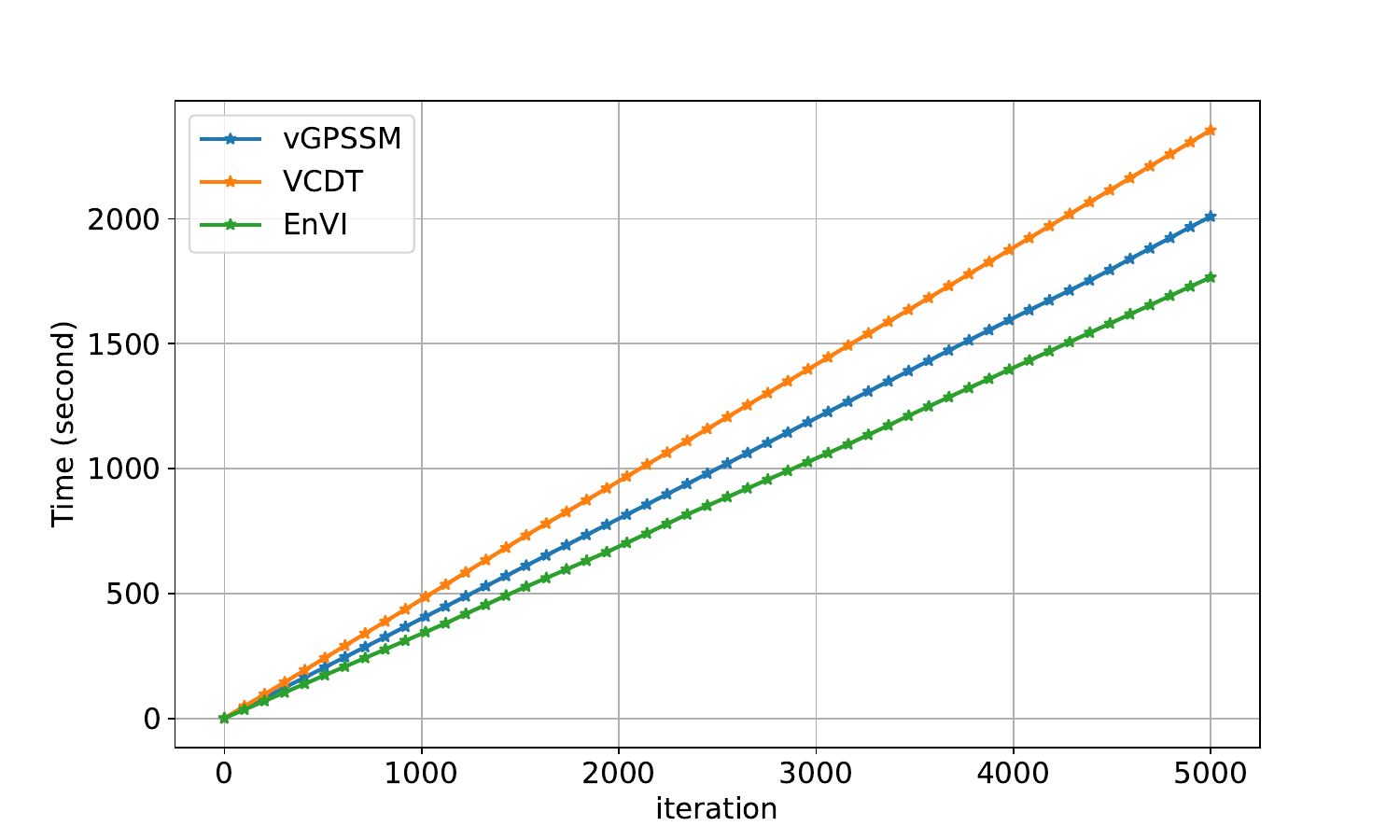}} 
    \vspace{-.1in}
    \caption{
    Kink function learning performance against the training iterations. EnVI \cblue{exhibits} rapid convergence compared to vGPSSM and VCDT.
    \vspace{-.15in}
    }
    \label{fig:MSE_kink}
\end{figure*}

\textbf{\textit{EnVI} vs. \textit{AD-EnKF}}.
The primary difference between EnVI and AD-EnKF lies in their data-driven modules. AD-EnKF employs a parametric model, specifically a neural network (so it is also known as a deep SSM (DSSM)), while EnVI utilizes a non-parametric GP. Consequently, the dynamics learned by AD-EnKF exhibit a tendency to be over-confident, as depicted in Fig.~\ref{fig:kink_EnVI}, as it does not account for uncertainties from the learned transition function. Moreover, the absence of regularization during the training of the neural network in AD-EnKF (as indicated in Eq.~\eqref{eq:loss_AD-EnKF}) renders the method susceptible to overfitting and being trapped in suboptimal solutions during the training process. As a result, the performance of different repetitions can vary significantly, as shown in Table \ref{tab:synthetic_dataset_MSE_LL}.


%
\vspace{-.1in}
\subsection{Time Series Data Forecasting} \label{subsec:systemID_data_forecasting}  \vspace{-.03in}
This subsection further demonstrates the series prediction performance of the proposed EnVI algorithm on five public real-world system identification datasets\footnote{\url{https://homes.esat.kuleuven.be/~smc/daisy/daisydata.html}}, which consist of one-dimensional time series of varying lengths between $296$ to $1024$ data points. In addition to the comparison methods discussed in Section \ref{subsec:Learning_kink_func}, EnVI is also compared with several other competitors, including two NMF class methods, PRSSM \cite{doerr2018probabilistic}, ODGPSSM \cite{lin2022output}, and two inference network-based methods, DKF \cite{krishnan2017structured} and CO-GPSSM \cite{lin2023towards}, as depicted in Table \ref{tab:systemidentifcation}. For each method, the first half of the sequence in every dataset is utilized as training data, with the remaining portion designated for testing. Standardization of all datasets is conducted based on the training sequence, and the latent state dimension, $d_x$, is consistently set to $4$ for all datasets. Table \ref{tab:systemidentifcation} reports the series prediction results, wherein the RMSE is averaged over 50-step ahead forecasting.

Table~\ref{tab:systemidentifcation} reveals that EnVI outperforms almost all methods across the five datasets. Specifically, EnVI demonstrates superior performance among the MF and NMF methods. Compared to PRSSM and ODGPSSM \cite{doerr2018probabilistic,lin2022output}, which assume equality between variational and prior distributions of latent states, EnVI employs EnKF to filter latent states, leading to an enhanced system dynamics learning performance, and consequently, improving the series predictions. Compared to the inference network-based methods, like VCDT \cite{ialongo2019overcoming} and the MF class methods vGPSSM \cite{eleftheriadis2017identification} and CO-GPSSM \cite{lin2023towards},
EnVI eliminates the need to optimize inference network parameters. From an optimization solution perspective,  optimizing the inference network leads to a vast solution space for the variational distribution. Consequently, despite their adequate approximation capabilities for the true posterior distribution, these inference network-based methods often fall short of realizing their theoretical potential in empirical performance due to numerous bad local optimums \cite{lindinger2022laplace}.  In contrast,  EnVI imposes model-based constraints on the variational distribution by the EnKF, narrowing the solution space and yielding significantly improved and robust empirical performance.

Compared to the DSSM methods, EnVI offers performance advantages due to its non-parametric GP model. In contrast to DKF \cite{krishnan2017structured}, which utilizes neural networks to model variational distributions and nonlinear SSMs, EnVI employs GPs with much less model parameters, making it particularly suitable for small datasets. While AD-EnKF \cite{chen2022autodifferentiable} outperforms DKF, its deterministic neural network modeling approach and the absence of regularization in its objective function cause it to lag behind EnVI in forecasting performance.

\begin{table*}[ht!]
	\centering
	\caption{Prediction performance (RMSE) of the different models on the system identification datasets. Mean and standard deviation of the prediction results are shown across five seeds. The lowest RMSE is highlighted in bold.} \vspace{-.05in}
	\setlength{\tabcolsep}{2.8mm}{
		\centering
		\begin{tabular}{c| r | ccccc }
			\toprule
                Category & Method & \multicolumn{1}{c}{Actuator} &  \multicolumn{1}{c}{Ball Beam} &  \multicolumn{1}{c}{Drive} &  \multicolumn{1}{c}{Dryer} &  \multicolumn{1}{c}{Gas Furnace}\\
   			\midrule
			\multirow{2}{1.5cm}{\centering DSSMs} 
			& \textbf{DKF} \cite{krishnan2017structured}
			& $1.204 \pm 0.250$   
			& $0.144 \pm 0.005$  
			& $0.735 \pm 0.001$   
			& $1.465 \pm 0.087$   
			& $5.589 \pm 0.066$    \\
   			& \textbf{AD-EnKF} \cite{chen2022autodifferentiable}
			&  $0.705 \pm 0.117$  
			&  $0.057 \pm 0.006$
			&  $0.756 \pm 0.114$ 
			&  $0.182 \pm 0.053$ 
			&  $1.408 \pm 0.090$  \\
			\midrule
			\multirow{2}{1.5cm}{\centering MF-based Methods}
			& \textbf{vGPSSM}  \cite{eleftheriadis2017identification}
			&  $1.640 \pm 0.011$ 
			&  $0.268 \pm 0.414$
			&  $0.740 \pm 0.010$
			&  $0.822 \pm 0.002$
			&  $3.676 \pm 0.145$   \\
			& \textbf{CO-GPSSM} \cite{lin2023towards}
			&  $0.803 \pm 0.011$
			&  $0.079 \pm 0.018$
			&  $0.736 \pm 0.007$
			&  $0.366 \pm 0.146$
			&  $1.898 \pm 0.157$ \\
			\midrule
			\multirow{4}{1.5cm}{\centering NMF-based Methods} 
			& \textbf{PRSSM}  \cite{doerr2018probabilistic}
                &  $0.691 \pm 0.148$
                &  $0.074 \pm 0.010$
                &  $\bm{0.647} \pm \bm{0.057}$
                &  $0.174 \pm 0.013$  
                &  $1.503 \pm 0.196$   \\ 
			& \textbf{ODGPSSM}  \cite{lin2022output}
                &  $0.666 \pm 0.074$
                &  $0.068 \pm 0.006$
                &  $0.708 \pm 0.052$ 
                &  $0.171 \pm 0.011$  
                &  $1.704 \pm 0.560$ \\
   			& \textbf{VCDT}  \cite{ialongo2019overcoming}
			&  $0.815 \pm 0.012$
			&  $0.065 \pm 0.005$
			&  $0.735 \pm 0.005$
			&  $0.667 \pm 0.266$
			&  $2.052 \pm 0.163$ \\
   			& \textbf{EnVI} (ours)   
			&  $\bm{0.657}\pm \bm{0.095}$
			&  $\bm{0.055} \pm \bm{0.002}$
			&  $0.703 \pm 0.050$
			&  $\bm{0.125} \pm \bm{0.017}$
			& $\bm{1.388} \pm \bm{0.123}$ \\
			\bottomrule
	\end{tabular}}
	\label{tab:systemidentifcation}
\vspace{-.12in}
\end{table*}
\begin{figure*}[t!]
    \centering
    \subfloat[\cblue{True} and \cred{inferred} latent trajectory using OEnVI]{\includegraphics[width=.33\textwidth]{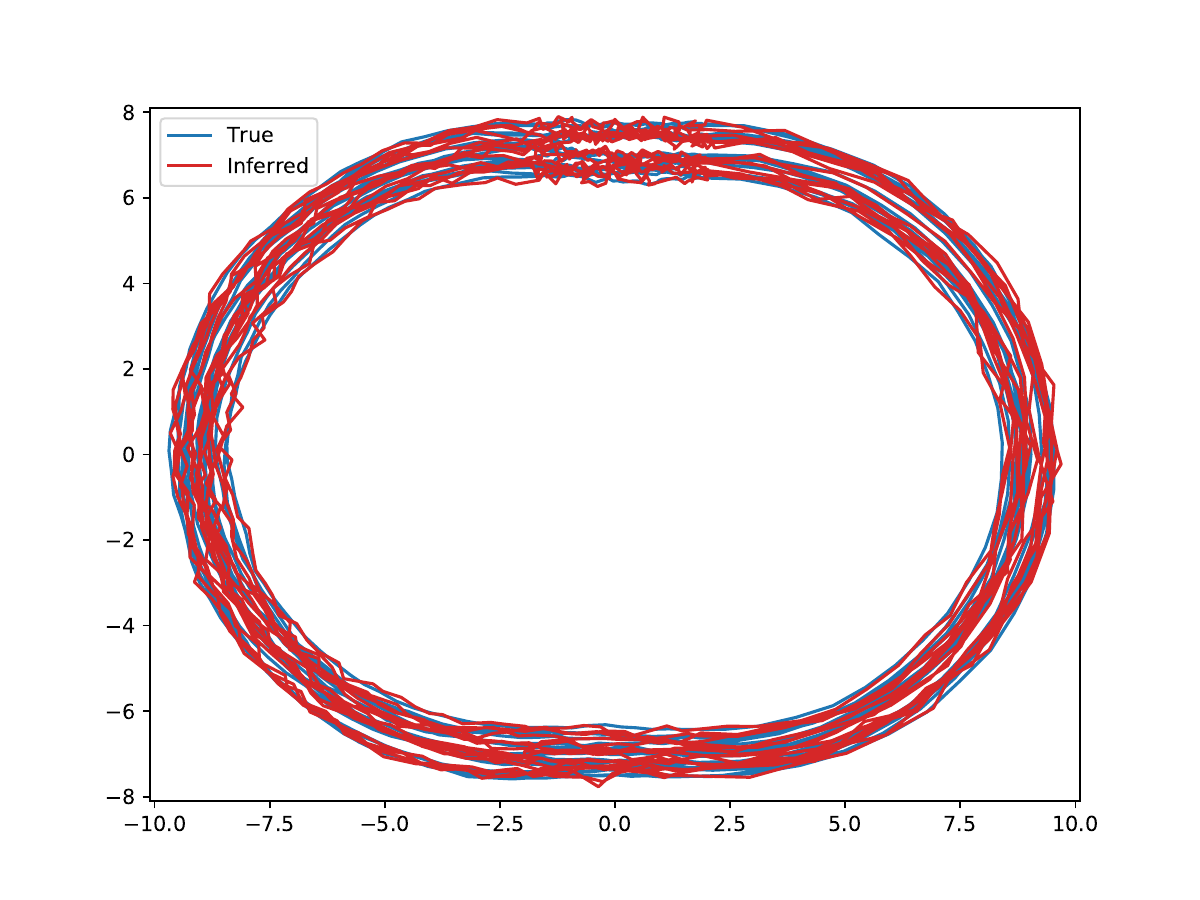} \label{fig:NASCAR_a}} 
    \subfloat[\cblue{True} and \cred{inferred} latent trajectory using SVMC]{\includegraphics[width=.33\textwidth]{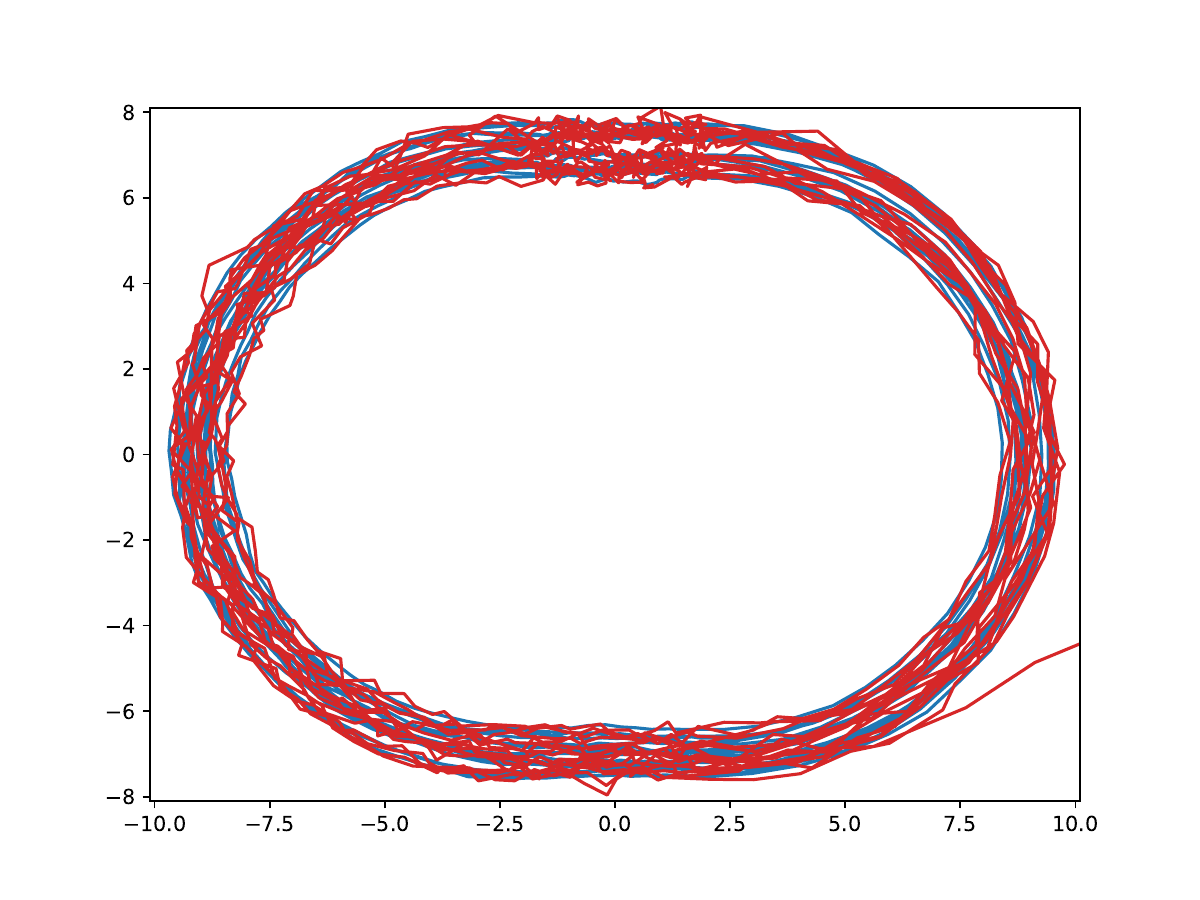} \label{fig:NASCAR_b}} 
    \subfloat[\cblue{True} and \cred{inferred} latent trajectory using VJF]{\includegraphics[width=.33\textwidth]{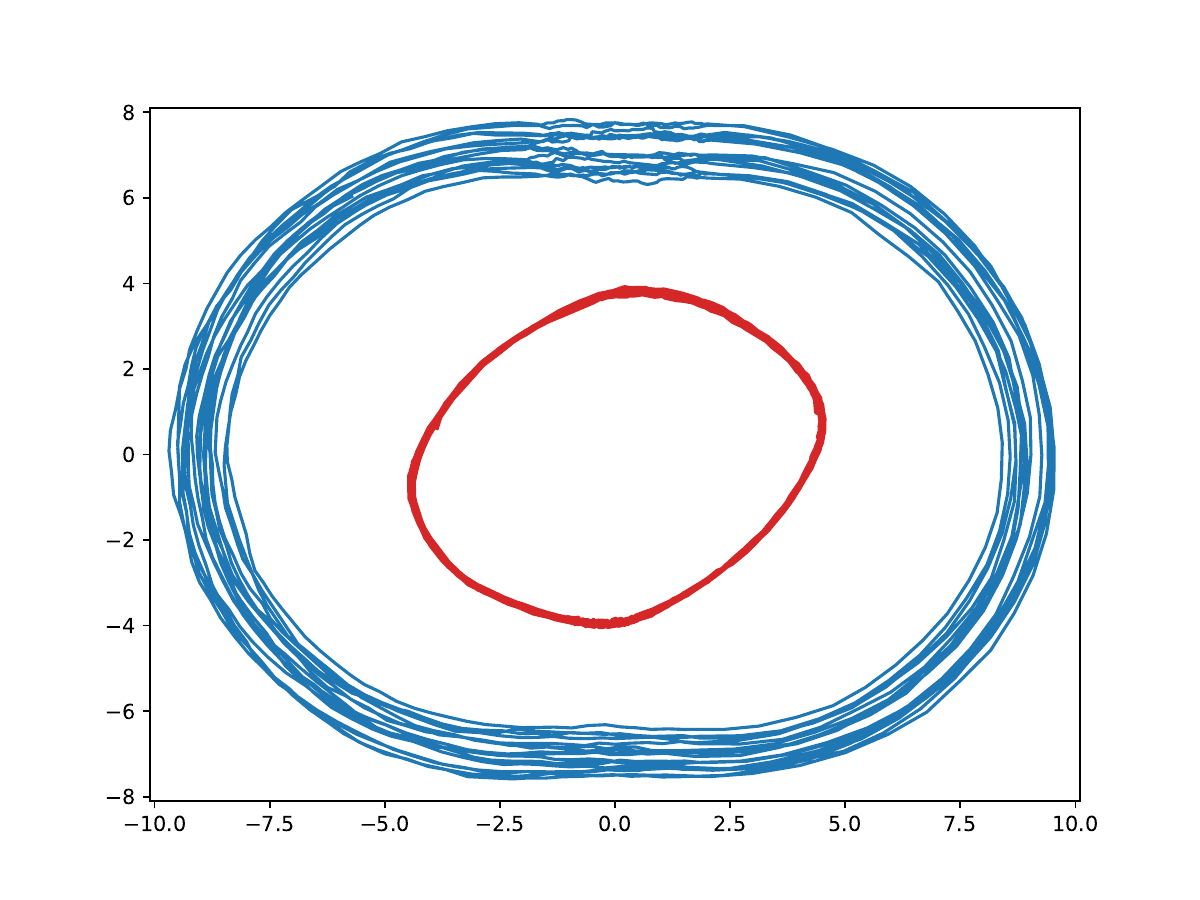} \label{fig:NASCAR_c}} 

    \subfloat[Filtering and prediction using OEnVI (ours)]{\includegraphics[width=.33\textwidth]{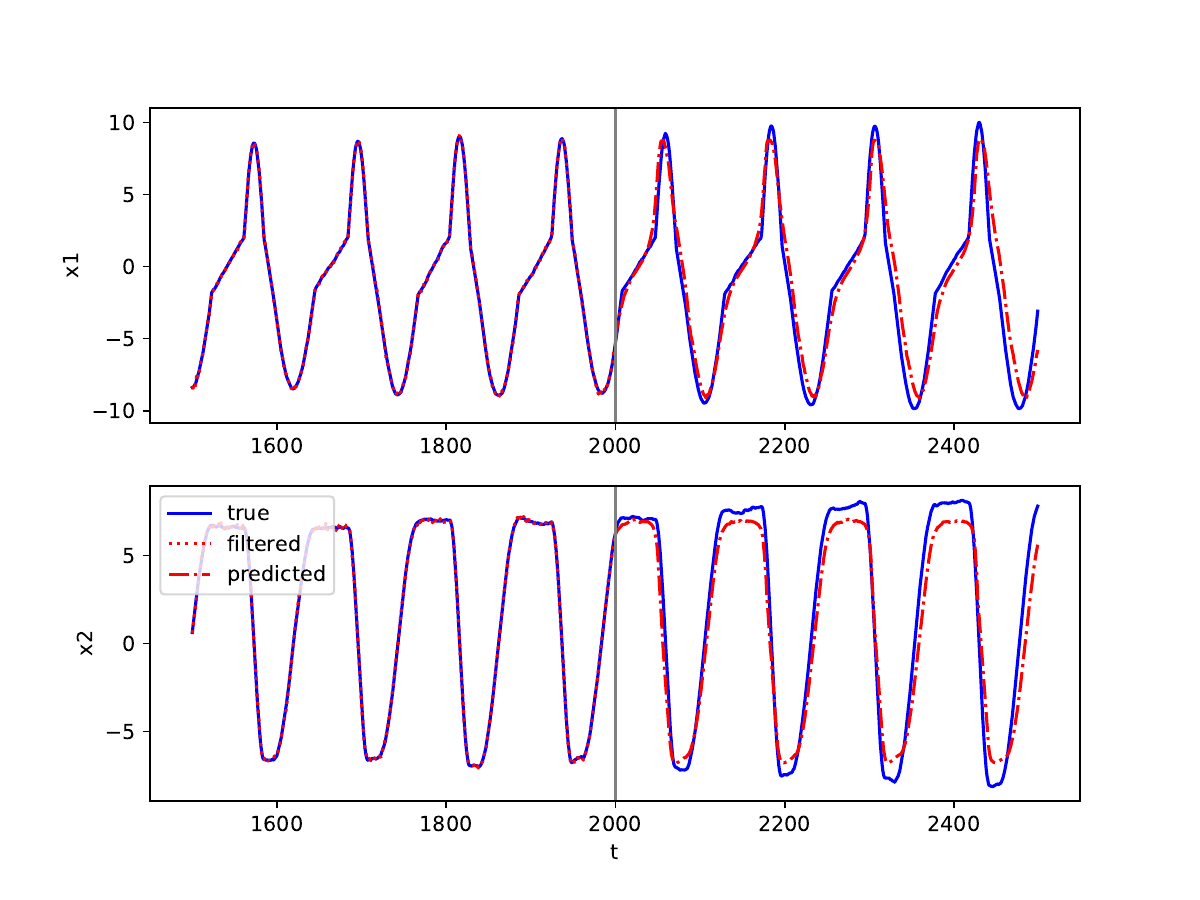} \label{fig:NASCAR_d}} 
    \subfloat[Filtering and prediction using SVMC \cite{zhao2022streaming}]{\includegraphics[width=.33\textwidth]{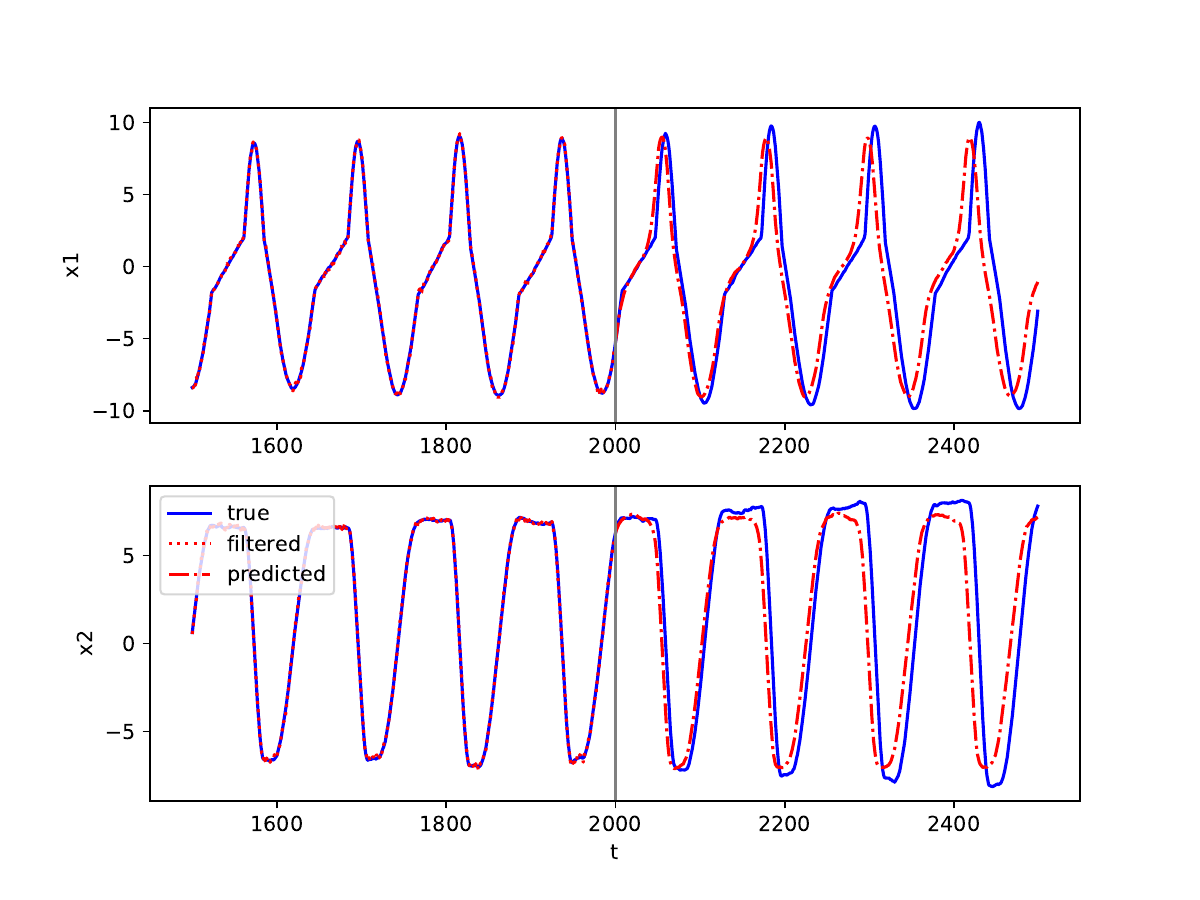}
    \label{fig:NASCAR_e}} 
    \subfloat[Filtering and prediction using VJF \cite{zhao2020variational}]{\includegraphics[width=.33\textwidth]{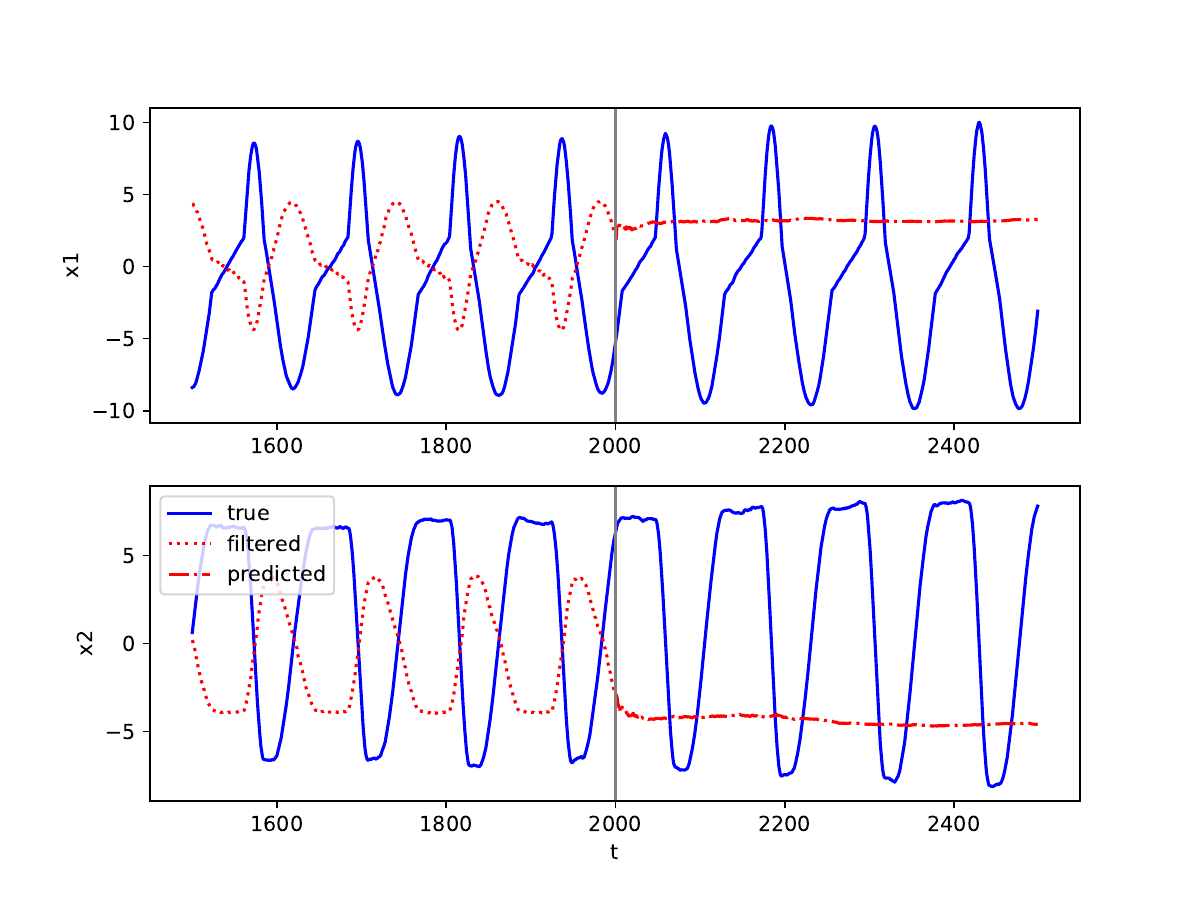} \label{fig:NASCAR_f}} 
    
    \caption{Online NASCAR$^\circledR$ dynamics learning results of the three online algorithms. The prediction RMSE values for OEnVI, SVMC, and VJF are 1.8780, 4.6682, and 10.8499, respectively.}
    \label{fig:NASCAR_OEnVI}
    \vspace{-.05in}
\end{figure*}

\vspace{-.1in}
\subsection{Online Learning and Inference Using OEnVI} 
\label{subsec:OEnVI_performance}  
We next evaluate the performance of OEnVI.  We compare OEnVI with two very recent competitive online learning algorithms, specifically SVMC \cite{zhao2022streaming}, which models the variational distribution of latent states using sample particles, and VJF \cite{zhao2020variational}, where the variational distribution of latent states is modeled by an inference network. More numerical results of OEnVI about learning kink dynamical function can be found in Supplement~\ref{supp_subsec:OEnVI_more}.

We evaluate the three online learning methods using NASCAR$^\circledR$ data, a dataset previously utilized in \cite{zhao2022streaming}, showcasing dynamics akin to recurrent switching linear dynamical systems \cite{linderman2017bayesian}. The latent state trajectory faithfully reproduces the layout of a NASCAR$^\circledR$ track, as depicted in Fig.~\ref{fig:NASCAR_a}.  We train these three methods with $2000$ observations and test them with $500$ observations, both generated from $\y_t = \bm{C} \x_t + \mathbf{e}_t$, where $\bm{C}$ is a 10-by-2 matrix generated randomly \cblue{(strictly following the settings of SVMC \cite{zhao2022streaming}),} and $\mathbf{e}_t \!\sim\! \cN(\bm{0}, 0.1^2\bm{I}_{2\times2})$.  The SVMC and VJF algorithms are implemented using the code publicly provided online\footnote{\url{https://github.com/catniplab/svmc}}\footnote{\url{https://github.com/catniplab/vjf}}.  

Figs.~\ref{fig:NASCAR_a}--\ref{fig:NASCAR_c} depict the true states (in blue) and the latent states (in red) inferred by the three methods. The results clearly indicate that EnVI and SVMC swiftly captured the real state and maintained accuracy, while VJF faced challenges, primarily due to its difficulty in optimizing the parameters in the inference network. The detailed comparison with SVMC and VJF, illustrated in Figs.~\ref{fig:NASCAR_d}--\ref{fig:NASCAR_f}, highlights the superior accuracy of OEnVI in both inference and prediction of latent states. These empirical findings emphasize the efficacy of OEnVI, particularly in its use of EnKF to approximate the variational distribution, demonstrating its advancement over VJF using inference networks, and SVMC using PF methods.

\section{Conclusion}
\label{sec:conclusion}   
In this paper, we have introduced EnVI, a novel NMF algorithm tailored for GPSSMs, which integrates EnKF into a variational inference framework. Additionally, we have presented an extended online version, OEnVI. Both algorithms eliminate the necessity for heavy parameterization like inference networks and shape the variational distribution over latent states through the model-based EnKF. Leveraging the inherent differentiable nature along with the modern automatic differentiation tools, the proposed EnVI and OEnVI can enhance efficiency and algorithmic robustness while improving learning and inference performance compared to existing methods. Detailed analysis and fresh insights for the proposed algorithms are provided to enhance their interpretability. Empirical experiments conducted on diverse real and synthetic datasets, evaluating filtering, prediction, and dynamics learning performance, unequivocally support the effectiveness of the proposed methods. 
\cblue{Future research will explore  efficient learning and inference techniques for complex time-varying dynamical systems.}
%
%


\appendices
\section{Model Evidence Lower Bound Evaluations}
\label{appendix_elbo_proof} 
With the model joint distribution, Eq.~\eqref{eq:joint_dist_ips}, and the  variational distribution, Eq.~\eqref{eq:generic_vi_dist_enKF}, we can write down the ELBO according to the general definition given in Eq.~\eqref{eq:ELBO_general}, i.e.,
\begin{equation}
\label{eq:ELBO_NMF_enkf_appendix}
    \begin{aligned}
	\!\! & \mathcal{L} = \mathbb{E}_{q(\vu, \vx)} \left[\log \frac{p( \vu, \vx, \vy)} {q(\vu, \vx)}\right]\\
	\!\! & = \mathbb{E}_{q(\vu, \vx)}  \! \left[\log \frac{p(\mathbf{x}_{0})  p(\vu) \prod_{t=1}^{T}  p(\mathbf{x}_{t} \vert {\vu}, \x_{t-1}) p(\mathbf{y}_{t} \vert \mathbf{x}_{t})} {q(\vu) q(\x_0) \prod_{t= 1}^T q(\x_{t} \vert \vu, \x_{t-1}) }\right]  \\
		& = \underbrace{ \mathbb{E}_{q(\vu, \vx)}  \left[ \sum_{t=1}^T \log p(\y_{t} \vert \x_t) \right]}_{\text{term 1}}   \!
		-  \underbrace{\operatorname{KL}[q(\x_0) \| p(\x_0)]}_{\text{term 2}} \\ 
        &  \quad \!-\!\underbrace{\operatorname{KL}[q(\vu) \| p(\vu)]}_{\text{term 3}}  - \underbrace{\mathbb{E}_{q(\vu, \vx)}  \left[ \sum_{t=1}^T \log \frac{q(\x_{t} \vert \vu, \x_{t-1})}{p(\x_t \vert \vu, \x_{t-1})}\right]}_{\text{term 4}}.
    \end{aligned}
\end{equation}
where the KL divergence terms can be analytically computed in closed form \cite{theodoridis2020machine} due to the Gaussian nature of the prior and variational distributions. The evaluation of the first and fourth terms is typically intractable.
We examine the difference between term 1 and term 4 in Eq.~\eqref{eq:ELBO_NMF_enkf_appendix} and can have the following lemma. 
\cblue{
\begin{lemma}
\label{lemma:equivalence}
    Under the approximations that:
    \begin{itemize}
        \item[1)] $p(\x_{t-1} \vert \vu, \y_{1:t-1}) \approx p(\x_{t-1} \vert \vu, \y_{1:t})$,
        \item[2)] $q(\x_{t} \vert \vu, \x_{t-1}) \approx p(\x_{t} \vert \vu, \x_{t-1}, \y_{1:t})$, 
    \end{itemize}
    computing the difference between term 1 and term 4 in the ELBO (Eq.~\eqref{eq:ELBO_NMF_enkf_appendix}) yields the expected log-likelihood, i.e.,
    \begin{equation}
    \operatorname{term~1} - \operatorname{term~4} = \mathbb{E}_{q(\vu)} \left[ \log p(\y_{1:T} \vert \vu) \right]
    \label{eq:sum_term1_term4}
    \end{equation}
\end{lemma}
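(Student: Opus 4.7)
The plan is to show the two stated approximations convert the intractable difference $\operatorname{term~1} - \operatorname{term~4}$ into $\mathbb{E}_{q(\vu)}[\log p(\y_{1:T} \vert \vu)]$ by first simplifying the KL-like summand in term~4 via Bayes' rule (enabled by approximation 2) and then collapsing a remaining inner expectation via approximation 1.

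First, I would invoke approximation 2 to replace $q(\x_t \vert \vu, \x_{t-1})$ with the one-step posterior $p(\x_t \vert \vu, \x_{t-1}, \y_{1:t})$. Applying Bayes' rule within a single time slice and invoking the Markov structure of the SSM (namely $\y_t \perp \y_{1:t-1} \vert \x_t$ in the emission and $\x_t \perp \y_{1:t-1} \vert \vu, \x_{t-1}$ in the transition), this posterior factorises as $p(\x_t \vert \vu, \x_{t-1}, \y_{1:t}) = p(\y_t \vert \x_t) \, p(\x_t \vert \vu, \x_{t-1}) / p(\y_t \vert \vu, \x_{t-1})$. Taking the logarithm of the ratio appearing in term~4 then yields $\log p(\y_t \vert \x_t) - \log p(\y_t \vert \vu, \x_{t-1})$. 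The first summand cancels precisely with term~1, giving $\operatorname{term~1} - \operatorname{term~4} = \sum_{t=1}^T \mathbb{E}_{q(\vu)} \mathbb{E}_{q(\x_{t-1} \vert \vu)}[\log p(\y_t \vert \vu, \x_{t-1})]$.

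Next, I would use approximation 1 to discharge the inner expectation over $\x_{t-1}$. Applying Bayes' rule to the retrospective posterior gives the identity $p(\x_{t-1} \vert \vu, \y_{1:t}) / p(\x_{t-1} \vert \vu, \y_{1:t-1}) = p(\y_t \vert \vu, \x_{t-1}) / p(\y_t \vert \vu, \y_{1:t-1})$. Approximation 1 states that the left-hand ratio is close to $1$, so $\log p(\y_t \vert \vu, \x_{t-1}) \approx \log p(\y_t \vert \vu, \y_{1:t-1})$ for the relevant $\x_{t-1}$, which renders the inner expectation trivial. Summing over $t$ and applying the chain rule $\log p(\y_{1:T} \vert \vu) = \sum_{t=1}^T \log p(\y_t \vert \vu, \y_{1:t-1})$ closes the argument.

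The main obstacle is controlling the uniformity of approximation 1 on the support of $q(\x_{t-1} \vert \vu)$: the key identity needs $p(\y_t \vert \vu, \x_{t-1}) \approx p(\y_t \vert \vu, \y_{1:t-1})$ to hold pointwise in $\x_{t-1}$, not merely in a weak or distributional sense; this is exactly the assumption that observing $\y_t$ adds negligible information about $\x_{t-1}$ beyond $\y_{1:t-1}$. A secondary subtlety is that the marginal $q(\x_{t-1} \vert \vu)$ against which approximation 1 is invoked must itself be close to $p(\x_{t-1} \vert \vu, \y_{1:t-1})$; this is established by an induction anchored at $q(\x_0)$ and propagated through the filtering recursion implied by approximation 2, using approximation 1 once more at each step to identify $\int p(\x_t \vert \vu, \x_{t-1}, \y_{1:t}) q(\x_{t-1} \vert \vu) \, \mathrm{d}\x_{t-1}$ with the true filter $p(\x_t \vert \vu, \y_{1:t})$.
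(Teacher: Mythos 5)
Your proof is correct and reaches the paper's conclusion with the same elementary ingredients (Bayes' rule, the Markov structure of the model, and the two stated approximations), but it organizes the algebra along a genuinely different route. The paper inserts $p(\x_{t-1} \vert \vu, \y_{1:t-1})$ as a multiplicative identity inside the log-ratio, applies both approximations simultaneously to the denominator, and then recognizes numerator and denominator as the joint densities $p(\y_t, \x_t, \x_{t-1} \vert \vu, \y_{1:t-1})$ and $p(\x_t, \x_{t-1} \vert \vu, \y_{1:t})$, whose ratio collapses in one stroke to $p(\y_t \vert \vu, \y_{1:t-1})$. You instead stage the approximations: approximation 2 together with the one-step posterior factorization $p(\x_t \vert \vu, \x_{t-1}, \y_{1:t}) = p(\y_t \vert \x_t)\, p(\x_t \vert \vu, \x_{t-1}) / p(\y_t \vert \vu, \x_{t-1})$ cancels term~1 exactly and isolates the intermediate quantity $\mathbb{E}\left[\log p(\y_t \vert \vu, \x_{t-1})\right]$, which never appears in the paper's derivation; approximation 1 is then invoked through the likelihood-ratio identity $p(\x_{t-1} \vert \vu, \y_{1:t}) / p(\x_{t-1} \vert \vu, \y_{1:t-1}) = p(\y_t \vert \vu, \x_{t-1}) / p(\y_t \vert \vu, \y_{1:t-1})$ to de-randomize that conditional likelihood. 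What your route buys is transparency about what each approximation actually does: approximation 1 becomes the concrete statement that observing $\y_t$ carries negligible extra information about $\x_{t-1}$, i.e.\ the one-step predictive likelihood is essentially flat over the relevant support; and you correctly surface two conditions the paper leaves implicit, namely that the substitution must hold pointwise on the support of the expectation measure (not merely distributionally), and that the marginal $q(\x_{t-1} \vert \vu)$ must track the filter $p(\x_{t-1} \vert \vu, \y_{1:t-1})$, which you sensibly sketch by induction through the filtering recursion anchored at $q(\x_0)$. What the paper's route buys is brevity: both approximations are discharged in a single substitution inside the log, and the collapse to $p(\y_t \vert \vu, \y_{1:t-1})$ follows immediately from Bayes' theorem without an explicit de-randomization step. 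The two derivations are algebraically equivalent, and both conclude via the chain rule $\sum_{t=1}^T \log p(\y_t \vert \vu, \y_{1:t-1}) = \log p(\y_{1:T} \vert \vu)$.
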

}

\begin{proof}
According to the ELBO given in Eq.~\eqref{eq:ELBO_NMF_enkf_appendix}, we have:
\begin{align}
    & \text{term 1} \!-\! \text{term 4}  \nonumber \\
    & \!=\! \mathbb{E}_{q(\vu, \vx)} \! \left[ \sum_{t=1}^T \log   \frac{p(\y_{t} \vert \x_t) p(\x_t \vert \vu, \x_{t-1})}{q(\x_t \vert \vu, \x_{t-1})} \right] \nonumber \\
    & \!=\! \mathbb{E}_{q(\vu, \vx)} \! \left[ \sum_{t=1}^T \log   \frac{p(\y_{t} \vert \x_t) p(\x_t \vert \vu, \x_{t-1}) \cblue{p(\x_{t-1} \vert \vu, \y_{1:t-1})} }{q(\x_t \vert \vu, \x_{t-1}) \cblue{p(\x_{t-1} \vert \vu, \y_{1:t-1})} } \right]  \nonumber \\
    & \! \approx \! \mathbb{E}_{q(\vu, \vx)} \! \left[ \sum_{t=1}^T \log   \frac{p(\y_{t} \vert \x_t) p(\x_t \vert \vu, \x_{t-1}) \cblue{p(\x_{t-1} \vert \vu, \y_{1:t-1})}}{  \color{red} 
    \underbracket{\color{black} p(\x_t \vert \vu, \x_{t-1}, \y_{1:t})}_{\cblue{\text{assumption 2)}}}
    \underbracket{\color{black} p(\x_{t-1} \vert \vu, \y_{1:t})}_{\cblue{\text{assumption 1)}}}} \right]  \nonumber \\
    & \!=\! \mathbb{E}_{q(\vu, \vx)} \left[ \sum_{t=1}^T \log   \frac{p(\y_{t}, \x_t, \x_{t-1} \vert \vu, \y_{1:t-1})}{p(\x_t, \x_{t-1} \vert  \vu, \y_{1:t-1}, \y_t) } \right] \nonumber \\
    & \!=\! \mathbb{E}_{q(\vu)} \left[ \sum_{t=1}^T \log  p(\y_t \vert \vu, \y_{1:t-1}) \right], \label{subeq:expected_ll} 
\end{align}
where the last line of Eq.~\eqref{subeq:expected_ll} is derived straightforwardly by applying Bayes' theorem.
\end{proof} \vspace{-.07in}
According to Lemma \ref{lemma:equivalence}, and the ELBO given in Eq.~\eqref{eq:ELBO_NMF_enkf_appendix}, we immediately get the following approximated ELBO:
\begin{align}
    \! \! \mathcal{L} \!\approx\! \mathbb{E}_{q(\vu)} \! \left[ \sum_{t=1}^T \log  p(\y_t \vert \vu, \y_{1:t-1}) \right] & \! - \operatorname{KL}[q(\x_0) \| p(\x_0)]  \label{eq:envi_elbo_appendix}\\ 
    & \! - \operatorname{KL}[q(\vu) \| p(\vu)],
\end{align}
where the log-likelihood, $\log p(\y_t \vert \vu, \y_{1:t-1})$ in  Eq.~\eqref{eq:envi_elbo_appendix} can be analytically evaluated using EnKF, see Eq.~\eqref{eq:evaluation_log_likelihood}, due to the Gaussian prediction distribution, see Eq.~\eqref{eq:predictive_distri_enkf_ip}, and the linear emission model.

\section{Proof of Proposition \ref{prop:2}}
\label{appendix_prop2_proof}
\begin{proof}
With the filtering distribution $p(\x_t \vert \vu, \y_{1:t})$,  we have the log-likelihood term 
\begin{subequations}
    \begin{align}
    & ~\log  p(\y_t \vert \vu, \y_{1:t-1}) \\
    & = \mathbb{E}_{p(\x_t \vert \vu, \y_{1:t})} \left[  \log p(\y_t \vert \vu, \y_{1:t-1})  \right] \\
    & = \mathbb{E}_{p(\x_t \vert \vu, \y_{1:t})} \left[  \log \frac{ p(\x_t \vert \vu, \y_{1:t-1}) p(\y_t \vert \x_t) }{ p(\x_t \vert \vu, \y_{1:t-1}, \y_t) }\right] \label{subeq:log-like-term}\\
    & = - \mathbb{E}_{p(\x_t \vert \vu, \y_{1:t})} \left[  \log \frac{ p(\x_t \vert \vu, \y_{1:t}) }{ p(\x_t \vert \vu, \y_{1:t-1}) p(\y_t \vert \x_t) }\right] \\
    & = - \operatorname{KL}\left[ p(\x_t \vert \vu, \y_{1:t}) \| p(\x_t \vert \vu, \y_{1:t-1}) \right] \nonumber \\
    & \quad ~ + \mathbb{E}_{p(\x_t \vert \vu, \y_{1:t})} \left[ \log  p(\y_t \vert \x_{t})\right], 
    \end{align}
\end{subequations}
which completes the proof. Here Eq.~\eqref{subeq:log-like-term} is obtained straightforwardly by applying Bayes' theorem. This result sheds light on the interplay between data reconstruction and the alignment of filtering and prediction distributions in the EnVI and OEnVI algorithms.
\end{proof}

\section{Mean-Field and Non-Mean-Field Approximations} \label{appendix:MF_NMF_definition}
\begin{definition}
\label{def:MF_NMF}
If the variational distribution, $q(\vf, \vu, \vx)$, is factorized such that the transition function values and the latent states are independent, i.e.,
\begin{equation}
q(\vf, \vu) q(\x_0) \prod_{t=1}^T q(\x_{t} \vert \f_t) = q(\vf, \vu) q(\vx),
\label{eq:mf_assumption}
\end{equation}
the factorization is known as a mean-field approximation in the GPSSM literature. Conversely, if the variational distribution, $q(\vx, \vf, \vu)$,  explicitly builds the dependence between the latent states and the transition function values, as shown in Eq.~(\ref{eq:generic_vi_dist}), it is a non-mean-field approximation.
\end{definition}

\bibliographystyle{IEEEtran}  
\bibliography{ref_vEnKF}



\vfill

\newpage
\renewcommand{\appendixname}{Supplement}
\appendices
\onecolumn 

\section{Variational Bayes} \label{appx:miscellanies}
\subsection{Evidence Lower Bound (ELBO)} \label{appx_subsec:ELBO}
In Bayesian statistics, the model marginal likelihood $p(\vy \vert \btheta)$ is a fundamental quantity for model selection and comparison \cite{courts2021gaussian}. By maximizing the logarithm of $p(\vy \vert \btheta)$ w.r.t. the model parameters $\btheta$, the goodness of data fitting and the model complexity are automatically balanced, in accordance with Occam's razor principle \cite{cheng2022rethinking}.
However,  $p(\vy \vert \btheta)$ is obtained by integrating out all the latent variables $\{\vx, \vf\}$ in the joint distribution, see Eq.~(\ref{eq:joint_dist}), which is analytically intractable.  Thus, the posterior distribution of the latent variables, $p(\vx, \vf \vert \vy) = \frac{p(\vy, \vx, \vf)}{p(\vy \vert \btheta)},$ cannot be expressed in a closed-form expression, either.  This intractability issue has been addressed in variational Bayesian methods by adopting a variational distribution \cite{theodoridis2020machine}, $q(\vx, \vf)$, to approximate the intractable $p(\vx, \vf \vert \vy)$. With the newly introduced variational distribution $q(\vx, \vf)$, we have
\begin{equation}
    \begin{aligned}
         \log p(\vy \vert \btheta) = \log \frac{p(\vy, \vx, \vf)}{p(\vx, \vf \vert \vy)}  & =\int \int q(\vx, \vf) \log \frac{p(\vy, \vx, \vf) q(\vx, \vf)}{p(\vx, \vf \vert \vy) q(\vx, \vf)} \mathrm{d} \vx \mathrm{d} \vf \\
         & = \underbrace{\mathbb{E}_{q(\vx, \vf)} \! \left[ \log \frac{p(\vy, \vx, \vf)}{q(\vx, \vf)}\right]}_{\text{ELBO: } \mathcal{L}} + \underbrace{\mathbb{E}_{q(\vx, \vf)} \! \left[ \log \frac{q(\vx, \vf)}{p(\vx, \vf \vert \vy)}\right]}_{\text{KL divergence}}
    \end{aligned}
\end{equation}

\section{More Experiment Results} \label{supp_sec:more_experimental_results}
\subsection{OEnVI: Online Learning Results on Linear Gaussian SSMs} \label{supp_subsec:OEnVI_LGSSMs}

\begin{itemize}
    \item The definition of RMSE:
    \begin{equation}
        \text{RMSE} = \sqrt{ \frac{1}{T} \sum_{t=1}^T \sum_{d=1}^{d_x}(\hat{\x}_t^{(d)} - \x_t^{(d)})^2 }
    \end{equation}
    where $\hat{\x}_t$ represents the estimation of $\x_t$.
    \item The baseline in Table \ref{supp_tab:LGPSSM} and Table~\ref{supp_tab:LGPSSM_2} is the RMSE between the noisy observations $\y_{1:T}$ and the latent states $\x_{1:T}$.
    
    \item \cblue{
    Table \ref{supp_tab:LGPSSM}, and  Figs.~\ref{fig:supp_OEnVI_1} and \ref{fig:supp_OEnVI_2} present the results obtained using our proposed methods, EnVI and OEnVI, applied to learning the linear Gaussian SSM with the identity coefficient matrix $\bm{C} = \bm{I}_{4\times 4}$.
    }
    
    \item \cblue{
    The results in Table~\ref{supp_tab:LGPSSM_2} and Fig.~\ref{fig_maintext:LGSSMs_EnVIs_random_C} show that our proposed methods, EnVI and OEnVI, can still accurately filter the true latent states. However, it is noteworthy that the matrix $ \bm{C} \in \mathbb{R}^{d_y \times d_x} $ should be ``full-column rank'' if the main task is to perform filtering; otherwise, there is a high probability that some latent dimensions cannot be inferred accurately. Intuitively speaking, if we aim to recover latent state $\x_t$, it is only possible when the corresponding observation $\y_t$ contains ``sufficient information'' about $\x_t$.  
    For the task of predicting the sequence $ \y_t $ for $ t = T+1, T+2, \ldots $, it is typically not necessary to focus as much on the physical meaning and accuracy of the latent states.
}
\end{itemize}

\begin{table*}[!ht]
\centering
\caption{State inference performance (RMSE) of EnVI and OEnVI (with emission coefficient matrix $\bm{C} = \bm{I}_{4 \times 4}$)}
\setlength{\tabcolsep}{2.32mm}{
    \centering
    \begin{tabular}{|l|c|c|c|c|c|c|c|c|c|c|}
            \hline  Time Slot & \textbf{0 -- 120} & \textbf{120 -- 240} & \textbf{240 -- 360} & \textbf{360--480} & \textbf{480--600} & \textbf{600--720} & \textbf{720--840} & \textbf{840--960} & \textbf{900-1000} & \textbf{0--1000} \\
            \hline \textbf{KF} & 0.5252 & 0.5149 & 0.5228 & 0.5658 & 0.5246 & 0.4907 & 0.4983 & 0.5202 & 0.4846 & 0.5199 \\
            \hline \textbf{EnVI} & 0.6841 & / & / & / & / & / & / & / & / & 0.7182 \\
            \hline \textbf{OEnVI} & 0.7784 & 0.7130 & 0.6512 & 0.6487 & 0.6786 & 0.6515 & 0.5958 & 0.6713 & 0.6418 & 0.6739 \\
            \hline \textbf{Baseline} & 0.9872 & 0.9811 & 0.9510 & 1.0077 & 1.0215 & 0.9967 & 1.0077 & 1.0314 & 1.0222 & 0.9974 \\
            \hline
       \end{tabular}}
       \label{supp_tab:LGPSSM}
\end{table*}

\begin{table*}[!ht]
\centering
\caption{State inference performance (RMSE) of EnVI and OEnVI (with emission coefficient matrix $\bm{C} \in \mathbb{R}^{4 \times 4}$ generated randomly )}
\setlength{\tabcolsep}{2.32mm}{
    \centering
    \begin{tabular}{|l|c|c|c|c|c|c|c|c|c|c|}
            \hline  Time Slot & \textbf{0 -- 120} & \textbf{120 -- 240} & \textbf{240 -- 360} & \textbf{360--480} & \textbf{480--600} & \textbf{600--720} & \textbf{720--840} & \textbf{840--960} & \textbf{900-1000} & \textbf{0--1000} \\
            \hline \textbf{KF} & 0.7799 & 1.0302 & 1.1086 & 1.3682 & 1.4355 & 1.1277 & 0.7517 & 1.1239 & 1.0911 & 1.1062 \\
            \hline \textbf{EnVI} & 1.8413  & / & / & / & / & / & / & / & / &  2.3356\\
            \hline \textbf{OEnVI} & 1.4508 & 1.5902 & 1.3763 & 3.0673 & 1.2875 & 2.1743 & 1.2470 & 3.6032 & 2.8966 & 2.1546 \\
            \hline \textbf{Baseline} & 24.10  & 100.49 & 211.82 & 272.88 & 303.85 & 422.38 & 547.60 & 741.17 & 847.08 & 428.24 \\
            \hline
       \end{tabular}}
       \label{supp_tab:LGPSSM_2}
\end{table*}

\begin{figure*}[ht!]
    \centering
    \subfloat{\includegraphics[width=.96\linewidth]{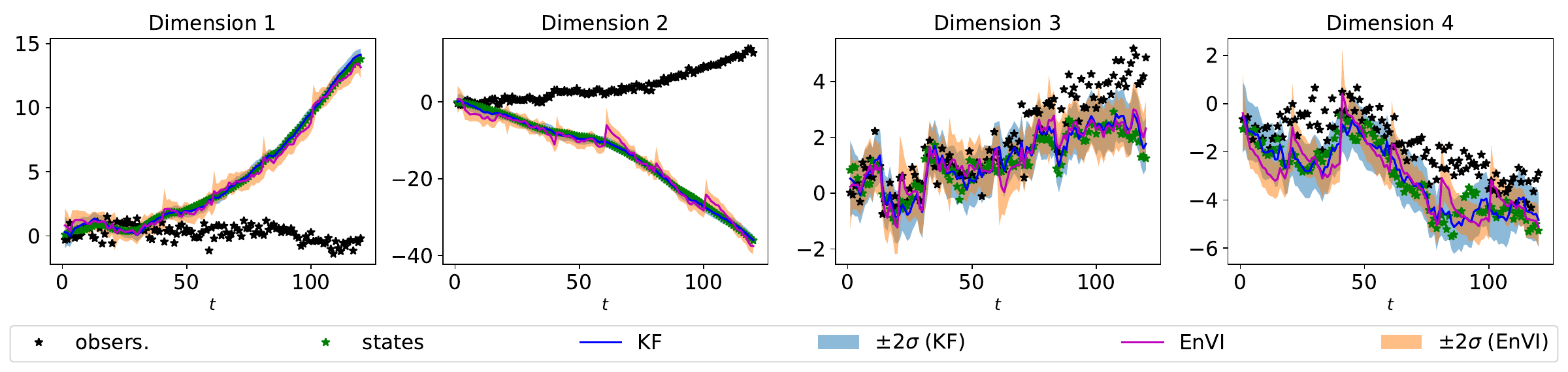}} \\ \vspace{-.07in}    
    \subfloat{\includegraphics[width=.96\linewidth]{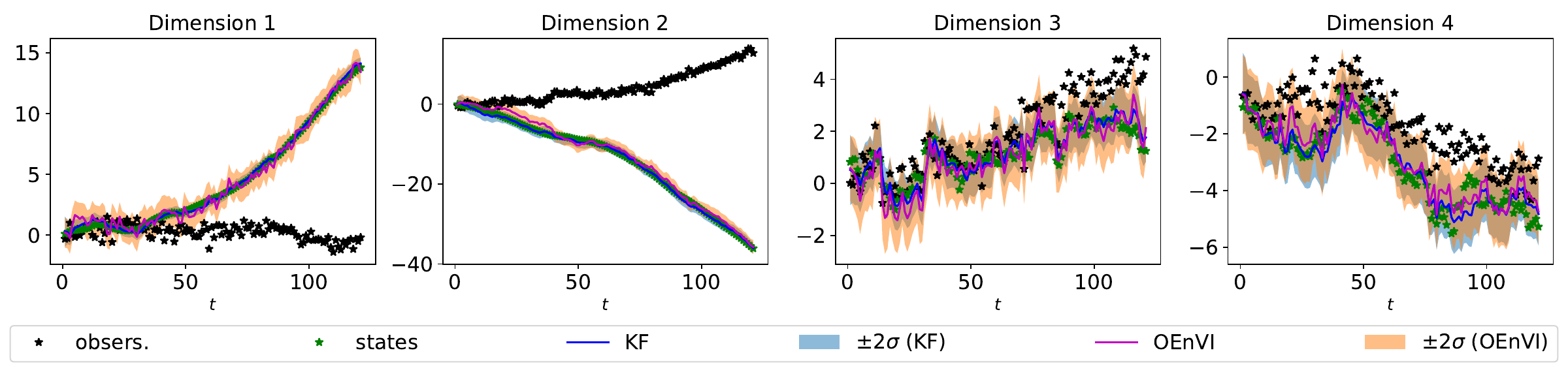}}  \vspace{-.07in}    
    \caption{\cblue{EnVI (\textbf{top}) \& OEnVI (\textbf{bottom}) on state inference in linear Gaussian SSM, with the emission coefficient matrix $\bm{C} \in \mathbb{R}^{4 \times 4}$ generated randomly. The RMSE of the latent state estimates for KF, EnVI, and OEnVI are 0.7799, 1.8413, and 1.4508, respectively; the RMSE between the observations and the latent states is 24.10.} \vspace{-.12in}}
    \label{fig_maintext:LGSSMs_EnVIs_random_C}
\end{figure*}

\begin{figure*}[!ht]
    \centering
    \subfloat[Online learning result from $t=0$ to $t=120$]{\includegraphics[width=.9\linewidth]{figs/carTrack/carTrack_0_to_120_EnVI.pdf}} 
    
    \subfloat[Online learning result from $t=120$ to $t=240$]{\includegraphics[width=.9\linewidth]{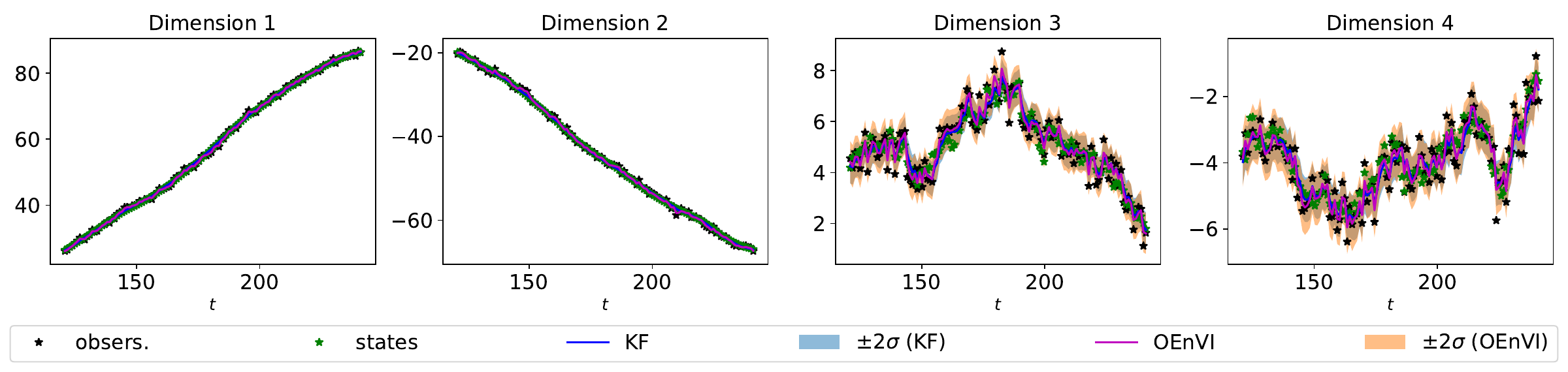}} 
    
    \subfloat[Online learning result from $t=240$ to $t=360$]{\includegraphics[width=.9\linewidth]{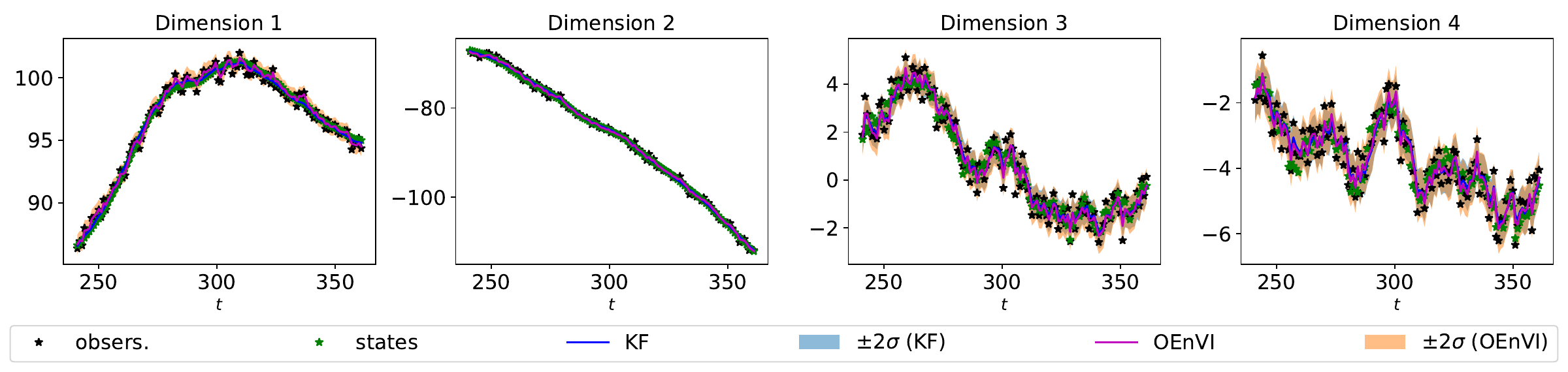}} 
    
    \subfloat[Online learning result from $t=360$ to $t=480$]{\includegraphics[width=.9\linewidth]{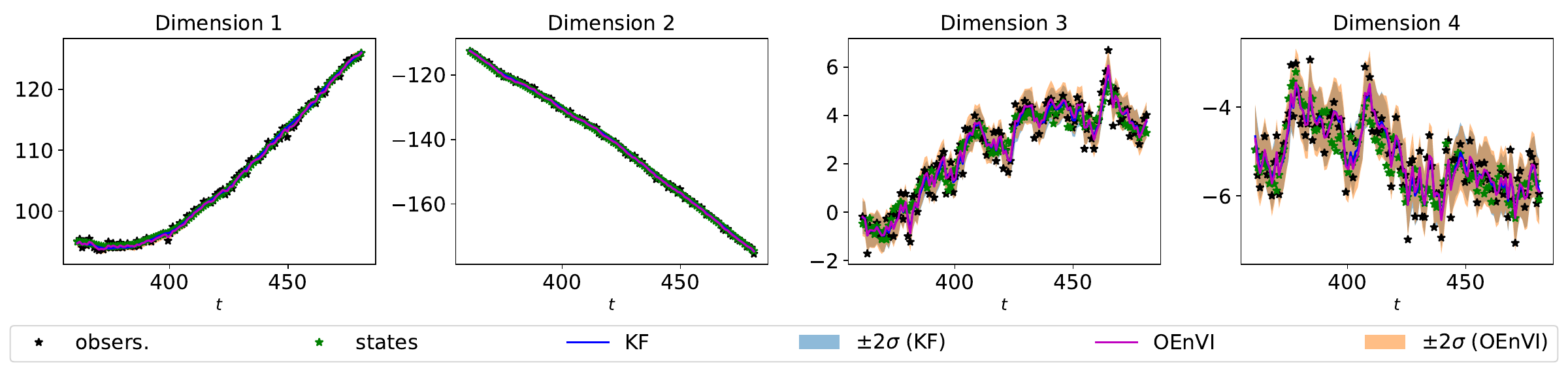}}
    
    \subfloat[Online learning result from $t=480$ to $t=600$]{\includegraphics[width=.9\linewidth]{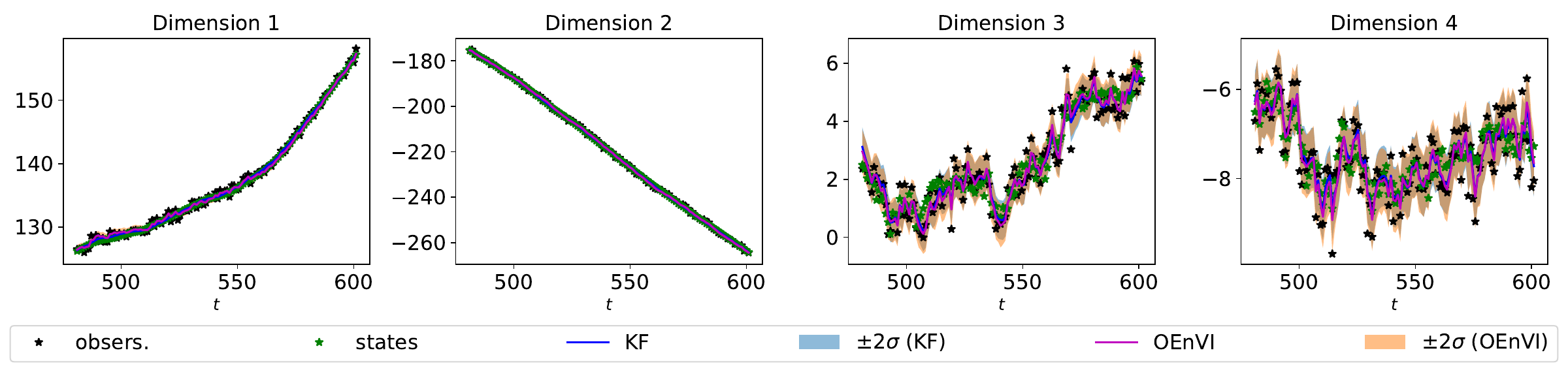}} 

    \caption{OEnVI on learning and inference in linear Gaussian SSMs with $\bm{C} = \bm{I}_{4\times 4}$}
    \label{fig:supp_OEnVI_1}
\end{figure*}
\begin{figure*}[t!]
    \centering
    \subfloat[Online learning result from $t=600$ to $t=720$]{\includegraphics[width=.9\linewidth]{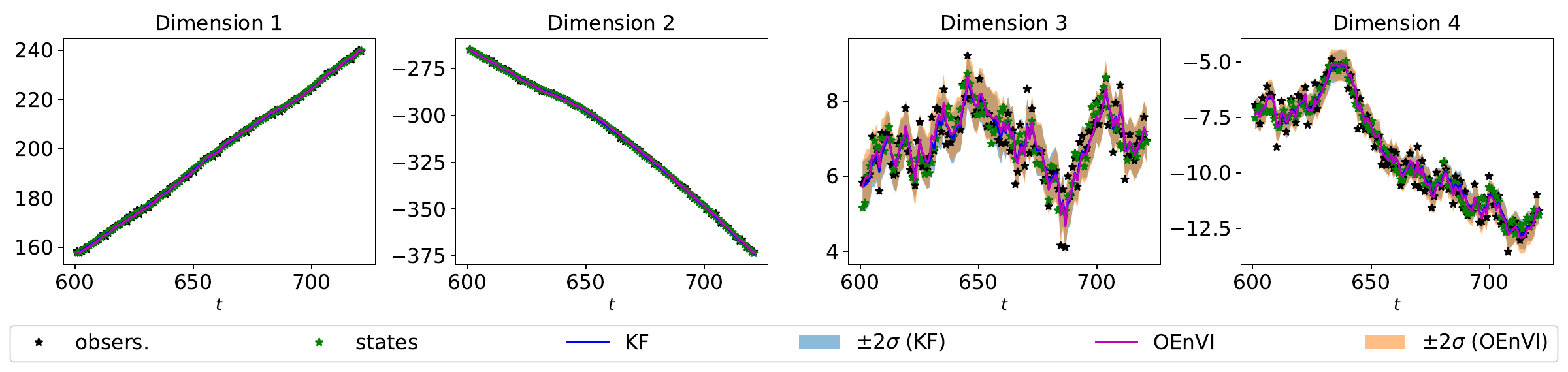}} 
    
    \subfloat[Online learning result from $t=720$ to $t=840$]{\includegraphics[width=.9\linewidth]{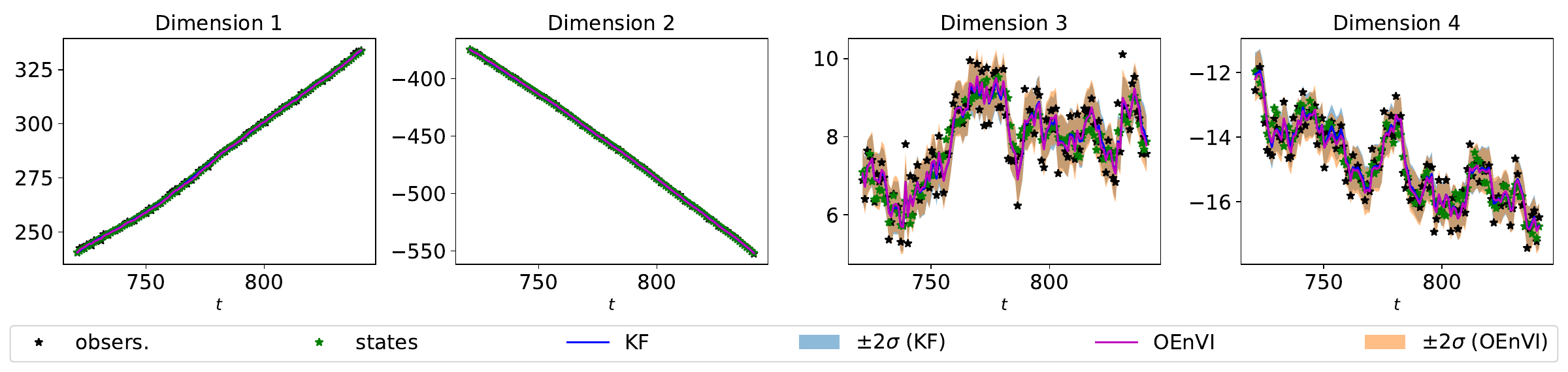}} 
    
    \subfloat[Online learning result from $t=840$ to $t=960$]{\includegraphics[width=.9\linewidth]{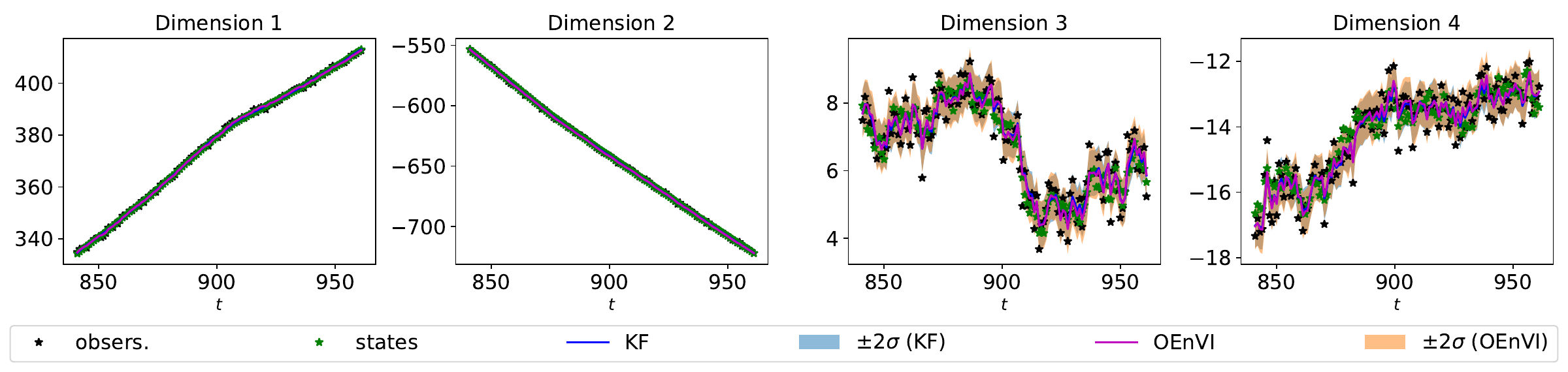}} 
    
    \subfloat[Online learning result from $t=900$ to $t=1000$]{\includegraphics[width=.9\linewidth]{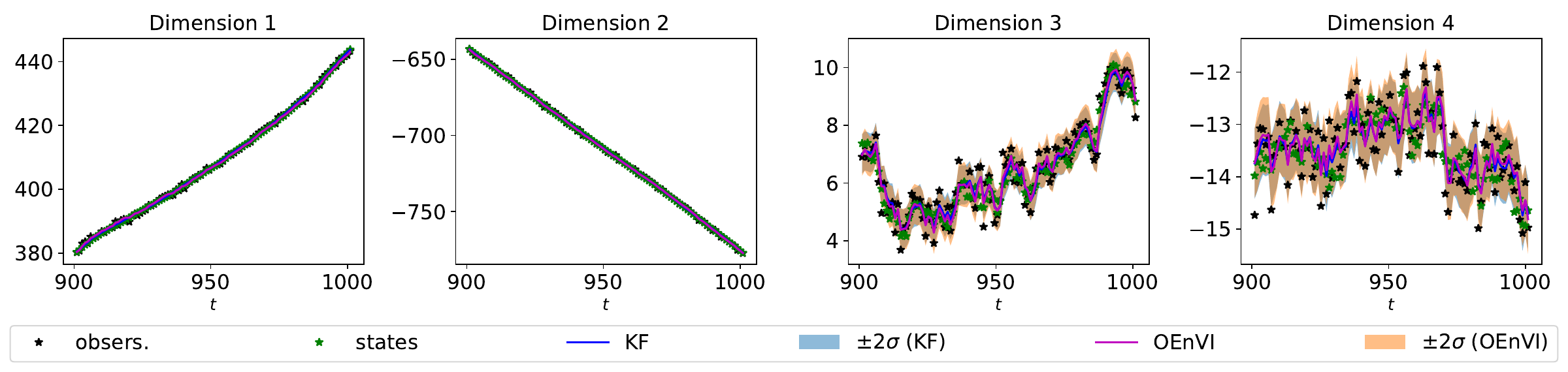}} 
    
    \subfloat[Online learning result from $t=0$ to $t=1000$]{\includegraphics[width=.9\linewidth]{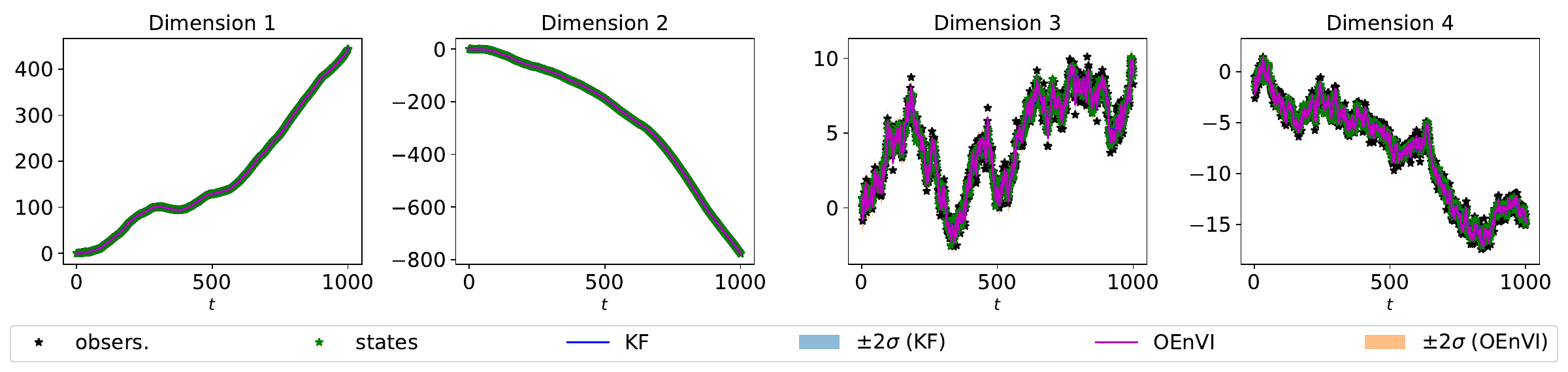}} 
    \caption{OEnVI on learning and inference in linear Gaussian SSMs with $\bm{C} = \bm{I}_{4\times 4}$}
    \label{fig:supp_OEnVI_2}
\end{figure*}

\clearpage
\subsection{Learning System Dynamics on Kink Function} \label{supp_subsec:EnVI_kink}
\begin{itemize}
    \item The kink function is depicted in Fig.~\ref{fig:kink_function}
    \item The MSE and Log-likelihood in Table \ref{tab:synthetic_dataset_MSE_LL} are evaluated as follows:
    \begin{align}
        & \text{MSE} = \frac{1}{T} \sum_{t=1}^T \sum_{d=1}^{d_x}(\hat{\f}_t^{(d)} - \f_t^{(d)})^2 \\
        &  \text{Log-likelihood} = \frac{1}{T} \sum_{t=1}^T \log \cN(\f_t \mid \bm{\xi}_t, \bm{\Xi}_t)
    \end{align}
\end{itemize}

\begin{figure}[!ht]
    \centering
    \includegraphics[width =0.7\textwidth]{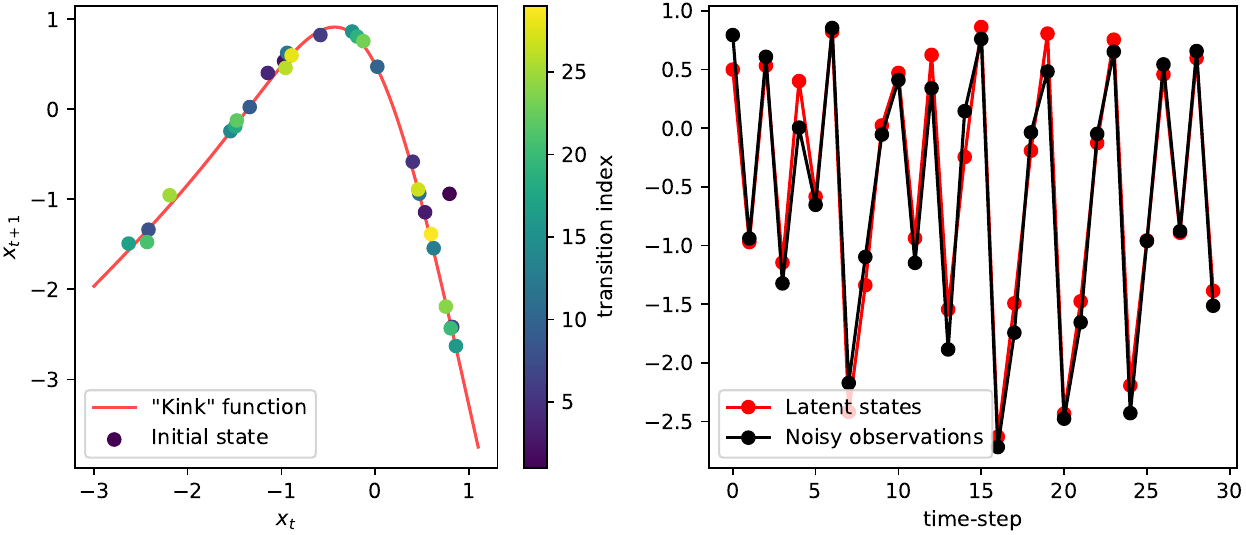}
    \caption{The "kink" dynamical function is used to generate $50$ latent states and corresponding observations, with $\sigma_{\mathrm{Q}}^2 = 0.01$ and $\sigma_{\mathrm{R}}^2 = 0.1$.}
    \label{fig:kink_function}
\end{figure}

\clearpage
\subsection{Time Series Data Forecasting using EnVI}
In addition to the overall prediction performance outlined in Table \ref{tab:systemidentifcation}, we provide a specific prediction of EnVI below.
\begin{figure*}[ht!]
    \centering
    \subfloat[Actuator]{\includegraphics[width=.5\linewidth]{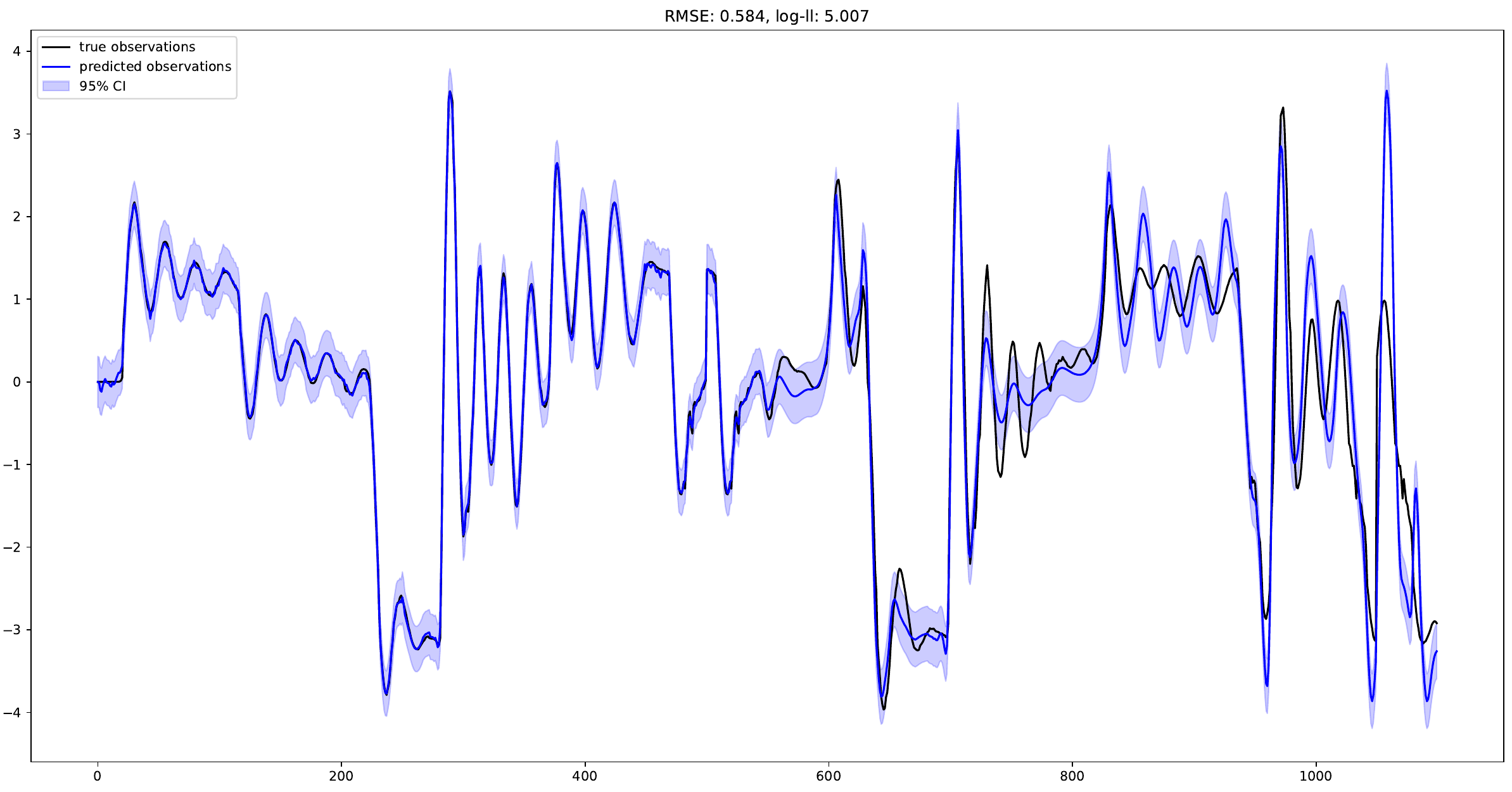}} 
    \subfloat[Ball Beam]{\includegraphics[width=.5\linewidth]{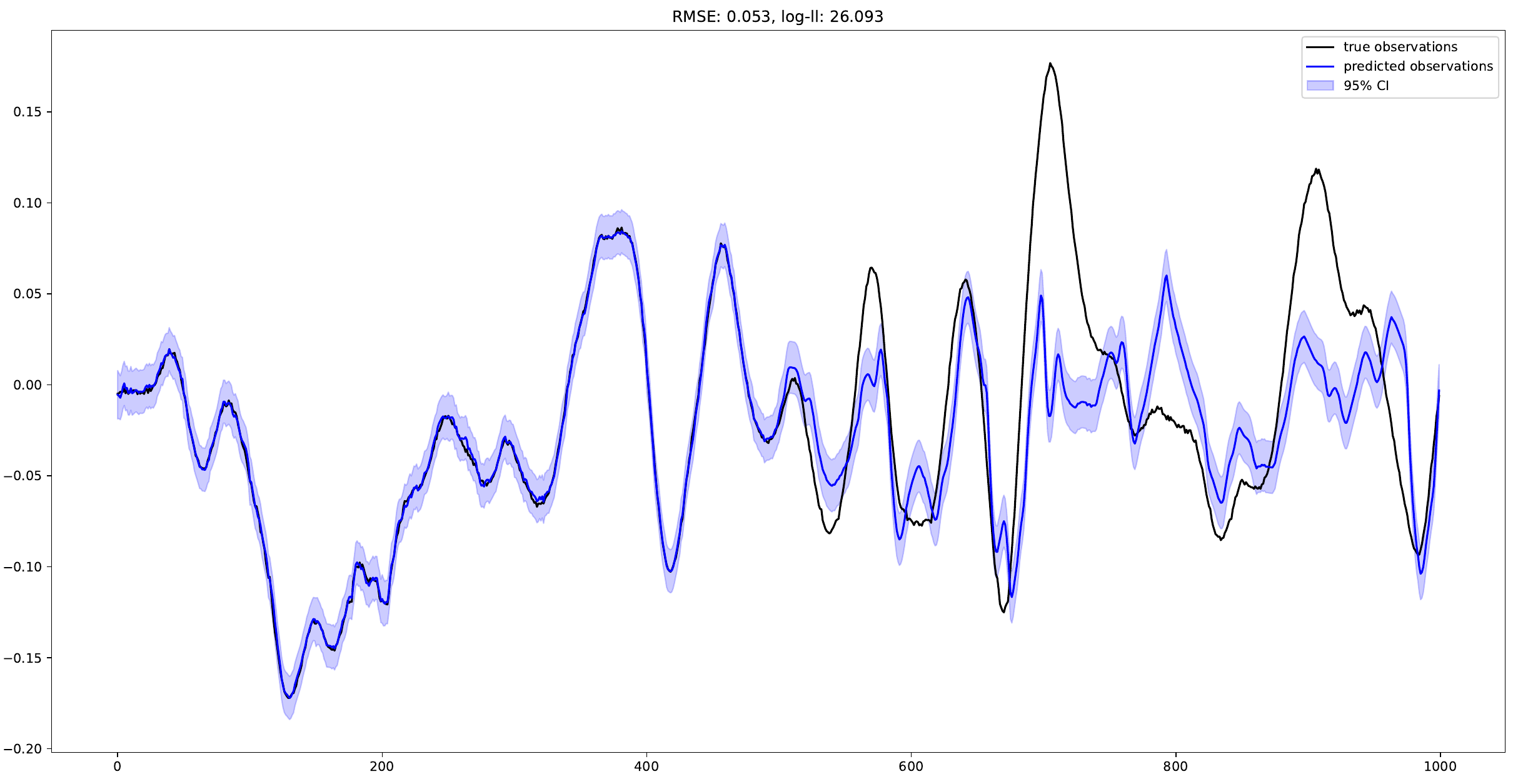}} 

    \subfloat[Drive]{\includegraphics[width=.5\linewidth]{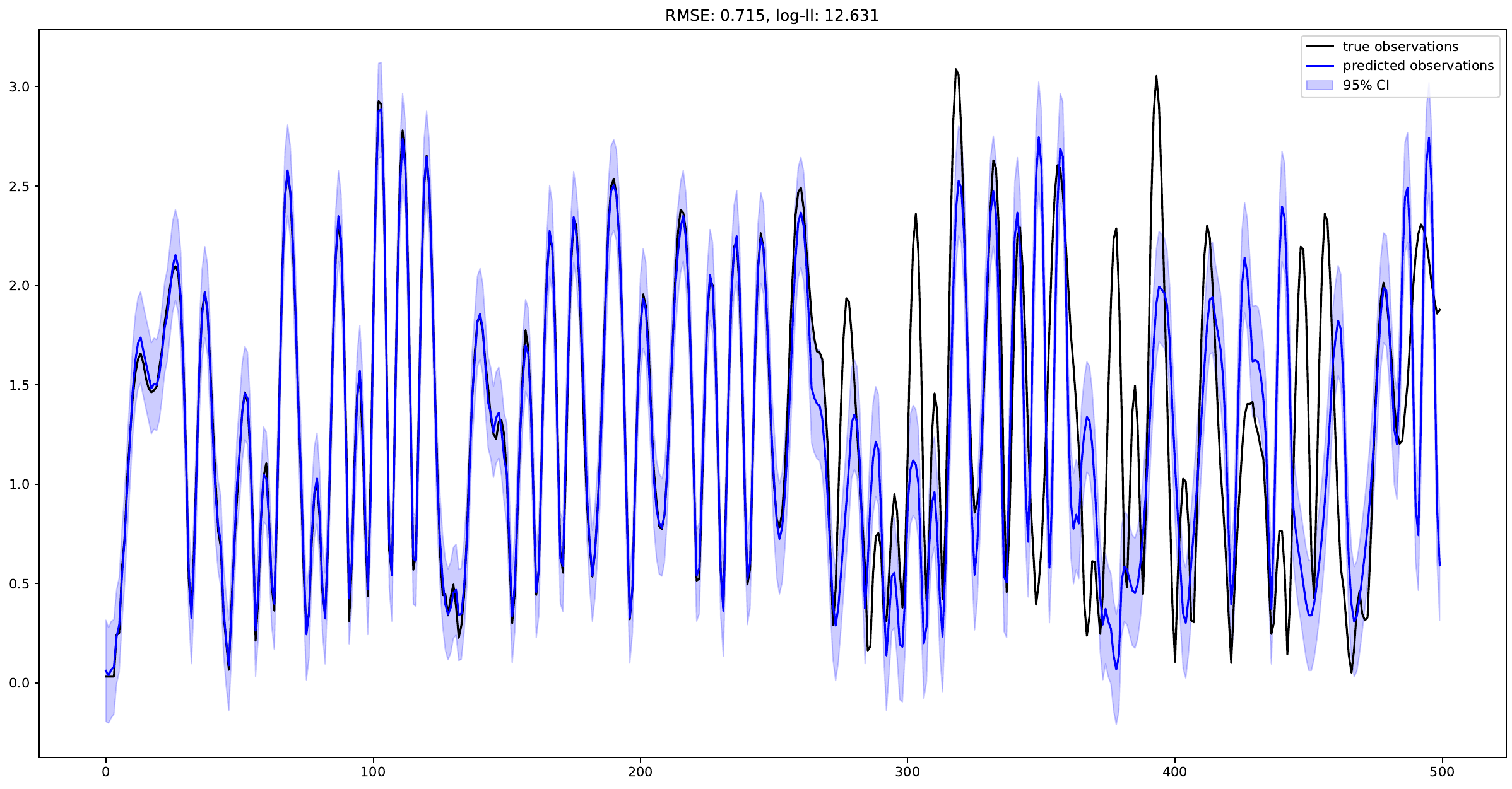}}
    \subfloat[Dryer]{\includegraphics[width=.5\linewidth]{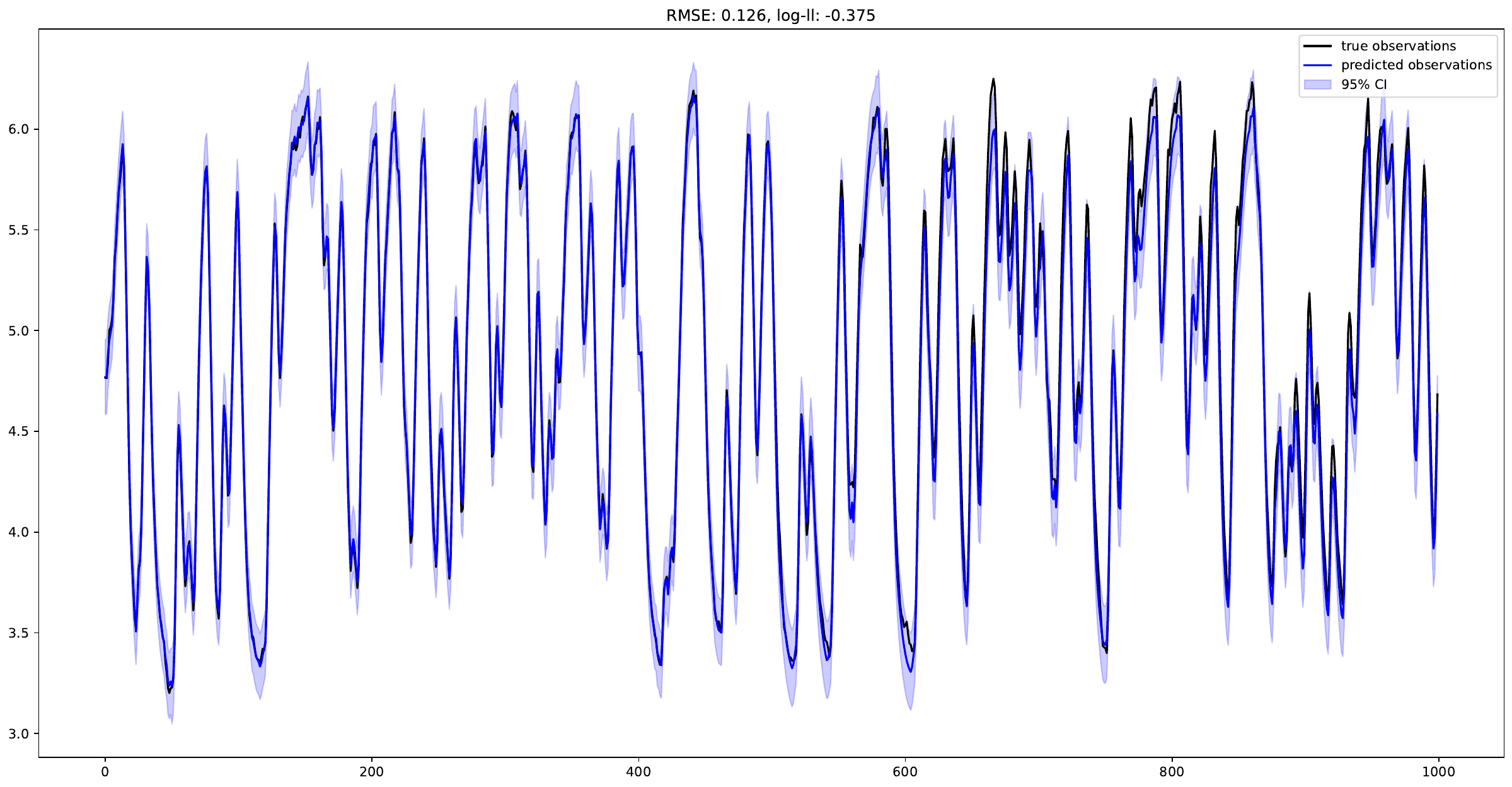}} 

    \subfloat[Gas Furnace]{\includegraphics[width=.5\linewidth]{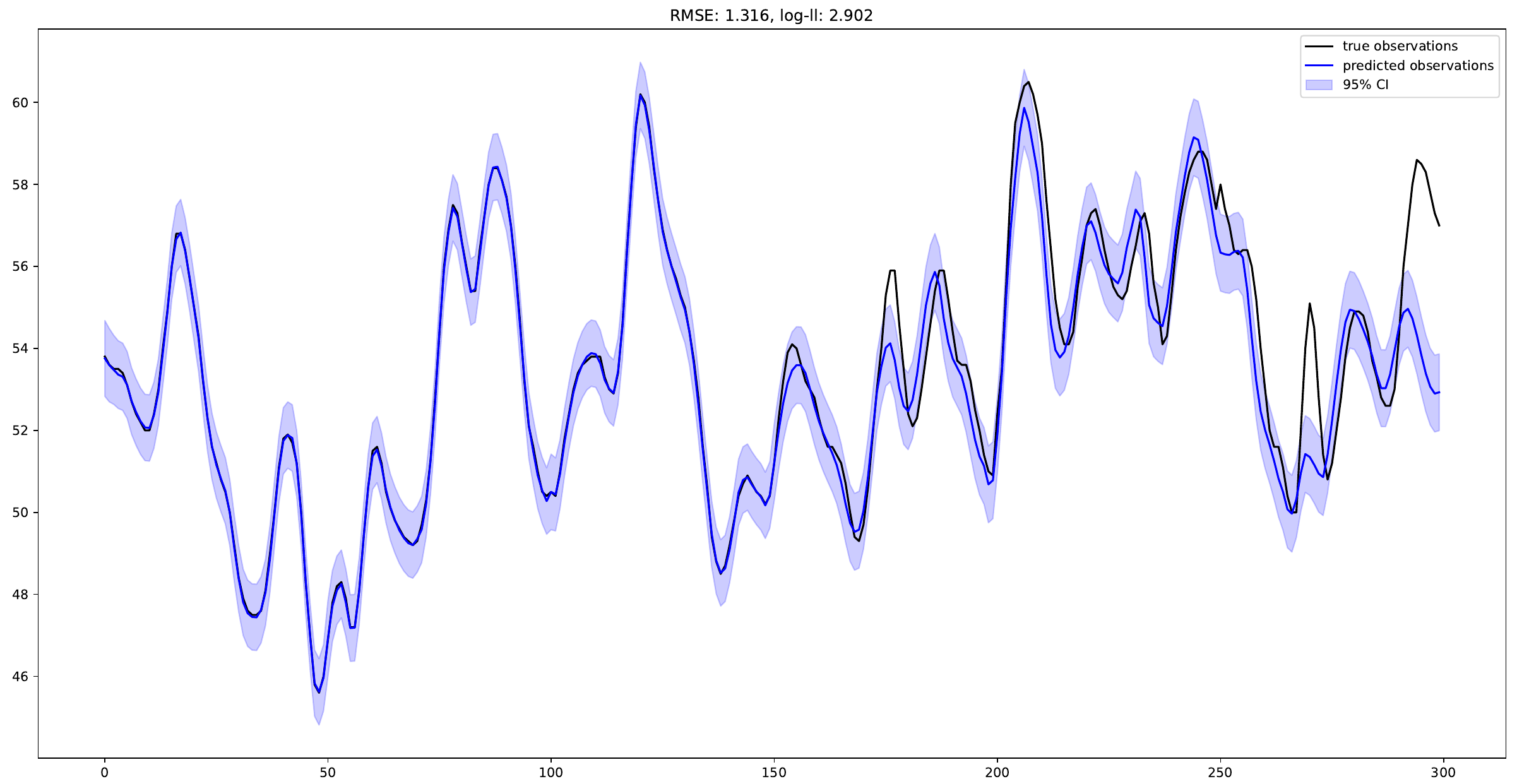}} 
    \caption{Learning and prediction results of EnVI. The initial half of the sequence is generated by passing the filtered $\x_t$ through the emission model, while the subsequent half of the sequence represents the prediction outcome. This prediction is derived from the filtered $\x_t$ obtained from the final step of the training sequence, serving as the initial state.}
\end{figure*}

\clearpage
\subsection{Online Learning and Inference with OEnVI} \label{supp_subsec:OEnVI_more}
We report the learning results of OEnVI on the kink function dataset. As illustrated in Fig. \ref{fig:kink-OEnVI}, after sequentially training with $600$ data points, OEnVI demonstrates comparable performance to EnVI, which undergoes offline training consisting of over $400$ iterations with a full batch of data with length $T\!=\!600$. 

\begin{figure}[ht!]
    \centering
    \includegraphics[width=0.5\linewidth]{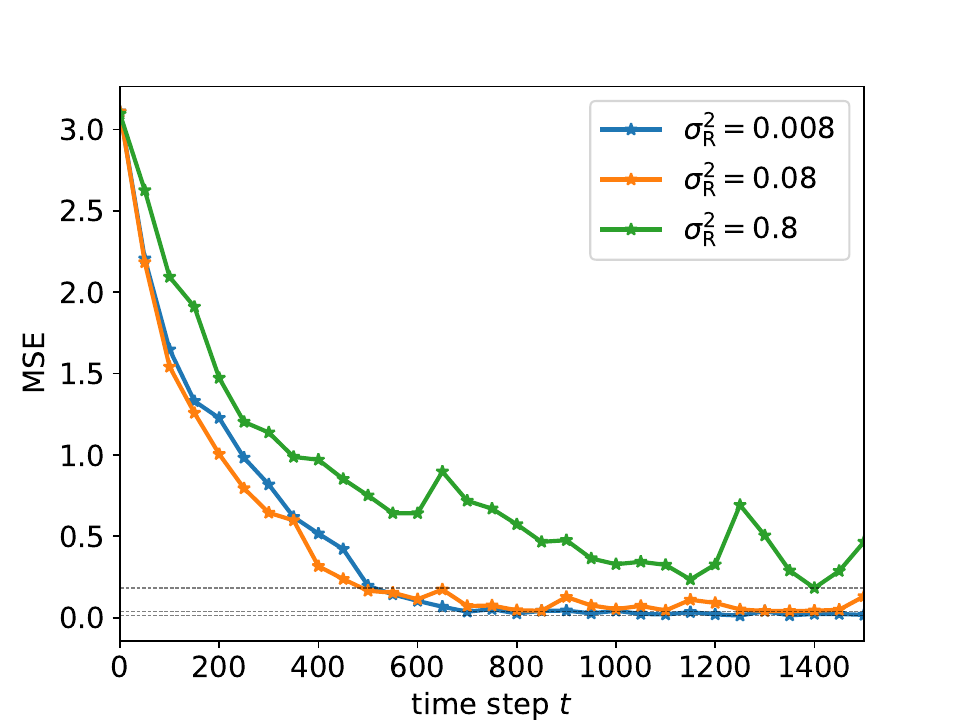}
    \caption{Kink transition function learning using OEnVI}
    \label{fig:kink-OEnVI}
\end{figure}

\end{document}